\documentclass{article}

\usepackage{PRIMEarxiv}

\usepackage{natbib}

\usepackage{graphicx} 
\usepackage{subfig}
\usepackage{wrapfig}
\usepackage{xfrac}
\usepackage{subcaption}
\usepackage{tikz}
\usepackage[utf8]{inputenc} % allow utf-8 input
\usepackage[T1]{fontenc}    % use 8-bit T1 fonts
\usepackage{hyperref}       % hyperlinks
\usepackage{url}            % simple URL typesetting
\usepackage{booktabs}       % professional-quality tables
\usepackage{amsfonts}       % blackboard math symbols
\usepackage{nicefrac}       % compact symbols for 1/2, etc.
\usepackage{microtype}      % microtypography
\usepackage{xcolor}         % colors
\usepackage{bbm}
\usepackage{amsmath}
\usepackage{amsthm}
\usepackage{caption}
\usepackage{subcaption}
\usepackage{changes} % to track changes
\usepackage{amssymb}
\usepackage{mathtools}

\usepackage{hyperref,cleveref}       % hyperlinks
\newtheorem{theorem}{Theorem}[section]
\newtheorem{lemma}{Lemma}[section]
\newtheorem{corollary}{Corollary}[theorem]
\theoremstyle{definition}
\newtheorem{definition}{Definition}[section]
\theoremstyle{remark}

\newcommand{\KL}{\textsf{KL}}
\newcommand{\MDL}{\textnormal{MDL}}

\newcommand{\abs}[1]{\left\lvert{#1}\right\rvert}
\newcommand{\removed}[1]{}

\newtheorem*{restatethm}{\theoremname} 

\newcommand{\restate}[2]{%
    \def\theoremname{\textbf{\autoref{#2}}} % Makes "Lemma X.Y" or "Theorem X.Y" bold
}

\newcommand{\EmpiricalBayes}{}
\newcommand{\ProfilePosterior}{^{\textsf{mod}}}

\newcommand{\Qgreen}{\hat{Q}\EmpiricalBayes}
\newcommand{\Jgreen}{J_\lambda\EmpiricalBayes}
\newcommand{\QEB}{\Qgreen}
\newcommand{\JEB}{\Jgreen}

\newcommand{\Qblue}{\hat{Q}\ProfilePosterior}
\newcommand{\Jblue}{J_\lambda\ProfilePosterior}
\newcommand{\Jbluem}{J_{\lambda_m}\ProfilePosterior}
\newcommand{\QPL}{\Qblue}
\newcommand{\JPL}{\Jblue}

\newcommand{\QBayes}{\hat{Q}^{\textsf{Bayes}}}
\newcommand{\TJ}{\Tilde{J}}
\newcommand{\qstar}{q^*_m}
\newcommand{\qhat}{\hat{q}_m}
\DeclareMathOperator*{\argmax}{arg max}
\newcommand{\redcomment}[1]{} %{\textcolor{red}{#1}}
\newcommand{\source}{\mathcal{D}}
\newcommand{\achieve}{\mathcal{M}}
\newcommand{\entropicT}{T}
\newcommand{\natinote}[1]{\todo[color=blue!20]{{#1}}}

\title{
Overfitting and Generalizing with (PAC) Bayesian Prediction in Noisy Binary Classification
}

\author{
  \\
  \\
  \texttt{} \\
   \And
   \\
  \\
  \texttt{} \\
}

\author{
  Xiaohan Zhu\\
  The University of Chicago \\
  \texttt{xiaohanz@uchicago.edu} \\
  \And
  Mesrob I. Ohannessian\\
  The University of Illinois at Chicago \\
  \texttt{mesrob@uic.edu}
   \And
  Nathan Srebro \\
  Toyota Technological Institute at Chicago\\
  \texttt{nati@ttic.edu} \\
}

\begin{document}
\maketitle

\begin{abstract}
We consider a PAC-Bayes type learning rule for binary classification, balancing the training error of a randomized ``posterior'' predictor with its KL divergence to a pre-specified ``prior''. This can be seen as an extension of a modified two-part-code Minimum Description Length (MDL) learning rule, to continuous priors and randomized predictions. With a balancing parameter of $\lambda=1$ this learning rule recovers an (empirical) Bayes posterior and a modified variant recovers the profile posterior, linking with standard Bayesian prediction (up to the treatment of the single-parameter noise level). However, from a risk-minimization prediction perspective, this Bayesian predictor overfits and can lead to non-vanishing excess loss in the agnostic case.
  Instead a choice of $\lambda \gg 1$, which can be seen as using a sample-size-dependent-prior, ensures uniformly vanishing excess loss even in the agnostic case. We precisely characterize the effect of under-regularizing (and over-regularizing) as a function of the balance parameter $\lambda$, understanding the regimes in which this under-regularization is tempered or catastrophic. This work extends previous work by \citet{ZS} that considered only discrete priors to PAC Bayes type learning rules and, through their rigorous Bayesian interpretation, to Bayesian prediction more generally.
\end{abstract}

% keywords can be removed
% \keywords{}
\section{Introduction}
%%%%%%%%%%%%%%%%%%%%%%%%%%%%%%%%%%%%%%%%%%%%%%%%%%%%%%%%%%%%%%%%%%%%%%%%%%%%%%%%
%%%%%%%%%%%%%%%%%%%%%%%%%%%%%%%%%%%%%%%%%%%%%%%%%%%%%%%%%%%%%%%%%%%%%%%%%%%%%%%%

Consider the problem of supervised binary classification, where we seek to learn predictors $h: \mathcal X \to \{0,1\}$ using a labeled training set $S$ of $m$ samples drawn i.i.d. from a source distribution $\source$ over $\mathcal X \times \{0,1\}$. We primarily focus on the \emph{agnostic setting}, i.e., when $\source$ is arbitrary (See the detailed formalism in \autoref{formal setup}).

If hypotheses are chosen from a countable set, a classical paradigm is the modified two-part-code Minimum Description Length (MDL) learning rule, given by:
\begin{align}
    \MDL_{\lambda}(S) &= \underset{h:\mathcal{X}\rightarrow\{0,1\}}{\text{arg min }} \log \binom{m}{mL_S(h)} + \lambda \log \frac{1}{\pi(h)} \notag\\
    &\approx \underset{h:\mathcal{X}\rightarrow\{0,1\}}{\text{arg min }}  mH(L_S(h))  + \lambda \log \frac{1}{\pi(h)},\label{eq:mdl-intro}
\end{align}
where $L_S(h)$ is the (zero-one) training error, $H(\cdot)$ is the binary entropy function\footnote{We use $H(\alpha)=-\alpha \log \alpha-(1-\alpha)\log (1-\alpha)$, for $\alpha \in[0,1]$, to denote the entropy of a $\text{Ber}(\alpha)$ random variable. (All logarithms are base-$2$ and entropy is measured in bits.)}, and $\pi$ is a (discrete) prior over hypotheses.
% Similarly, we write $L_\source(h)$ for the population error of $h$ on a test sample from $\source$ and $L_\source(Q)= \mathbb{E}_{h\sim Q}\left[L_\source(h)\right]$ for the expected population error when $h$ is sampled from $Q$.
When $\lambda=1$, we obtain the original MDL rule, which we write as $\MDL_1$ and where the objective naturally describes a joint description length for the sequence of labels (or equivalently of errors) and the model. It can also be viewed as a Maximum A Posteriori (MAP) predictor, under an independent noise source model, $Y=h(X)\oplus N$ with $N\sim\text{Ber}(\eta)$, selecting $\hat h = \argmax_h \max_\eta \Pr\left(h\middle| S; \eta\right)$, with $\eta$ a nuisance parameter for the noise.\footnote{$\text{Ber}(\alpha)$ denotes a Bernoulli random variable with expectation $\alpha$ and $a \oplus b$  denotes the XOR of two bits $a,b\in\{0,1\}$.}
% selecting the predictor maximizing the posterior $\Pr\left(h\middle| S\right)$, under a data-generating model 
% Note that the original MDL rule is with $\lambda=1$, which we write as $\MDL_1$. 

How well do these rules perform? $\MDL_1$ was identified early on to be suboptimal and inconsistent in the agnostic setting \citep{GL}. The more recent work of \citet{ZS} interprets this as under-regularization or overfitting. They view $\lambda$ as a regularization parameter, show that $\lambda \to \infty$ is necessary to ensure consistency and that the ``correct'' choice is $\lambda_m \approx \sqrt{m}$. In contrast, any bounded $\lambda$ leads to harmful overfitting. They then precisely characterize the worst-case harm of such overfitting as a function of $\lambda$, through tight bounds on the \emph{worst-case limiting error} (defined in \autoref{continuous:main}). In particular, in the agnostic case, they show that $\MDL_{\lambda}$ enjoys tempered overfitting for $0 < \lambda < \infty$ and determine which scalings $\lambda_m$ lead to $\MDL_{\lambda_m}$ being consistent and which lead to catastrophic overfitting.

But what about uncountable hypotheses with continuous priors and learning rules that do better aggregation of hypotheses, beyond MAP? To handle large hypothesis spaces, instead of MDL, PAC-Bayes learning rules
%\natinote{I changed this to citep.  Please go over all refrences and change those that are not used as a noun in the sentence to citep}
\citep{mcallester2003pac, catoni2007pac} can be used. These  start with a (continuous) ``prior'' $\Pi$ over hypotheses and, instead of a single hypothesis, produce a ``posterior'' $Q$, based on balancing the training error with the KL divergence between $Q$ and $\Pi$:
% \begin{equation} \label{eq:generic-PAC-Bayes}
%     Q_{\lambda,f}(h) = \argmin_Q H(L_S(Q)) + \lambda \KL(Q \Vert \Pi),
% \end{equation}
\begin{align}\label{eq:PAC-Bayes-intro}
    \Qgreen_{\lambda}(S) = \underset{Q}{\text{arg min }} mH(L_S(Q)) +  \lambda \KL(Q \Vert \Pi).
\end{align}
where $L_S(Q) = \mathbb{E}_{h\sim Q}\left[L_S(h)\right]$ represents the expected training error if predictions are made by sampling $h$ from the posterior $Q$.\footnote{$\KL(Q \Vert \Pi) = \int \log (\frac{dQ}{d\Pi}) dQ$ denotes the KL divergence between two distributions $Q$ and $\Pi$.}  
% (See \autoref{formal setup} for a complete description). 

The standard form of PAC-Bayes uses a linear expression in $L_S(Q)$. However, using the entropy makes this variant a closer analog to MDL, taking us from countable hypothesis classes with discrete priors to uncountable hypothesis classes with continuous priors. The posterior takes the place of the learned model and we can interpret the KL divergence term as the complexity of the posterior. The PAC-Bayes learning rule has been extremely successful as both a direct algorithmic tool \citep[e.g.,][]{germain2009pac, germain2015risk, alquier2024user, laviolette2011pac, germain2013pac,pmlr-v26-seldin12a} and as an analytic device, e.g., for analyzing generalization in deep learning \citep[e.g.,][]{DziugaiteR17, nagarajandeterministic, neyshabur2018pac, london2017pac, NEURIPS2020_c3992e9a, NEURIPS2018_9a0ee0a9, NEURIPS2019_05e97c20}.

Can we understand how overfitting and generalization occur in PAC-Bayes in the same way as we understand them for MDL, in the agnostic setting? And can we extend these insights to more Bayesian learning rules? We ask and answer two sets of questions:
\begin{itemize}
    \item[(a)] Analogous to the $\MDL_{\lambda}$ learning rule, we ask, what is the worst-case limiting error of this rule? If there exists some competitor (i.e.~oracle) $Q^*$ with low population (test) error $L^*$ and low $\KL(Q^* \Vert \Pi)$, what is the limiting behavior of the loss $L_\source(\Qgreen_{\lambda_m})$ as a function of $\lambda_m$? Can we characterize the limiting error as $m\to\infty$ and the cost of overfitting along the entire regularization path, as a function of $\lambda$ and $L^*$?
    \item[(b)] How is the learning rule $\Qgreen_{\lambda}$, in particular with $\lambda=1$, related to a Bayesian posterior? In particular, does the MAP interpretation of $\MDL_1$ extend and, moreover, how can we interpret taking $\lambda \neq 1$ from a Bayesian modeling perspective?    
\end{itemize}

Our contributions are as follows:
\begin{itemize}

    \item We obtain an exact characterization of the worst case limiting error of $\Qgreen_{\lambda}$ as $m\rightarrow \infty$ via matching upper and lower bounds, for all $\lambda$ and $L^*$ (\Cref{cor:finite} and equation \eqref{ell}). For finite $\lambda$, we show tempered inconsistency/overfitting, whereas for $1\ll \lambda \ll m/\log m$, $\Qgreen_{\lambda}$ is agnostically consistent (\Cref{lambda_infty_coro}).
    
    For all $\lambda$ values and $\lambda_m$ scalings, we provide concrete finite sample upper bounds or uniform guarantees (\autoref{MDL UB} and \autoref{lambda infty}). 
    
    \item We show that the learning rule $\Qgreen_{\lambda}$ with $\lambda=1$ corresponds to an empirical Bayes procedure (\autoref{greenEB:lambda=1}) and that other choices of $\lambda$ are equivalent to an $m$-dependent prior on the source's noise model ($\eta$) (\autoref{greenEB:m_prior}). Dependence on sample size here is crucial, as any static prior would incur the same inconsistent behavior as for $\lambda=1$ (\autoref{greenEB:fixed_prior}). We also relate it to a full Bayesian posterior interpretation, through a modified PAC-Bayes learning rule \eqref{defn:blue} that corresponds to a profile posterior and has the same limiting behavior as the full Bayesian posterior
    (\autoref{profile_posterior}).
    In particular, all of our lower bounds and inconsistency results hold also for the profile posterior (modified learning rule) and full Bayesian posterior (\autoref{blue:LBresults}). We do not know whether our upper bounds, and consequently consistency and the precise description of the worst case limiting error, extend also to these two learning rules, and leave this as an interesting open question.    
    \end{itemize}

We thus establish a comprehensive analysis of overfitting in such (PAC-) Bayesian frameworks in agnostic supervised learning, characterizing it and understanding how to correct it. The paper is organized as follows. In \autoref{continuous:main}, we present the main results for the PAC-Bayes learning rule $\Qgreen_{\lambda}$  in equation \eqref{eq:PAC-Bayes-intro}. In \autoref{section:EB}, we interpret this learning rule as an Empirical Bayes procedure. In \autoref{section:Bayesian}, we introduce the modified learning rule and then discuss the relationship of both to the full Bayesian Posterior. In \autoref{blue:LBresults}, we give lower bound results for the modified rule. Lastly, we provide proof sketches of the upper and lower bounds of the PAC-Bayes learning rule in \autoref{continuous:upperlowerbound}. The detailed proofs are relegated to the appendices.

\section{Formal Setup} \label{formal setup}
%%%%%%%%%%%%%%%%%%%%%%%%%%%%%%%%%%%%%%%%%%%%%%%%%%%%%%%%%%%%%%%%%%%%%%%%%%%%%%%%
%%%%%%%%%%%%%%%%%%%%%%%%%%%%%%%%%%%%%%%%%%%%%%%%%%%%%%%%%%%%%%%%%%%%%%%%%%%%%%%%

The framework of the paper is the PAC-Bayesian perspective to supervised binary classification with continuous hypothesis classes. We observe $m$ i.i.d samples $S \sim \source^m$ from a source distribution $\source$ over $(X,Y) \in \mathcal{X}\times \{0,1\}$, where $\mathcal{X}$ is some measurable space and $Y$ is a binary label. Predictors (or hypotheses or classifiers) are maps $h:\mathcal{X}\rightarrow\{0,1\}$. Their performance is evaluated using the population error $L_\source(h)=\mathbb{P}_{(x,y) \sim \source}(h(x) \neq y)$. To reduce clutter, we may sometimes omit the subscript $\source$. The observation-derived proxy for this is the training (or empirical) error $L_S(h) = \frac{1}{m} \sum_{i = 1}^m \mathbbm{1}\{h(x_i) \neq y_i\}$. In PAC-Bayes, instead of handling individual predictors, we consider distributions $Q$ over predictors. The population and empirical errors are then understood as the respective expectations over a predictor drawn from $Q$, i.e., respectively $L_{\source}(Q) = \mathbb{E}_{h \sim Q}[L_{\source}(h)]$ and $L_S(Q) = \mathbb{E}_{h \sim Q}[L_S(h)]$.

Given a (\emph{prior}) distribution $\Pi$ over the space of predictors, a learning rule $\hat Q$ is then a sequence of maps $S \mapsto \hat Q(S)$, from
%\natinote{why from $(\source)^m$ and not from $\mathcal{X}^m$? Are you thinking of this as a measurable space?  I find this a bit confusing since I think of the rule as a function for a set of points, nurelated to some distribution, although I guess it's true it needs to be measurable.} % this was wrong, fixed
$(\mathcal{X}\times \{0,1\})^m$ to (\emph{posterior}) distributions over predictors. We are primarily interested in $\hat Q$ that produce posteriors that are absolutely continuous with respect to $\Pi$ and we measure the ``complexity'' of any produced $Q$ via $\KL(Q\Vert \Pi)$. For a given prior distribution $\Pi$, we consider the PAC-Bayes learning rule:
\begin{align}\label{defn:green}
\Qgreen_{\lambda}(S) = \underset{Q}{\text{arg min }} \Jgreen(Q,S), \text{such that } L_S(Q)\leq 1/2,
\end{align}
where 
\begin{align}\label{Gibbs_green}
    \Jgreen(Q,S) = mH(L_S(Q)) +  \lambda \KL(Q \Vert \Pi).
\end{align}
We require $L_S(Q)\leq 1/2$ so that $H(L_S(Q))$ is monotone in $L_S(Q)$---otherwise we might be tempted to choose a predictor with loss close to 1.  For priors $\Pi$ that are symmetric with respect to negating the predictions (roughly speaking, when $\Pi(h)=\Pi(\neg h)$), this can also be interpreted as post-learning negation where necessary.%\natinote{This can be moved to a footnote or removed}

To quantify inconsistency of a learning rule $\hat Q$, we measure it relative to an $L^*$ achievable by a bounded-complexity posterior $Q^*$. We can think of $Q^*$ as a competitor or oracle. Without loss of generality,
%\natinote{This can seem a bit strange.  For the negative results, this is fine.  But on the positive side, we really want to learn anything with $KL>0$ with sample complexity depending on, or even linear in, the $KL$, which we do.  It is worth to at least mention this.}
since our results do not depend on the level of boundedness, we require that $\KL(Q^*\Vert \Pi) \leq 10$. More precisely, when we say ``for a given $L^\star$'', we imply that the \emph{instance} $(\Pi,\source)$ belongs to:
\begin{align}
    \achieve(L^*) = \left\{~(\Pi , \source) ~:\right. \left.\exists Q^*~:~ L_\source(Q^*) = L^*,\ \KL(Q^*\Vert \Pi) \leq 10~\right\}.
\end{align}
This perspective is particularly attuned to the lower bounds. Our upper bounds, on the other hand, apply regardless and imply (possibly tempered) learning for any bounded $\KL(Q^*\Vert \Pi)$.

We can now precisely define just how inconsistent a learning rule can get when $L^*$ is achievable, through its worst-case limiting performance. We can either bound the error uniformly across instances ($\overline{L}_{\infty}$) and then take the limit or we could take the limit per-instance and then bound it across instances ($\underline{L}_{\infty}$). 

\begin{definition}[Worst-case (uniform) limiting error]
    For any $0< L^* < \frac{1}{2}$, regularization parameter $\lambda$, define the worst-case uniform limiting error of a learning rule $\hat Q_{\lambda}(S)$:
    \[
    \overline{L}_{\infty}(\lambda, L^*) =  \underset{m \rightarrow \infty}{\lim}  \underset{\substack{(\Pi , \source) \in \achieve(L^*)}}{\sup} \mathbb{E}_{S\sim \source^m}\left[ L_\source(\hat Q_{\lambda}(S))\right]
    \]
\end{definition}

\begin{definition}[Worst-case (per-instance) limiting error]
    For any $0< L^* < \frac{1}{2}$, regularization parameter $\lambda$, define the worst-case per-instance limiting error of a learning rule $\hat Q_{\lambda}(S)$:
    \[
    \underline{L}_{\infty}(\lambda, L^*) =  \underset{\substack{(\Pi , \source) \in \achieve(L^*)}}{\sup} \underset{m \rightarrow \infty}{\lim} \mathbb{E}_{S\sim \source^m}\left[ L_\source(\hat Q_{\lambda}(S))\right]
    \]
\end{definition}

\removed{\natinote{I think this is repetitive and likely an editing error}
We formally define the PAC-Bayes learning rule as:
\begin{align} \label{eq:PAC-Bayes}
\QEB_{\lambda}(S) = \underset{Q}{\text{arg min }} \JEB(Q,S), \text{such that } L_S(Q)\leq 1/2,
\end{align}
where the objective function is as in equation \eqref{eq:PAC-Bayes}
\begin{align}
    \JEB(Q,S) = mH(L_S(Q)) +  \lambda \KL(Q \Vert \Pi).
\end{align}
The constraint that $L_S(Q)\leq 1/2$ enforces that $\QEB$ does not perform worse than chance on the training data. It is trivially feasible if the hypothesis class contains the all-zero $h=0$ and all-one $h=1$ predictors. 
}

\section{Unifying MDL and PAC-Bayes Limiting Errors}\label{continuous:main} 
%%%%%%%%%%%%%%%%%%%%%%%%%%%%%%%%%%%%%%%%%%%%%%%%%%%%%%%%%%%%%%%%%%%%%%%%%%%%%%%%
%%%%%%%%%%%%%%%%%%%%%%%%%%%%%%%%%%%%%%%%%%%%%%%%%%%%%%%%%%%%%%%%%%%%%%%%%%%%%%%%

We now unify the limiting errors of the PAC-Bayes learning rule \eqref{eq:PAC-Bayes-intro} and its MDL counterpart, by closely paralleling the results of \citet{ZS}. In particular, the upper bounds follow using similar arguments as in the latter, with subtle changes. For the lower bounds, the key change is that we need to show that the posterior concentrates on the desired subset of hypotheses. For that, we exploit the concavity of the entropy.\footnote{All replication of technical language is with permission. Key differences are highlighted in the proofs in the appendix.}

First, we recall the following function $\ell_{\lambda}$ for $0<\lambda<\infty$, introduced in \citet{ZS}:
%The key result that summarizes what follows is that, for any $0<\lambda<\infty$, the worst-case limiting error is precisely upper bounded by the following function $\ell_{\lambda}$, introduced by \citet{ZS}:
\begin{gather}\label{ell}
\ell_{\lambda}(L^*)=
\begin{cases}
1 - 2^{-\frac{1}{\lambda}H(L^*)},  & \text{for } 0< \lambda \leq 1 \\
U_{\lambda}^{-1}(H(L^*)), & \text{for } \lambda > 1,
\end{cases}\\
\textrm{where: } U_{\lambda}(q) = \lambda \KL\!\left(\tfrac{1}{1+ \left(\frac{\scriptscriptstyle 1-q}{\scriptscriptstyle q}\right)^{\frac{\scriptscriptstyle \lambda}{\scriptscriptstyle \lambda - 1}}}\middle\Vert q\right) +\ H\!\left(\tfrac{1}{1+ \left(\frac{\scriptscriptstyle 1-q}{\scriptscriptstyle q}\right)^{\frac{\scriptscriptstyle \lambda}{\scriptscriptstyle \lambda - 1}}}\right).
\end{gather}

We show that, just as in MDL, this function captures the extent of potential inconsistency for each $L^*$ and each choice of $\lambda$, as we elaborate shortly. The precise statements behind this claim are as follows.

\begin{theorem}[Agnostic Upper Bound]
\label{MDL UB}
(1) For any $0<\lambda\leq 1$,  any source distribution $\source$,  any distribution $Q^*$, any valid prior distribution $\Pi$, and any $m$:
\begin{equation}
    \mathbb{E}_{S \sim \source^m}[L_\source(\Qgreen_{\lambda}(S))]
    \leq 1 - 2^{-\frac{1}{\lambda}H(L_\source(Q^*))}
    + O\left(\frac{\KL(Q^* \Vert \Pi)}{m} + \frac{1}{\lambda}\sqrt{\frac{\log^3 (m)}{m}}\right).
\end{equation}

(2)
For any $\lambda > 1$,  any source distribution $\source$, any distribution $Q^*$, any valid prior distribution $\Pi$, and any $m$:
% \begin{equation}
% \begin{split}
%  \mathbb{E}_{S \sim \source^m}[L_\source(\Qgreen_{\lambda}(S))]
%     \leq U_{\lambda}^{-1}\left(H(L_\source(Q^*))\right)+ O \left(\frac{1}{(1 - 2L_\source(Q^*))^2}\phantom{\sqrt{\frac{\log^3 (m)}{m}}}\right.
%     \cdot
%     \left.\left[\lambda\left(\frac{\KL(Q^* \Vert \Pi) + \log m}{m}\right) + \sqrt{\frac{\log^3 (m)}{m}}~\right]\right).
% \end{split}
% \end{equation}
\begin{equation}
    \mathbb{E}_{S \sim \source^m}[L_\source(\Qgreen_{\lambda}(S))]
    \leq U_{\lambda}^{-1}\left(H(L_\source(Q^*))\right)+ O \left(\frac{1}{(1 - 2L_\source(Q^*))^2} \cdot \left[\lambda\left(\frac{\KL(Q^*\Vert\Pi) + \log m}{m}\right) + \sqrt{\frac{\log^3(m)}{m}}\right]\right).
\end{equation}
Where $O(\cdot)$ only hides an absolute constant, that does not depend on $\source, \Pi$, or anything else.
\end{theorem}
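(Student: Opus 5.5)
Fix an instance and write $\hat Q=\Qgreen_\lambda(S)$, $\hat L_S=L_S(\hat Q)$, $\hat L=L_\source(\hat Q)$, $L^*=L_\source(Q^*)$, $K^*=\KL(Q^*\Vert\Pi)$. The plan follows the MDL argument of \citet{ZS}, with a PAC-Bayes bound replacing the Occam/union bound. It has three ingredients.

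\emph{(i) Optimality.} Since $\hat Q$ minimizes $\Jgreen$ and $Q^*$ is feasible whenever $L_S(Q^*)\le 1/2$,
\[
mH(\hat L_S)+\lambda\KL(\hat Q\Vert\Pi)\le mH(L_S(Q^*))+\lambda K^*.
\]
$Q^*$ is fixed, so $L_S(Q^*)$ concentrates around $L^*$ by Hoeffding. Since $H$ is Lipschitz away from $0$ and has H\"older-type modulus near $0$, we get $\mathbb{E}[H(L_S(Q^*))]\le H(L^*)+O(\sqrt{\log^2 m/m})$. The event $L_S(Q^*)>1/2$ is handled separately: it can only occur with non-negligible probability if $L^*$ is near $1/2$, and then $H(L^*)\approx 1$ so the bound is trivial.

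\emph{(ii) Uniform generalization.} I would use the PAC-Bayes-kl (Seeger/Maurer) bound: with probability $1-\delta$, simultaneously for all $Q$,
\[
\KL\bigl(L_S(Q)\Vert L_\source(Q)\bigr)\le \frac{\KL(Q\Vert\Pi)+\log(2\sqrt m/\delta)}{m}.
\]
Take $\delta=1/m$ and write $k=\KL(\hat Q\Vert\Pi)/m$. On this event, (i) gives
\[
H(\hat L_S)+\lambda k\le H(L^*)+\epsilon_m,
\qquad
\KL(\hat L_S\Vert\hat L)\le k+O(\log m/m),
\]
where $\epsilon_m$ collects the error terms.

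\emph{(iii) Worst-case optimization.} Then $\hat L\le\sup\{p: \exists q\le 1/2,\ k\ge0 \text{ with } H(q)+\lambda k\le H(L^*)+\epsilon_m,\ \KL(q\Vert p)\le k+\epsilon'_m\}$. Eliminating $k$ gives the scalar problem
\[
\max p \quad\text{s.t.}\quad \exists q:\ H(q)+\lambda\KL(q\Vert p)\le H(L^*)+\text{slack}.
\]
This is exactly the program solved in \citet{ZS}; I will take its value as known and verify it. For fixed $p$, define $G(p)=\min_q\{H(q)+\lambda\KL(q\Vert p)\}$; the answer is $G^{-1}(H(L^*))$.
\begin{itemize}
\item For $\lambda\le1$, the objective $H(q)+\lambda\KL(q\Vert p)=-(1-\lambda)(\text{entropy-like})+\lambda(\text{cross-entropy})$ is concave in $q$, so the minimum is at an endpoint. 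At $q=0$ its value is $\lambda\log\frac1{1-p}$. Setting $\lambda\log\frac{1}{1-p}=H(L^*)$ gives $p=1-2^{-H(L^*)/\lambda}$.
\item For $\lambda>1$, the first-order condition $\log\frac{1-q}{q}=\lambda\log\frac{q(1-p)}{p(1-q)}$ gives $\frac{q}{1-q}=\bigl(\frac{p}{1-p}\bigr)^{\lambda/(\lambda-1)}$, i.e.\ the expression inside $U_\lambda$ with the roles arranged so that $G=U_\lambda$.
\end{itemize}

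\emph{Main obstacle: perturbation.} The hard step is turning the slack into the stated rates.
\begin{itemize}
\item For $\lambda\le1$, the slack on $k$ enters multiplied by $\lambda$. With $k$ at most $O(\log m/m)$, dividing by $\lambda$ through the exponent gives the $\frac1\lambda\sqrt{\log^3 m/m}$ term. Here I would exploit that $\hat q=0$ is the extremal case, so no derivative blow-up occurs.
\item For $\lambda>1$, I would lower bound the derivative of $U_\lambda$ near the solution by $\Omega((1-2L^*)^2)$, using that the optimal $p$ stays bounded away from $1/2$ by an amount comparable to $1-2L^*$. The slack $\lambda(K^*+\log m)/m$ arises from the $\lambda k$ term and the $\delta$ term. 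I expect controlling the inverse's Lipschitz constant uniformly in $\lambda$ to be the most delicate part.
\end{itemize}

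Finally, adding $\delta=1/m$ for the failure event, together with the Hoeffding tail, gives the expectation bound.
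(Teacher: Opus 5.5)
Your plan is essentially the paper's proof. The paper's Lemma (PAC-Bayes-kl applied to $\hat Q$, the comparison $\Jgreen(\hat Q,S)\le \Jgreen(Q^*,S)$, and concentration of $H(L_S(Q^*))$) gives $T_\lambda(\hat L)\le H(L^*)+\text{slack}$ with $T_\lambda(p)=\min_{0\le q\le 1/2}\lambda\KL(q\Vert p)+H(q)$. This is exactly your steps (i)--(iii) after eliminating $k$. The paper then inverts $T_\lambda(p)=-\lambda\log(1-p)$ for $\lambda\le 1$ using $1-2^{-\alpha-A}\le 1-2^{-\alpha}+A$, and inverts $T_\lambda=U_\lambda$ for $\lambda>1$ by the mean value theorem. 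Your first-order condition has a sign error: it should read $\log\frac{1-q}{q}=\lambda\log\frac{p(1-q)}{q(1-p)}$. The solution you state is nevertheless the correct one.

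Two of your supporting claims are false as stated, though both can be repaired.

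\emph{The infeasible case.} The event $L_S(Q^*)>1/2$ is passed over by the paper, and your claim that it is trivial holds only in part (2), thanks to the $(1-2L^*)^{-2}$ factor. In part (1), for $L^*$ near $1/2$ the right-hand side is about $1-2^{-1/\lambda}<1$ (for instance $1/2$ at $\lambda=1$), while $L_\source(\hat Q)$ is not a priori below that. A fix, assuming some feasible reference such as $L_S(\Pi)\le 1/2$:
\begin{itemize}
\item Compare against the mixture $Q_t=tQ^*+(1-t)\Pi$, with $t$ chosen so that $L_S(Q_t)=1/2$.
\item Convexity gives $\KL(Q_t\Vert\Pi)\le \KL(Q^*\Vert\Pi)$.
\item $H(L_S(Q_t))=1=H(L^*)+\KL(L^*\Vert\tfrac12)$, and $\KL(L^*\Vert\tfrac12)=O((L_S(Q^*)-L^*)^2)$ on this event, since $L^*<\tfrac12<L_S(Q^*)$.
\end{itemize}

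\emph{The $\lambda>1$ derivative bound.} The point $U_\lambda^{-1}(H(L^*))$ is not at distance comparable to $1-2L^*$ from $1/2$ uniformly in $\lambda$. As $\lambda\downarrow 1$ it tends to $1-2^{-H(L^*)}$, which is only $\Theta((1-2L^*)^2)$ below $1/2$. What works is the following.
\begin{itemize}
\item The envelope formula $U_\lambda'(p)=\frac{\lambda}{\ln 2}\cdot\frac{p-q^*(p)}{p(1-p)}$ can be checked to satisfy $U_\lambda'(p)\ge c(1-2p)$ uniformly in $\lambda>1$. To see this, split on whether $\bigl(\frac{1-p}{p}\bigr)^{1/(\lambda-1)}\ge 2$.
\item Comparing with $\lambda=1$ gives $U_\lambda(p)\ge \log\frac{1}{1-p}$. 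On the mean-value interval this forces $1-2p=\Omega((1-2L^*)^2)$.
\item The exception is when the slack exceeds $\tfrac12(1-H(L^*))=\Omega((1-2L^*)^2)$, and then the stated bound is trivial anyway.
\end{itemize}
Multiplying these gives $U_\lambda'=\Omega((1-2L^*)^2)$ and hence the $(1-2L^*)^{-2}$ prefactor, uniformly in $\lambda$. The paper asserts this step without detail.
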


While the above finite-sample bounds show that the limiting error will not exceed $\ell_\lambda(L^*)$, we now show that there are worst-case instances that are arbitrarily close to $\ell_\lambda(L^*)$, thus establishing a matching lower bound.

% To establish the exact worst-case limiting error, we provide matching lower bounds, showing that the limiting error can approach $\ell_\lambda(L^*)$, for any $0<\lambda<\infty$ and $L^*$:
\begin{theorem}[Agnostic Lower Bound]
\label{MDL LB}
For any $0<\lambda<\infty$, any $L^* \in (0,0.5)$ and $L^* \leq L' < \ell_{\lambda}(L^*)$, 
%there exists a prior $\Pi$, a distribution $Q^*$ with $\KL(Q^* \Vert \Pi) \leq 10$ and source distribution $\source$ with $L_\source(Q^*) = L^*$ such that
there exists $(\Pi,\source)\in \achieve(L^*)$ such that
$\mathbb{E}_S \left[L_\source(\Qgreen_{\lambda}(S))\right] \rightarrow L'$ as sample size $m \rightarrow \infty$.   
\end{theorem}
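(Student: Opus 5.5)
The construction will mirror the MDL lower bound of \citet{ZS}: a ``good'' region of hypotheses of constant prior mass carrying error $L^*$, and a ``bad'' region of tiny prior mass made of hypotheses that memorize noise. Concretely, take $\mathcal X$ to contain a continuum of atoms (e.g.\ $[0,1]$ with Lebesgue marginal) with $Y\sim \text{Ber}(L^*)$ independent of $X$. Let $\Pi$ put mass $1/2$ on the constant predictor $h_0\equiv 0$, so that $Q^*=\delta_{h_0}$ satisfies $L_\source(Q^*)=L^*$ and $\KL(Q^*\Vert\Pi)=1\le 10$, and hence $(\Pi,\source)\in\achieve(L^*)$. The remaining mass $1/2$ goes on a family of ``noisy'' predictors $h$ of the form $h(x)=\text{Ber}(q)$ drawn independently per point. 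Formally, $\Pi$ is the law of a random function with i.i.d.\ $\text{Ber}(q)$ values on a fine partition, possibly mixed over a parameter $q$. Any such $h$ has population error $L^*(1-q)+(1-L^*)q=:L(q)$. The plan is to tune $q$, and the relative weights, so that the learned posterior places asymptotically all its mass on the noisy component with effective flip rate $q$ giving $L(q)=L'$. Any $L'$ up to $\ell_\lambda(L^*)$ should then be reachable.

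The key step is computing the optimal objective restricted to each component. Restricting $Q$ to the noisy component, the best posterior conditions $h$ on the sample: on the $m$ training points it forces a pattern agreeing with the labels on a chosen fraction of points. Taking $Q$ to be $\Pi$ conditioned on $h$ matching $y_i$ on a subset of $k$ training points gives cost $\lambda\KL\approx\lambda m\,\KL(\text{fit pattern}\Vert q)$-type terms per point. Here the $\KL$ is between the fitted Bernoulli and the prior Bernoulli($q$), evaluated relative to the label distribution $\text{Ber}(L^*)$. At the same time the training error drops to some $\alpha$, contributing $mH(\alpha)$. Optimizing over $\alpha$ per unit $m$ should yield exactly the expression defining $U_\lambda$, with the optimal tilted parameter $1/(1+((1-q)/q)^{\lambda/(\lambda-1)})$ for $\lambda>1$. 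For $\lambda\le 1$ it should yield the boundary solution of full memorization, costing $\lambda m\log\frac{1}{1-q}$ with zero training error. Meanwhile the good component costs $mH(\hat L)\approx mH(L^*)$ plus $O(\lambda)$. The noisy component therefore wins precisely when $U_\lambda(q)<H(L^*)$, respectively when $\lambda\log\frac1{1-q}<H(L^*)$, i.e.\ $q<\ell_\lambda(L^*)$ after identifying population error. Since the test error of a memorizing/tilted posterior on fresh points equals that of the prior draw, $\approx q$ adjusted for $L^*$, the construction should be parametrized so that the resulting population error is $L'$. For $L'$ strictly below $\ell_\lambda(L^*)$, the inequality is strict, giving a margin of order $m$.

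Concentration then follows. By Hoeffding, $L_S(h_0)\to L^*$, and the empirical label pattern concentrates, so with probability $\to1$ the noisy component's objective is lower by $\Omega(m)$. The remaining task is to show that the \emph{joint} minimizer over mixtures puts vanishing mass on $h_0$. This is where concavity of $H$ enters. The objective of a mixture $Q=(1-t)Q_1+tQ_0$ is $mH((1-t)a+tb)+\lambda\KL$, where $\KL$ of the mixture is at least the mixture of $\KL$s minus $H_b(t)\le 1$. Combined with concavity of $H$, which bounds $H((1-t)a+tb)\ge(1-t)H(a)+tH(b)$, the objective is at least linear interpolation minus $O(\lambda)$. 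Hence it is minimized near $t=0$ when the endpoint gap is $\Omega(m)$, forcing $t=O(1/m)$. Bounded losses then give $\mathbb{E}L_\source(\hat Q)\to L'$.

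I expect the main obstacle to be making the noisy component rigorous in the continuous setting: showing its restricted optimum matches $U_\lambda$ exactly. This requires a lower bound on $\KL$ for \emph{any} posterior achieving a given training error, not just the conditioned ones. I would try first to reduce to product posteriors over the $m$ training coordinates via chain rule and convexity of $\KL$, then invoke the per-coordinate variational computation already carried out for $\ell_\lambda$ in \citet{ZS}.
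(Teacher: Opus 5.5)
\textbf{Gap: the step ``optimizing over $\alpha$ per unit $m$ should yield exactly the expression defining $U_\lambda$'' is false for your construction.}

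Because $Y$ is independent of $X$ and each noisy $h$ is drawn independently of the data, the error indicator $\mathbbm{1}[h(x_i)\neq y_i]$ does not have a single rate. Under the prior it errs with probability $q$ on points with $y_i=0$ and $1-q$ on points with $y_i=1$. It is not one coin with the population rate $L(q)=(1-L^*)q+L^*(1-q)$.

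Apply your own reduction to product posteriors, with error rates $a,b$ on the two label groups, and to first order a fraction $L^*$ of labels equal to $1$. The per-sample cost of the noisy component is then
\[
F(a,b)=H(\alpha)+\lambda B,\qquad \alpha=(1-L^*)a+L^*b,\qquad B=(1-L^*)\KL(a\Vert q)+L^*\KL(b\Vert 1-q).
\]
Read $(Y,h)$ under the posterior as a joint law and set $E=Y\oplus h$. Then
\[
B=\KL(P_{Yh}\Vert P_Y\otimes\text{Ber}(q))\ge I(Y;h)=H(Y)-H(E\mid h)\ge H(L^*)-H(\alpha),
\]
so $F\ge H(L^*)+(\lambda-1)B\ge H(L^*)$ for every $\lambda\ge1$.

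\begin{itemize}
\item For $\lambda>1$ and any $q\in(0,1)$, $\min F$ is strictly above $H(L^*)$. Equality would force $B=0$, i.e.\ $(a,b)=(q,1-q)$, where $F=H(L(q))>H(L^*)$. So the noisy component loses to $\delta_{h_0}$ by $\Omega(m)$, $\Qgreen_\lambda$ concentrates on $h_0$, and the limit is $L^*$, not $L'$.
\item At $\lambda=1$ the noisy component can at best tie to first order. The $\Omega(m)$ margin your argument needs never appears, even though $\ell_1(L^*)=1-2^{-H(L^*)}>L^*$.
\item For $\lambda<1$ you also fall short. Full memorization costs $\lambda m[H(L^*)+\KL(L^*\Vert q)]$, which for $q\neq\frac12$ is strictly more than the $\lambda m\log\frac{1}{1-L(q)}$ paid by the paper's hypotheses with the same population error. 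Moreover, when $Y\perp X$ every hypothesis has population error in $[L^*,1-L^*]$, whereas $\ell_\lambda(L^*)\to1$ as $\lambda\to0$.
\end{itemize}

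This is an incompressibility obstruction, not a technicality. For $\lambda\ge1$ the objective dominates, up to lower-order terms, a bits-back code length for $y^m$ given $x^m$. Labels independent of the features cannot be coded below $mH(L^*)-o(m)$. Your line ``$q<\ell_\lambda(L^*)$ after identifying population error'' is where the flip rate $q$ and the error $L(q)$ were conflated.

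\textbf{The missing idea.} Each bad hypothesis must err i.i.d.\ $\text{Ber}(L')$ on every training point \emph{independently of the label}. That requires features informative about $y$, and exponentially many such hypotheses under a universal prior. The paper reuses the construction of \citet{ZS}:
\begin{itemize}
\item $y\sim\text{Ber}(\frac12)$, with conditionally independent coordinates $x[0]=y\oplus\text{Ber}(L^*)$ and $x[i]=y\oplus\text{Ber}(L')$;
\item hypotheses $h_i(x)=x[i]$ with prior $\Pi(h_i)=1/(i\log^2 i+10)$;
\item $Q^*=\delta_{h_0}$, so $\KL(Q^*\Vert\Pi)=\log 10$.
\end{itemize}
The competitor is then a single point mass:
\begin{itemize}
\item for $\lambda\le1$, an interpolating $h_{\hat i}$ among the first $2\sqrt m/(1-L')^m$ hypotheses, with objective about $\lambda m\log\frac{1}{1-L'}$;
\item for $\lambda>1$, the empirical risk minimizer among the first $2^{m\KL(\hat L\Vert L')}$ hypotheses. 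Its training error concentrates at the tilted value $\hat L$ appearing in $U_\lambda(L')$, giving objective about $mU_\lambda(L')$.
\end{itemize}
Both are below $mH(L^*)-\Omega(m)$ whenever $L'<\ell_\lambda(L^*)$. Concavity of $H$ then drives the mass on $h_0$ to zero; your mixture step is fine and is what the paper does.

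This route needs only an upper bound on one explicit alternative. So the lower bound on $\KL$ over arbitrary posteriors that you flagged as the main obstacle never arises. The discrete class also sidesteps your measurability issue: a fixed partition saturates as $m\to\infty$, and i.i.d.\ values on a continuum do not define a measurable hypothesis.
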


As an immediate consequence of \autoref{MDL UB} (with $L_\source(Q^*)=L^*$) and \autoref{MDL LB}, we deduce that just as in MDL, the worst case limiting error is fully given by $\ell_\lambda(L^*)$. Namely, 
% see that $\ell_\lambda(L^*)$ given in \eqref{ell} exactly and tightly characterizes the worst case limiting error:
for any $0<\lambda<\infty$ and any $L^* \in (0,0.5)$, we have
\[
\underline{L}_{\infty}(\lambda, L^*) = \ell_{\lambda}(L^*). 
\]

Moreover, observe that the finite-sample guarantee of \autoref{MDL UB} has a rate of convergence that depends only on $\KL(Q^*\Vert \Pi)$, $\lambda$, and $L^*$ but not on $\Pi$ and $\source$. (As with MDL, the dependence on $L^*$ is only relevant as $L^*$ approaches $1/2$ and can be otherwise ignored.) This implies that the convergence is ``uniform'' and the order of the limits can be interchanged:

% Furthermore, we know that this convergence is ``uniform'', in the sense that we have a finite-sample guarantee ( \autoref{MDL UB}) with sample complexity (i.e., rate of convergence) that depends only on $\KL(Q^*\Vert \Pi) $, $\lambda$ and\footnote{The dependence on $L^*$ only kicks in when $L^*$ is close to $1/2$.  As long as $L^*$ is bounded away from $1/2$, we can ignore this dependence.} $L^*$ but not on $\Pi$ and $\source$.  Another way to view this is that we get the same guarantee even if we change the order of the limits. This fully formalizes our key result, and is captured by the following corollary:

\begin{corollary}\label{cor:finite}
For any $0<\lambda<\infty$, and any $L^* \in (0,0.5)$, 
\[
\begin{split}
% \ell_{\lambda}(L^*) \leq \underset{\Pi, \source }{\sup } \underset{m \rightarrow \infty}{\lim} \mathbb{E}_{S \sim \source^m}\left[ L_\source(\Qgreen_{\lambda})\right]\leq \underset{m \rightarrow \infty}{\lim} \underset{\Pi, \source }{\sup} \mathbb{E}_{S \sim \source^m}\left[ L_\source(\Qgreen_{\lambda})\right] \leq \ell_{\lambda}(L^*).
\ell_{\lambda}(L^*) \leq \underline{L}_{\infty}(\lambda, L^*) \leq \overline{L}_{\infty}(\lambda, L^*) \leq \ell_{\lambda}(L^*).
\end{split}
\]
and so the inequality is actually an equality.
\end{corollary}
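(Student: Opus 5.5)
The chain of inequalities has three links, and I would prove them in order: the left one from \autoref{MDL LB}, the middle one from a general exchange of $\sup$ and $\lim$, and the right one from \autoref{MDL UB}. Throughout, fix $0<\lambda<\infty$ and $L^*\in(0,1/2)$.

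\textbf{Left inequality.} Take any $L'$ with $L^*\le L'<\ell_\lambda(L^*)$. By \autoref{MDL LB} there is an instance $(\Pi,\source)\in\achieve(L^*)$ for which $\mathbb{E}_S[L_\source(\Qgreen_\lambda(S))]\to L'$. Hence the supremum defining $\underline{L}_\infty(\lambda,L^*)$ is at least $L'$. Letting $L'\uparrow\ell_\lambda(L^*)$ gives $\ell_\lambda(L^*)\le\underline{L}_\infty(\lambda,L^*)$. This step needs $\ell_\lambda(L^*)>L^*$, so that the admissible range of $L'$ is nonempty. If the range were empty, the bound would instead follow from the trivial fact that the error is at least the Bayes-type level; in any case the theorem is stated for exactly this range.

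\textbf{Middle inequality.} This is a general exchange of limits. For each instance and each $m$,
\[
\mathbb{E}_S[L_\source(\Qgreen_\lambda(S))]\le \sup_{(\Pi',\source')\in\achieve(L^*)}\mathbb{E}_{S\sim\source'^m}[L_{\source'}(\Qgreen_\lambda(S))].
\]
Take $\limsup_m$ on both sides and then the supremum over instances. Two caveats need handling. First, the limits in the definitions may not exist. I would interpret both definitions with $\limsup$, or note that the upper and lower bounds force the limits to exist in the relevant sense. Second, the supremum defining $\underline{L}_\infty$ ranges only over instances where the per-instance limit exists.

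\textbf{Right inequality.} This is the substantive step. For $(\Pi,\source)\in\achieve(L^*)$, take the witness $Q^*$ with $L_\source(Q^*)=L^*$ and $\KL(Q^*\Vert\Pi)\le10$, and apply \autoref{MDL UB} to it.
\begin{itemize}
\item For $\lambda\le1$, the error is at most $\ell_\lambda(L^*)+O\!\left(10/m+\lambda^{-1}\sqrt{\log^3 m/m}\right)$.
\item For $\lambda>1$, the error is at most $\ell_\lambda(L^*)+O\!\left((1-2L^*)^{-2}\left[\lambda(10+\log m)/m+\sqrt{\log^3m/m}\right]\right)$.
\end{itemize}
The key observation is that these remainders depend only on $m$, $\lambda$ and $L^*$, not on the instance, because the $O(\cdot)$ constant is absolute. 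So the same bound holds for the supremum over $\achieve(L^*)$ at each fixed $m$. Since $\lambda$ and $L^*<1/2$ are fixed, the remainder tends to $0$ as $m\to\infty$, which gives $\overline{L}_\infty(\lambda,L^*)\le\ell_\lambda(L^*)$.

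The only point needing care is that uniformity: the bound must use the fixed constant $10$ and not $\Pi$ or $\source$, which \autoref{MDL UB} explicitly guarantees. With all three links in place the chain closes, so all three quantities equal $\ell_\lambda(L^*)$.
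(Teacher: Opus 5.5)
Your proposal is correct and follows the paper's own argument. The left inequality comes from \autoref{MDL LB} by letting $L'\uparrow\ell_\lambda(L^*)$, the middle one is the general exchange $\sup_I\lim_m\le\lim_m\sup_I$, and the right one uses the fact that, once $\KL(Q^*\Vert\Pi)\le 10$, the remainder in \autoref{MDL UB} depends only on $m$, $\lambda$ and $L^*$, not on the instance.

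Your hedge about an empty range of $L'$ is unnecessary, because $\ell_\lambda(L^*)>L^*$ for every $L^*\in(0,1/2)$:
\begin{itemize}
\item For $\lambda\le1$, $1-2^{-H(L^*)/\lambda}\ge 1-2^{-H(L^*)}>L^*$, since $H(L^*)>\log\frac{1}{1-L^*}$.
\item For $\lambda>1$, $U_\lambda(L^*)=\entropicT_\lambda(L^*)<H(L^*)$, because $p=L^*$ is not optimal in \eqref{eq:T}.
\end{itemize}
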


The results thus far characterize the behavior of the error for any {\em fixed} $0<\lambda<\infty$ (i.e.~not varying with $m$). See \Cref{fig1} for an illustration. Two key conclusions from MDL thus extend to PAC-Bayes: if $\lambda\geq 1$, the limiting error is always less than $\frac{1}{2}$ and the rule can be interpreted to experience \emph{tempered overfitting}. If $\lambda<1$, this is only true when $L^*<H^{-1}(\lambda)$, otherwise the limiting error could exceed $\frac{1}{2}$. In fact, we notice that as $\lambda$ approaches $0$ (too little regularization), $\ell_\lambda(L^*)$ approaches $1$ (largest error possible), hinting at \emph{catastrophic overfitting}. Can this fate be avoided if we allow $\lambda_m$ to decay to zero with $m$? The following result shows that we cannot, from a worst-case perspective.

% \Cref{cor:finite} fully describes the worst case limiting error (i.e.~overfitting behavior) of $\Qgreen_{\lambda}$ for any {\em fixed} $0<\lambda<\infty$ (i.e.~not varying with $m$).  This is captured by the \citeauthor{ZS} $\ell_\lambda(L^*)$ function, which is portrayed and discussed in \Cref{fig1}.  The behavior of $\ell_\lambda$ as  $\lambda \rightarrow 0$ indicates catastrophic overfitting in this regime, as more explicitly captured by the following:
\begin{theorem}\label{Q:LAMBDA0}
    For any $\lambda_m \rightarrow 0$ or $\lambda = 0$, any $L^* \in (0,0.5)$, and $L^* \leq L' < 1$, %there exists a prior $\Pi$, a distribution $Q^*$ with $\KL(Q^* \Vert \Pi) \leq 10$ and source distribution $\source$ with $L_\source(Q^*) = L^*$ such that
there exists $(\Pi,\source)\in \achieve(L^*)$ such that $\mathbb{E}_S \left[L_\source(\Qgreen_{\lambda})\right] \rightarrow L'$ as sample size $m \rightarrow \infty$.
\end{theorem}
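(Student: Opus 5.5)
Since $\ell_\lambda(L^*)\to 1$ as $\lambda\to 0$, the natural plan is to reuse the lower-bound construction behind \autoref{MDL LB} and check that it tolerates $\lambda_m\to 0$ (and $\lambda=0$). We first fix a target $L'\in[L^*,1)$ and handle the degenerate case $L'=L^*$ with an instance whose prior concentrates on a single good predictor. For $L'>L^*$ we use the standard ``noise-memorization'' instance. Take $\mathcal X=[0,1]$ with uniform marginal, and let the source be $Y=h^*(X)\oplus N$ with $N\sim\text{Ber}(L^*)$. Let $Q^*=\delta_{h^*}$ receive prior mass bounded below (for instance $\Pi(h^*)=2^{-10}$, so $\KL(Q^*\Vert\Pi)\le 10$). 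Spread the remaining mass over a continuous family of ``memorizing'' predictors $h_{A}$, each of which agrees with $h^*$ except that it flips on a random set of $\mathcal X$ of measure $\alpha$ (for instance, a prior generated by independent coin flips at a fine resolution). Each $h_A$ then has population error $L^*(1-\alpha)+(1-L^*)\alpha$. We choose $\alpha$ so that this equals $L'$ (possible for $L'<1$ once we also allow flipping on the complement, i.e.\ negated regions). Since $L_S(Q)\le 1/2$ is required, $L'>1/2$ must come from predictors that look good on the training set but are bad off-sample. We therefore design the family so that, with positive probability under the prior, $h_A$ flips exactly the noisy training points (which fits $S$) while being wrong on a fraction $\alpha$ of fresh points.

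The key estimate compares objectives. Because $\lambda_m\to 0$, the KL cost of a posterior $Q_S$ concentrated on memorizers consistent with $S$ is $\lambda_m \KL(Q_S\Vert\Pi)$. The KL itself is $O(m)$, namely $\log(1/\Pi(\text{set of memorizers fitting } S))\approx m\cdot c(\alpha)$ bits, so this cost is $o(m)$. Such a $Q_S$ reaches training error about $0$, so $J$ is about $m H(0)+o(m)=o(m)$. By contrast, any posterior giving non-negligible weight to $h^*$ or to hypotheses with training error bounded away from $0$ pays $mH(\cdot)=\Theta(m)$. Hence for large $m$ the minimizer's training error tends to $0$. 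By concavity of $H$ (as in the proof of \autoref{MDL LB}), the optimal $Q$ puts vanishing mass outside the memorizing set. Its population error is then $\mathbb{E}_{h\sim Q}L_\source(h)\to L'$, because the memorizers agree with one another on fresh points only in the designed $\alpha$-fraction pattern. For $\lambda=0$ the objective is just $mH(L_S(Q))$, so the argmin can be any $Q$ with $L_S(Q)=0$. We then must either fix a tie-breaking convention or rely on the structure of the prior's support, so that every zero-training-error $Q$ has population error about $L'$. We do this by making the support contain only $h^*$ and the memorizers.

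The main obstacle is arranging memorizers with population error near $L'>1/2$ that still attain zero training error, with the prior mass of these consistent sets neither so small that the KL exceeds $o(m)/\lambda_m$ nor large enough to admit better alternatives. Because $\lambda_m$ may tend to $0$ arbitrarily slowly, we need $\KL=O(m)$ uniformly, which the bit-flip prior gives. The fine control is making the wrong-off-sample fraction exactly track $L'$ while remaining in the support. Our first attempt is to define each memorizer on a discretized $\mathcal X$ with $n_m\gg m^2$ cells (so there are no sample collisions). The memorizer is $h^*$ xor a random set $R$ whose cells are independently included with probability $\beta$, with $\beta$ chosen so that $L^*(1-\beta)+(1-L^*)\beta=L'$. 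On the other hand, fitting $S$ forces $R$ to match the noise pattern on the $m$ sampled cells, at cost $m\log(1/\min(\beta,1-\beta))$ bits. This is $O(m)$, so the argument above applies, and concentration over the off-sample cells gives $L_\source\to L'$ in expectation.
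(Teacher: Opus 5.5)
There is a genuine gap, in the construction rather than the strategy. Your objective comparison, where a memorizer costs $\lambda_m\cdot O(m)=o(m)$ while mass on the good predictor costs $\Theta(m)$, is the same skeleton the paper uses. But your source cannot produce the required error levels.

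\textbf{The source caps the error at $1-L^*$.} With $Y=h^*(X)\oplus N$ and $N\sim\text{Ber}(L^*)$ independent of $X$, $h^*$ is Bayes-optimal. Every predictor $h$, written as $h^*\oplus R$ with $R=\{h\neq h^*\}$, then satisfies $L_\source(h)=L^*+(1-2L^*)\Pr[h(X)\neq h^*(X)]\le 1-L^*$. ``Flipping on the complement'' is just another $h^*\oplus R$, so it does not help. Every $L'\in(1-L^*,1)$, which the theorem covers, is therefore unreachable.

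The missing idea is to decouple $L^*$ (the competitor's error) from the Bayes error. The paper reuses the construction of \autoref{MDL LB}:
\begin{itemize}
\item $y\sim\text{Ber}(1/2)$, $x[0]=y\oplus\text{Ber}(L^*)$, and $x[i]=y\oplus\text{Ber}(L')$ independently for $i\ge 1$;
\item $h_i(x)=x[i]$ and $\Pi(h_i)=1/(i\log^2 i+10)$.
\end{itemize}
Each $h_i$ with $i\ge1$ has population error exactly $L'$, for any $L'<1$. With probability tending to one, some $\hat i\le (m+1)/(1-L')^m$ has $L_S(h_{\hat i})=0$, so $\Jgreen(\Qgreen_{\lambda_m})\le\lambda_m\log\frac{1}{\Pi(h_{\hat i})}=o(m)$. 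On the other side, monotonicity of $H$ on $[0,1/2]$ and concavity give $mH(L_S(\Qgreen_{\lambda_m}))\ge m\,\Qgreen_{\lambda_m}(h_0)H(L_S(h_0))$. Hence $\Qgreen_{\lambda_m}(h_0)\to0$ in probability, and $L_\source(\Qgreen_{\lambda_m})=\Qgreen_{\lambda_m}(h_0)L^*+(1-\Qgreen_{\lambda_m}(h_0))L'\to L'$.

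\textbf{Two further problems.}
\begin{itemize}
\item Your prior depends on $m$ through the resolution $n_m$, but the theorem requires a single fixed instance $(\Pi,\source)$ as $m\to\infty$. This is repairable by mixing over dyadic resolutions at a cost of $O(\log m)$ bits, but the repair has to be made and then analyzed.
\item With independent cell flips, every pattern $R$ has positive prior mass. So the support contains zero-training-error memorizers with every off-sample flip fraction, for example $R$ equal to exactly the noisy sample cells, whose error is about $L^*$. At $\lambda=0$ that point mass attains $J=0$ and is a minimizer, so the argmin is not pinned to error $L'$. For $\lambda_m>0$, your comparison of objective values says nothing about off-sample behavior. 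You would additionally need the Gibbs form, or the KL chain rule, to show that the minimizer's off-sample conditional equals the prior's.
\end{itemize}
The paper's construction avoids both issues: every hypothesis other than $h_0$ has population error exactly $L'$, so showing the mass on $h_0$ vanishes is all that is needed.
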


On the positive side, the second part of \Cref{MDL UB} allows us to understand just how to take $\lambda_m \rightarrow\infty$ (more and more regularization), in order to not only get a tempered convergence, but to get consistency or ``learning'', i.e., reach $L^*$. By applying the mean value theorem on $U_\lambda^{-1}(H(\cdot))$ in \Cref{MDL UB}(2), we get:

\begin{theorem}
\label{lambda infty}  
For any distribution $Q^*$, source distribution $\source$, valid prior $\Pi$, and any $\lambda>1$ and m:
\begin{equation}\label{thm3.4bd}
   \mathbb{E}_{S \sim \source^m}[L_\source(\Qgreen_{\lambda})] 
    \leq L_\source(Q^*)+ O\Bigg(\frac{1}{1 - 2L_\source(Q^*)} \cdot
    \left[\frac{1}{\lambda} + 
    \lambda\left(\frac{\KL(Q^* \Vert \Pi) + \log m}{m}\right) + \sqrt{\frac{\log^3 (m)}{m}}~\right]
    \Bigg),
\end{equation}
where $O(\cdot)$ only hides an absolute constant, that does not depend on $\source, \Pi$ or anything else.
\end{theorem}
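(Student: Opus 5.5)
The plan is to derive \autoref{lambda infty} from part (2) of \autoref{MDL UB}. All that is needed is a bound on the gap $U_\lambda^{-1}(H(L^*)) - L^*$, where $L^* = L_\source(Q^*)$. Concretely, I would show
\[
U_\lambda^{-1}(H(L^*)) \le L^* + O\!\left(\frac{1}{(1-2L^*)\lambda}\right).
\]
The other error terms in \autoref{MDL UB}(2) carry a factor $1/(1-2L^*)^2$. The statement only has $1/(1-2L^*)$, so I would also check that the \autoref{MDL UB} proof really gives one factor less. Alternatively, I would redo the final step of that proof: the bound is stated for $U_\lambda^{-1}$ applied to $H(L^*)$ plus an error term $\epsilon_m$, and the mean value theorem converts the error through the derivative of $U_\lambda^{-1}\circ$ (entropy) rather than losing a square.

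\textbf{Step 1: compare $U_\lambda$ with $H$.} Write $p_\lambda(q) = 1/\bigl(1+((1-q)/q)^{\lambda/(\lambda-1)}\bigr)$. As $\lambda\to\infty$ the exponent $\lambda/(\lambda-1)\to 1$, so $p_\lambda(q)\to q$. Hence $U_\lambda(q) = \lambda\KL(p_\lambda\Vert q) + H(p_\lambda)$. Since $p_\lambda(q)\le q$ for $q<1/2$, $H(p_\lambda)$ is close to $H(q)$ from below. The function $U_\lambda$ is increasing and satisfies $U_\lambda(q)\le H(q)$; this is the variational characterization from \citet{ZS}, namely $U_\lambda(q)=\min_p \lambda\KL(p\Vert q)+H(p)$. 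So $U_\lambda^{-1}(H(L^*))\ge L^*$, and the task is to bound the excess.

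I would lower bound $U_\lambda(q)\ge H(q) - c/\lambda$ by plugging the minimizer into a second-order expansion. The key inequality is
\[
\lambda\KL(p\Vert q) + H(p) \ \ge\ H(q) + H'(q)(p-q) + \lambda\frac{(p-q)^2}{2q(1-q)\ln 2}.
\]
This uses Pinsker/chi-square type bounds and concavity of $H$, which give $H(p)\ge H(q)+H'(q)(p-q)-O((p-q)^2)$; the quadratic loss is absorbed into the $\lambda$ term for $\lambda$ larger than an absolute constant. Minimizing over $p$ yields
\[
U_\lambda(q)\ \ge\ H(q) - O\!\left(\frac{H'(q)^2\, q(1-q)}{\lambda}\right).
\]
For small $\lambda$ the claim is trivial, since the right-hand side of \eqref{thm3.4bd} exceeds $1$ once the constant is large.

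\textbf{Step 2: invert via the mean value theorem.} Let $q^\dagger = U_\lambda^{-1}(H(L^*))$. Then $H(q^\dagger) - H(L^*) \le H(q^\dagger) - U_\lambda(q^\dagger) = O\bigl(H'(q^\dagger)^2 q^\dagger(1-q^\dagger)/\lambda\bigr)$. The mean value theorem gives $H(q^\dagger)-H(L^*) = H'(\xi)(q^\dagger-L^*)$ with $\xi\in[L^*,q^\dagger]$, so
\[
q^\dagger - L^* \le O\!\left(\frac{H'(q^\dagger)^2}{\lambda H'(\xi)}\right).
\]
Since $H'$ is decreasing, $H'(\xi)\ge H'(q^\dagger)$, and this is at most $O(H'(q^\dagger)/\lambda)$. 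Now $H'(q)=\log\frac{1-q}{q}$ can blow up near $0$, but $q(1-q)H'(q)^2$ is bounded, so I would instead keep the factor $q(1-q)H'(q)$, which is bounded by a constant. The real danger is near $1/2$, where $H'(q^\dagger)\approx (1-2q^\dagger)$ is small: dividing by $H'(\xi)\gtrsim 1-2q^\dagger$ gives the $1/(1-2L^*)$ factor, provided $q^\dagger$ stays bounded away from $1/2$ relative to $L^*$. Handling this regime is the main obstacle. I would split into two cases. If $1-2L^*\lesssim 1/\sqrt{\lambda}$ or similar, the bound exceeds $1/2$ trivially, since $\ell_\lambda<1/2$ for $\lambda\ge 1$. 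Otherwise $1-2q^\dagger\ge (1-2L^*)/2$.

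\textbf{Step 3: combine with the statistical terms.} Finally, I would add the $1/\lambda$ approximation term to the statistical error terms from \autoref{MDL UB}(2), adjusting their $L^*$-dependence as described at the start, which yields \eqref{thm3.4bd}.
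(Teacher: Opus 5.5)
\textbf{Steps 1 and 2 are sound.} Step 1 is the content of \autoref{lemma9}. In fact $\lambda\KL(p\Vert q)+H(p)=H(q)+H'(q)(p-q)+(\lambda-1)\KL(p\Vert q)$ holds exactly, so your ``absorption'' is an identity, and minimizing over $p$ gives $H(q)-U_\lambda(q)=O(1/\lambda)$.

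Step 2 even gives $U_\lambda^{-1}(H(L^*))-L^*=O(1/\lambda)$ with no $1/(1-2L^*)$ factor at all. After you use $H'(\xi)\ge H'(q^\dagger)$, the leftover factor $q^\dagger(1-q^\dagger)H'(q^\dagger)$ is bounded. So the regime near $1/2$ that you call the main obstacle is not where the difficulty lies.

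\textbf{The genuine gap is Step 3.} \autoref{MDL UB}(2), used as a black box, puts $(1-2L^*)^{-2}$ on the statistical terms, and you only announce a repair. Pushing the statistical error $\epsilon_m$ through the mean value theorem for $U_\lambda^{-1}$ needs two things you do not supply:
\begin{enumerate}
\item a lower bound $U_\lambda'(q)\gtrsim 1-2q$ that holds uniformly over all $\lambda>1$;
\item control of the intermediate point, which drifts toward $1/2$ as $H(L^*)+\epsilon_m$ approaches $1$.
\end{enumerate}
The sketch of \autoref{MDL UB}(2) asserts such a derivative bound but does not prove it, and the theorem as stated carries the square. So as written, your argument only yields the weaker bound, with $(1-2L^*)^{-2}$ in front of the $\lambda(\KL(Q^*\Vert\Pi)+\log m)/m$ and $\sqrt{\log^3(m)/m}$ terms.

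\textbf{The fix is where you apply your Step 1 inequality.} Apply $H(q)\le U_\lambda(q)+O(1/\lambda)$ at the random point $q=L_\source(\Qgreen_\lambda(S))$, not at the deterministic point $q^\dagger$. Do this inside the high-probability bound of \autoref{lemma8}, where $\entropicT_\lambda=U_\lambda$ for $\lambda>1$. This gives, with probability $1-\delta$,
\[
H(L_\source(\Qgreen_\lambda(S)))\le H(L^*)+\epsilon_m+O(1/\lambda).
\]
Now you only need to invert $H$, never $U_\lambda$. Since $H^{-1}$ is increasing and convex on $[0,1]$, its increment to the right of $H(L^*)$ is at most the increment times the chord slope $\frac{1/2-L^*}{1-H(L^*)}$. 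By Pinsker, this slope is at most $\frac{\ln 2}{1-2L^*}$. That places a single factor $1/(1-2L^*)$ on the $1/\lambda$ term and on the statistical terms at once. Taking $\delta\approx 1/m$ then gives the expectation bound.

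This is the paper's proof. It makes the detour through $U_\lambda^{-1}(H(L^*))$ and \autoref{MDL UB}(2) unnecessary.
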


Similarly to MDL, to obtain consistency, we need a balanced $\lambda_m\rightarrow\infty$ (just enough regularization) to make both $\frac{1}{\lambda}$ and $\lambda\left(\frac{\KL(Q^* \Vert \Pi) + \log m}{m}\right)$ vanish. The optimal choice is $\lambda_m=\sqrt{\frac{m}{\KL(Q^* \Vert \Pi)+\log m}}$ and that gives us an overall rate of $\tilde{O}(1/\sqrt{m})$. More generally, consistency can be achieved in the following range:

% The optimal setting for $\lambda_m$ in \autoref{lambda infty} is $\lambda_m=\sqrt{\frac{m}{\KL(Q^* \Vert \Pi)+\log m}}$, and with any $\lambda_m \propto \sqrt{m}$ we get consistency with rate $\propto \tilde{O}(1/\sqrt{m})$.  More broadly, to get consistency, we need $\lambda_m\rightarrow\infty$  to ensure that $\frac{1}{\lambda}$ vanishes, but also not too fast such that the term $\lambda\left(\frac{\KL(Q^* \Vert \Pi) + \log m}{m}\right)$ also vanishes. This gives the following corollary:

\begin{corollary}\label{lambda_infty_coro}
    For $1 \ll \lambda_m \ll \frac{m}{\log m}$, and any $Q^*$ with $\KL(Q^* \Vert \Pi) < \infty$ and $L_\source(Q^*)<0.5$, we have $\underset{m \rightarrow \infty}{\lim} \underset{\Pi, \source }{\sup}~ \mathbb{E}_{S \sim \source^m}[L_\source(\Qgreen_{\lambda_m})] \leq L_\source(Q^*)$.
\end{corollary}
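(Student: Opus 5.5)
This follows directly from \autoref{lambda infty} by plugging in $\lambda=\lambda_m$ and showing the error term vanishes uniformly over instances sharing the given $\KL(Q^*\Vert\Pi)$ and $L_\source(Q^*)$. We read the supremum over $(\Pi,\source)$ as ranging over instances admitting a competitor $Q^*$ with $\KL(Q^*\Vert \Pi)\le K$ and $L_\source(Q^*)\le L^*$, for fixed $K<\infty$ and $L^*<1/2$. This matches the $\achieve(L^*)$ convention with $K=10$.

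Since $\lambda_m\to\infty$, we have $\lambda_m>1$ for all large $m$, so \eqref{thm3.4bd} applies. It gives
\[
\mathbb{E}_{S}[L_\source(\Qgreen_{\lambda_m})]\le L_\source(Q^*)+\frac{C}{1-2L_\source(Q^*)}\left[\frac{1}{\lambda_m}+\lambda_m\frac{K+\log m}{m}+\sqrt{\frac{\log^3 m}{m}}\right],
\]
where $C$ is an absolute constant independent of $\Pi$ and $\source$. Because $L_\source(Q^*)\le L^*<1/2$, the prefactor is at most $C/(1-2L^*)$, which is also uniform over instances.

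Next we check that the bracket tends to zero:
\begin{itemize}
\item $1/\lambda_m\to 0$ because $\lambda_m\gg 1$.
\item $\lambda_m\log m/m\to 0$ because $\lambda_m\ll m/\log m$.
\item $\lambda_m K/m\le K\,\lambda_m\log m/m\to 0$ for $m\ge 3$.
\item $\sqrt{\log^3 m/m}\to 0$.
\end{itemize}
The right-hand side therefore equals $L_\source(Q^*)+o(1)$, with the $o(1)$ term independent of $(\Pi,\source)$.

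Taking the supremum over instances and then $m\to\infty$ gives the claimed bound $\limsup_m\sup_{\Pi,\source}\mathbb{E}[L_\source(\Qgreen_{\lambda_m})]\le L_\source(Q^*)$. This $\limsup$ equals the stated limit whenever the limit exists.

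The only real point to get right is uniformity. It relies on the constant in \autoref{lambda infty} being absolute, which the theorem states, and on $1-2L_\source(Q^*)$ being bounded away from zero. For a single fixed $Q^*$, the bound with the supremum inside the limit is immediate.
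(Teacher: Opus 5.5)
Your proposal is correct and is essentially the paper's argument: set $\lambda=\lambda_m$ in \autoref{lambda infty} and observe that every term inside the $O(\cdot)$ vanishes when $1\ll\lambda_m\ll m/\log m$, with uniformity coming from the absolute constant and from the bounded prefactor $1/(1-2L_\source(Q^*))$. Your version is more careful than the paper's one-line proof in two places. You make explicit the class over which the supremum ranges (bounded KL and $L_\source(Q^*)\le L^*<1/2$, in which case the bound should read $L^*$ rather than a per-instance $L_\source(Q^*)$). You also replace the possibly non-existent limit by a $\limsup$.
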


% \begin{corollary}\label{lambda_infty_coro}
%     For $1 \ll \lambda_m \ll \frac{m}{\log m}$, and $L^*<0.5$, we have $\underset{m \rightarrow \infty}{\lim} \underset{(\Pi, \source)\in \achieve(L^*)}{\sup}~ \mathbb{E}_{S \sim \source^m}[L_\source(\Qgreen_{\lambda_m})] \leq L^*$
% Mesrob: I avoided doing this alternatie version because $\achieve$ forces the KL to be less than 10, and that's not required here. We just need bounded KL. Maybe we can intoduce $\achieve_B$, and use $B=10$ $for the lower bounds and $B=\infinity$ for the upper bounds. (Maybe for the camera-ready version.)
% \end{corollary}

To complete this picture, we show how $\lambda_m$ growing too fast (too much regularization) is harmful and the PAC-Bayes rule $\Qgreen_{\lambda_m}$ exhibits \emph{catastrophic underfitting}:
%However, when $\lambda_m = \Omega(m)$, the PAC-Bayes rule $\Qgreen_{\lambda_m}$ over-regularizes and might ``underfit'', leading to catastrophic behavior again:
\begin{theorem}\label{LAMBDA>>M}
    For any $\lambda_m = \Omega(m)$ with $\liminf \frac{\lambda_m}{m}> 10$, any $L^* \in [0,0.5)$, and any $L^* \leq L' < \frac{0.5 - 0.1L^*}{0.9}$, and for $q^*_m = \arg \min _{q}\lambda_m\KL(q \Vert 0.1) + mH(qL^* + (1-q)L')$, there exists
    $(\Pi,\source)\in \achieve(L^*)$
    %a prior $\Pi$, a distribution $Q^*$ with $\KL(Q^* \Vert \Pi) \leq 10$ and source distribution $\source$ with $L_\source(Q^*) = L^*$
    such that $\mathbb{E} L_\source(\Qgreen_{\lambda_m}(S)) = q^*_mL^* + (1-q^*_m)L' + o(1)$. In particular, if $\frac{\lambda_m}{m} \rightarrow \infty$, then $q^*_m \rightarrow 0.1$ and $\mathbb{E} L_\source(\Qgreen_{\lambda_m}(S)) \rightarrow 0.1L^* + 0.9L'$ as $m \rightarrow \infty$.
\end{theorem}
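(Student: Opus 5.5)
We construct an explicit two-level instance, in the spirit of the lower-bound constructions for \autoref{MDL LB}, in which the optimization over $Q$ collapses to a one-dimensional problem in the mass $q$ placed on a ``good'' set.

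\textbf{Construction.} Take $\mathcal{X}=[0,1]$ with uniform marginal and a hypothesis class split into a good part $G$ and a bad part $B$, with $\Pi(G)=0.1$ and $\Pi(B)=0.9$.
\begin{itemize}
\item Each $h\in G$ equals a fixed predictor $h^*$ with population error exactly $L^*$ (take $Y=h^*(X)\oplus N$, $N\sim\text{Ber}(L^*)$ independent). Setting $Q^*=\Pi(\cdot\mid G)$ gives $\KL(Q^*\Vert\Pi)=\log 10\le 10$, so $(\Pi,\source)\in\achieve(L^*)$.
\item $B$ is a continuum of ``memorizing'' hypotheses, each with population error exactly $L'$ and essentially independent random errors. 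For example, $h_\theta$ agrees with $h^*$ except on a random measure set determined by $\theta\sim\Pi(\cdot\mid B)$, tuned so that $L_\source(h_\theta)=L'$ for every $\theta$.
\end{itemize}

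\textbf{Reduction.} By the chain rule for KL, replacing $Q$ restricted to $G$ or to $B$ by the corresponding conditional prior never increases $\KL(Q\Vert\Pi)$. Replacing it on $B$ also changes $L_S(Q)$ only by $\mathbb{E}_{\theta}[L_S(h_\theta)]$, which by the law of large numbers (over $\theta$, for a continuum of independent-like error patterns) concentrates at $L'$ up to $O(1/\sqrt m)$ uniformly. The same holds on $G$, where $L_S(h^*)=L^*+O_P(1/\sqrt m)$.

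One must also argue that concentrating on a favorable sub-part of $B$ does not help. Here we use the regime $\lambda_m/m>10$: shrinking onto a sub-part costs $\lambda_m$ times its log-measure, while the gain in $mH$ is at most $m\cdot O(\text{deviation})$. Both the cost and the gain are linear in the $\KL$ budget, and the ratio favors the prior whenever $\lambda_m\gg m\cdot\max_\theta|H'|$. This is exactly where the condition $\liminf\lambda_m/m>10$ is used, possibly after adjusting the construction so that the deviations are bounded.

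The objective therefore reduces to
\[
\lambda_m\KL(q\Vert 0.1)+mH\bigl(qL^*+(1-q)L'\bigr)+o(m).
\]
The constraint $L'<\frac{0.5-0.1L^*}{0.9}$ guarantees that the mixture error stays below $1/2$ for $q$ near $0.1$, so the constraint $L_S(Q)\le 1/2$ is feasible and inactive near the optimum.

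\textbf{Analysis of the one-dimensional problem.} Since $\lambda_m\KL(q\Vert0.1)$ is strongly convex with curvature of order $\lambda_m$, while $mH(\cdot)$ has second derivative $O(m)$ and $\lambda_m>10m$, the objective is strictly convex. Its minimizer is stable under the $o(m)$ perturbation, so $\hat q=q^*_m+o(1)$. This gives $\mathbb{E}L_\source(\Qgreen_{\lambda_m})=q^*_mL^*+(1-q^*_m)L'+o(1)$, using bounded convergence to pass to expectations.

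When $\lambda_m/m\to\infty$, dividing the objective by $\lambda_m$ shows $q^*_m\to0.1$, and the error tends to $0.1L^*+0.9L'$.

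\textbf{Main obstacle.} The hard step is the reduction, in particular the uniform control of the empirical-error fluctuations over the continuum $B$ that justifies replacing $Q|_B$ by the conditional prior. I would first try a discretized version: many independent random bad hypotheses, with a union bound and the $\lambda_m>10m$ dominance. Only afterwards would I pass to the continuous prior.
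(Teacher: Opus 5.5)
Your treatment of the one-dimensional problem matches the paper's argument. That covers strong convexity of $J(q)=\lambda_m\KL(q\Vert 0.1)+mH(qL^*+(1-q)L')$ from $\liminf\lambda_m/m>10$, the bound $J(\hat q)-J(q^*_m)\le 2\Delta_m$ for a uniform perturbation $\Delta_m=o_P(m)$, hence $|\hat q-q^*_m|^2\le 4\Delta_m/(cm)\to 0$, bounded convergence, and $q^*_m\to 0.1$ when $\lambda_m/m\to\infty$. The genuine gap is the step you flag yourself: reducing an arbitrary $Q$ on $G\cup B$ to that one-parameter problem. You need a uniform bound $\Jgreen(Q,S)\ge\lambda_m\KL(Q(G)\Vert 0.1)+mH\bigl(Q(G)L^*+(1-Q(G))L'\bigr)-o(m)$ over \emph{all} $Q$, including those that reweight $B$ toward sub-parts with small empirical error. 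Your sketch does not supply this, for three reasons. With independent-like error patterns some $h_\theta$ interpolate the sample, so $|H'|$ is unbounded on the relevant range. The hypothesis gives only $\lambda_m>10m$, not $\lambda_m\gg m$. And the trade-off is not linear versus linear. What would actually rescue it is a large-deviation statement on the realized sample, $\Pi\bigl(\{\theta: L_S(h_\theta)\le L'-\epsilon\}\mid B\bigr)=2^{-\Theta(m\epsilon^2)}$. Then the extra KL cost of order $\lambda_m m\epsilon^2$, set against an entropy gain of order $m\epsilon$, leaves a net gain of only $O(m/\lambda_m)=O(1)$. However, ``a continuum of independent error patterns'' is not jointly measurable, so $B$ would have to be built with real care before this can even be stated.

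The missing observation is that nothing needs to be memorized here. In this regime the posterior stays pinned near the prior, so a single bad hypothesis suffices. The paper takes $\{h_0,h_1\}$ from the construction of \autoref{MDL LB}, with $\Pi(h_0)=0.1$, $\Pi(h_1)=0.9$, $L_\source(h_0)=L^*$, $L_\source(h_1)=L'$. Every $Q$ is then $q\delta_{h_0}+(1-q)\delta_{h_1}$, with $\KL(Q\Vert\Pi)=\KL(q\Vert 0.1)$ and $L_S(Q)=qL_S(h_0)+(1-q)L_S(h_1)$ exactly, so the reduction is an identity. The bound $\Delta_m=m\sup_{q\in[0,1]}\bigl|H\bigl(qL_S(h_0)+(1-q)L_S(h_1)\bigr)-H\bigl(qL^*+(1-q)L'\bigr)\bigr|=o_P(m)$ then follows from the law of large numbers for two fixed predictors and uniform continuity of $H$.

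Your one-dimensional step also needs two smaller fixes. First, the second derivative of $mH(L(q))$, namely $-m(L'-L^*)^2/\bigl(L(q)(1-L(q))\bigr)$, is not $O(m)$ with a constant beaten by $10$ when $L^*$ is small or zero, since $L(q)\to L^*$ as $q\to 1$. You must play it against the growing curvature $\lambda_m/\bigl(q(1-q)\bigr)$, as the paper does via $J''(q)\ge\frac{1}{1-q}\bigl(\lambda_m-\frac{mL'}{1-L'}\bigr)$ with $L'/(1-L')\le 1.25$. Second, feasibility of $L_S(Q)\le 1/2$ at the optimum comes from $q^*_m\ge 0.1$ together with $L(q)$ being decreasing in $q$. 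The inequality $q^*_m\ge 0.1$ holds because $J'(0.1)=m(L^*-L')H'(L(0.1))\le 0$ when $L(0.1)<1/2$. It does not come from $q^*_m$ being near $0.1$, which need not hold when $\lambda_m/m$ stays bounded.
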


In this section, we have unified the limiting error behavior of the more general PAC-Bayes rule \eqref{eq:PAC-Bayes-intro} with that of the MDL rule for discrete priors with point-mass posteriors, studied by \citet{ZS}. To summarize this behavior, assuming $L^*<\frac{1}{2}$, we have catastrophic overfitting when $\lambda_m \to 0$ (limiting error reaches $1$), tempered overfitting for fixed $\lambda\geq 1$ (limiting error remains below $\frac{1}{2}$, tempered overfitting for $0<\lambda<1$ only if $L^*<H^{-1}(\lambda)$, benign behavior when $\lambda_m \to \infty$ but remains $o\left(\frac{m}{\log(m)}\right)$ leading to consistency/learning (limiting error reaches $L^*$, with the best convergence rate achieved by $\lambda_m \approx \sqrt{m}$), and, lastly, catastrophic underfitting if $\lambda_m=\omega(m)$ (limiting error reaches $\frac{1}{2}$).

With this unifying perspective on limiting error, which is not evident in advance, we are motivated in the rest of paper to establish more analogies across these paradigms. In particular, we turn our attention to asking how this PAC-Bayes rule relates to Bayesian prediction, and how the regularization parameter $\lambda$, which we see is crucial for consistency, can be interpreted from a probabilistic perspective.

\begin{figure}[h]
    \centering
    \begin{minipage}{0.5\textwidth}
        \centering
        \includegraphics[width=\linewidth]{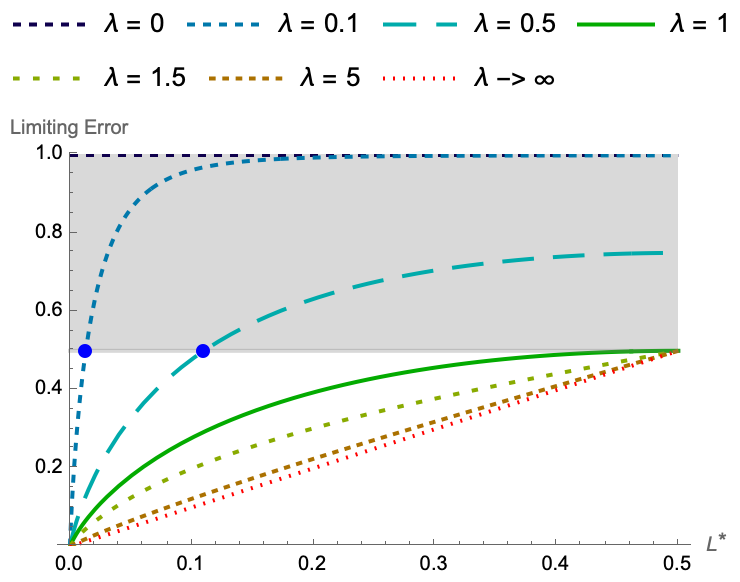}

    \end{minipage}%
    \hfill
    \begin{minipage}{0.45\textwidth}
        \small
        \caption{\small \label{fig1}
  The function $\ell_\lambda(L^*)$ of equation \eqref{ell}. When the PAC-Bayes rule is used with a fixed $\lambda$, the corresponding curve describes the best possible worst-case guarantee on the limiting error, across agnostic noise levels $L^*$. The curves are always above the diagonal (overfitting), but approach it as $\lambda\to \infty$, which is necessary for consistency (learning). For $\lambda\geq 1$, the overfitting is ``tempered'', meaning that the limiting error is less than $\frac{1}{2}$ (better than chance). For $\lambda<1$, this is only the case for $L^*<H^{-1}(\lambda)$, indicated by blue dots. For $\lambda=0$ the overfitting can be catastrophic, with worst case limiting error always $1$. [Reproduced with permission from \citet{ZS}.]}
   \label{fig2}
    \end{minipage}

\end{figure}

\section{PAC-Bayes as Empirical Bayes}\label{section:EB}
%%%%%%%%%%%%%%%%%%%%%%%%%%%%%%%%%%%%%%%%%%%%%%%%%%%%%%%%%%%%%%%%%%%%%%%%%%%%%%%%
%%%%%%%%%%%%%%%%%%%%%%%%%%%%%%%%%%%%%%%%%%%%%%%%%%%%%%%%%%%%%%%%%%%%%%%%%%%%%%%%

It is well-known that the solution of the standard PAC-Bayes posterior has a Gibbs form \citep{mcallester2003pac, catoni2007pac}, where the exponential tilt is given by the errors $L_S(h)$. It turns out that the solution to the PAC-Bayes learning rule \eqref{defn:green}%\natinote{fix reference}
is given by a similar `standard' Gibbs posterior, also featuring $L_S(h)$ in the exponent.

\begin{theorem}\label{thm:EBGibbs}
    Assume $L_S(\Pi)< \frac{1}{2}$. Then, the solution of PAC-Bayes learning rule \eqref{defn:green} is given by:
\begin{align}
    d\QEB_{\lambda}(h) \propto d\Pi(h) 2^{-\left(\frac{m}{\lambda}H'(L_S(\QEB_{\lambda}))\right)\,L_S(h)},
\end{align}
where $L_S(h)$ serves as the energy function of the Gibbs measure and there exists a unique choice of inverse temperature $\frac{m}{\lambda}H'(L_S(\QEB_{\lambda}))$, that is self-consistent.
\end{theorem}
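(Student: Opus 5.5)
Since $H$ is concave and $Q\mapsto L_S(Q)$ is linear, the objective $\Jgreen(Q,S)$ is not convex in $Q$. My approach is to linearize the entropy term at the optimum, reduce to a standard Gibbs variational problem, and then handle the self-consistency of the temperature separately.

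\textbf{Step 1: existence and optimality condition.} Let $\alpha^\star = L_S(\QEB_{\lambda})$ for any minimizer $\QEB_\lambda$; existence follows from lower semicontinuity of $\KL$ and compactness of its sublevel sets in the weak topology. Since $H$ is concave, for any feasible $Q$ we have $H(L_S(Q)) \le H(\alpha^\star) + H'(\alpha^\star)(L_S(Q)-\alpha^\star)$, and the right-hand side is the linear majorant tangent at $\QEB_\lambda$. Therefore $\QEB_\lambda$ also minimizes the convex surrogate $\tilde J(Q)= m H'(\alpha^\star) L_S(Q) + \lambda\KL(Q\Vert\Pi)$ over the feasible set. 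To see this, suppose some $Q$ had $\tilde J(Q)<\tilde J(\QEB_\lambda)$. Then by the concavity bound $\Jgreen(Q,S)\le \tilde J(Q)+\text{const} < \Jgreen(\QEB_\lambda,S)$, which is a contradiction.

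\textbf{Step 2: Gibbs form.} By the Donsker--Varadhan variational formula, the unconstrained minimizer of $\beta L_S(Q)+\KL(Q\Vert\Pi)$ is the Gibbs measure $dQ_\beta\propto d\Pi\, 2^{-\beta L_S(h)}$, with logarithms in base 2. Here we set $\beta = \frac{m}{\lambda}H'(\alpha^\star)$. Because $\alpha^\star\le 1/2$, we have $\beta\ge 0$, so $L_S(Q_\beta)\le L_S(\Pi)<1/2$ and the constraint is inactive. Uniqueness of the minimizer of the strictly convex surrogate then gives $\QEB_\lambda = Q_\beta$, which is the claimed form.

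\textbf{Step 3: uniqueness of the self-consistent temperature.} Define $g(\beta)=L_S(Q_\beta)$. This function is continuous and nonincreasing in $\beta$, since its derivative is $-\ln 2\cdot\mathrm{Var}_{Q_\beta}(L_S)$. Self-consistency requires $\beta = \frac{m}{\lambda}H'(g(\beta))$. Compose the right-hand side: $H'$ is decreasing on $(0,1/2]$ and $g$ is nonincreasing, so $\beta\mapsto \frac{m}{\lambda}H'(g(\beta))$ is nondecreasing. Hence a monotonicity comparison alone does not settle uniqueness, and I expect this to be the main obstacle.

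To handle it, I would parametrize the Gibbs family instead by the error level $\alpha\in[\inf L_S, L_S(\Pi)]$ and reduce the problem to a one-dimensional optimization. Let $\phi(\alpha)=\min\{\KL(Q\Vert\Pi): L_S(Q)=\alpha\}$, which is convex, decreasing, and attained by the Gibbs measure with $\phi'(\alpha)=-\beta(\alpha)$. The objective becomes $F(\alpha)= mH(\alpha)+\lambda\phi(\alpha)$, and self-consistent temperatures correspond to stationary points $F'(\alpha)=0$. Uniqueness would then be argued by showing that the minimizer of $F$ is the unique stationary point. If that fails in general, the fallback is to interpret the statement as asserting uniqueness for the optimal solution: the argmin fixes $\alpha^\star$, and hence fixes $\beta$. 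Degenerate cases, where $\mathrm{Var}_\Pi(L_S)=0$, give constant $g$ and can be checked directly.
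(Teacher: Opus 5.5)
Your Steps 1--2 are correct, and they take a different and more rigorous route than the paper. The paper writes a Lagrangian and sets the pointwise derivative $\partial\mathcal{L}/\partial Q(h_0)$ to zero. Since $Q\mapsto H(L_S(Q))$ is concave, that only characterizes critical points of a non-convex functional; it never shows the minimizer is one of them.

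Your tangent-line majorization, combined with the identity $\beta L_S(Q)+\KL(Q\Vert\Pi)=\KL(Q\Vert Q_\beta)-\log Z_\beta$, shows that every global minimizer is exactly the Gibbs measure with $\beta=\frac{m}{\lambda}H'(L_S(\hat Q_\lambda))$. Existence of a self-consistent $\beta$ then follows from existence of a minimizer, whereas the paper uses an intermediate-value argument.

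Two small repairs are needed:
\begin{itemize}
\item Treat $\alpha^\star=0$ separately. The tangent slope $H'(0)$ is infinite, and the minimizer is then $\Pi$ conditioned on $\{L_S=0\}$, the ``$\beta=\infty$'' case.
\item Run the compactness argument in the $\sigma(L^1(\Pi),L^\infty(\Pi))$ topology on densities. There $Q\mapsto L_S(Q)$ is continuous, because $h\mapsto L_S(h)$ is merely bounded and measurable.
\end{itemize}

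In Step 3, your primary plan cannot succeed, because the self-consistent temperature is not unique in general: $F=mH+\lambda\phi$ is concave plus convex and can have several stationary points. Here is an example.
\begin{itemize}
\item Take two hypotheses with $L_S(h_1)=0.01$, $L_S(h_2)=0.4$, $\Pi(h_1)=2^{-30}$, and $0.39\,m/\lambda=10$ (e.g.\ $m=100$, $\lambda=3.9$).
\item Write $q=Q_\beta(h_1)$. Self-consistency is then equivalent to $D(q)=0$, where $D(q)=\log_2\frac{\Pi(h_1)}{1-\Pi(h_1)}+10\,H'(0.4-0.39q)-\log_2\frac{q}{1-q}$. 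This $D$ is $-\frac{1}{\lambda}$ times the derivative of the objective in $q$.
\item The signs alternate: $D>0$ at $q=\Pi(h_1)$; $D(1/2)\approx-30+19.6<0$; $D(1-2^{-5})\approx-30+54.6-5>0$; and $D\to-\infty$ as $q\to 1$.
\end{itemize}
So there are three self-consistent temperatures, corresponding to two local minima of the objective separated by a local maximum.

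The paper's own proof does not avoid this. It asserts that the two decreasing curves $L_S(\hat Q_\beta)$ and $H'^{-1}(\beta)$ cross exactly once. That is exactly the monotonicity non sequitur you flagged: the intermediate value theorem gives a crossing, not a unique one, and in this example they cross three times.

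So the only correct form of the uniqueness claim is your fallback: the minimizer determines $\alpha^\star$, and hence $\beta$. Even that presumes a unique argmin. In the same example, raising $\Pi(h_1)$ from $2^{-30}$ to $2^{-21}$ moves the global minimum from near $\delta_{h_2}$ to near $\delta_{h_1}$, while both local minima persist. Hence some intermediate prior yields two minimizers with different temperatures. You should drop the unique-stationary-point plan and state the result in this weaker form.
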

\begin{proof}
This can be shown using a variational argument. A simple way to derive it is to let the Lagrange multiplier be $\mu$, and the Lagrangian $\mathcal{L} =  mH(L_S(Q)) +  \lambda \KL(Q \Vert \Pi) + \mu (\int Q - 1)$. Then by taking the derivative on a single predictor $h_0$, we have that $\frac{\partial \mathcal{L}}{\partial Q(h_0)} = mH'(L_S(Q))L_S(h_0) + \lambda\left(\frac{1}{\ln 2}+ \log \frac{dQ(h_0)}{d\Pi(h_0)}\right) + \mu$. Setting the derivative to zero yields that $d\QEB_{\lambda}(h_0) \propto d\Pi(h_0) 2^{-\frac{m}{\lambda}H'(L_S(\QEB_{\lambda}))L_S(h_0)}$.

Note that the inverse temperature is defined intrinsically by making the solution self-consistent. To see that such a choice exists and is unique, first reparametrize possible choices of posteriors with $\beta=H'(L_S(\hat Q))$. Note that as $\beta$ ranges from $0$ to $\infty$, $L_S(\hat Q_\beta)$ monotonically and continuously decreases from $L_S(\Pi)<\frac{1}{2}$ to $\mathrm{ess}\inf_{h \sim \Pi} L_S(h)\geq 0$. Over the same range of $\beta$, $H^{\prime -1}(\beta) = \frac{1}{1+e^\beta}$ monotonically and continuously decreases from $\frac{1}{2}$ to $0$. These two curves either coincide at $\infty$ or, if not, $H^{\prime -1}(\beta)$ must cross $L_S(\hat Q_\beta)$ at exactly one $\beta$, as desired. This furthermore shows that, under the assumption $L_S(\Pi)<\frac{1}{2}$, we automatically get $L_S(\hat Q)<\frac{1}{2}$ as in \eqref{defn:green}.
\end{proof}

The Gibbs posterior in \autoref{thm:EBGibbs} ``feels'' like  a Bayesian posterior, but can we relate it to one more precisely?  Can we understand the implicitly defined temperature in $\QEB_{\lambda}$ from a Bayesian perspective and describe $\QEB_{\lambda}$ as a Bayesian posterior?

It turns out that $\QEB_{\lambda}$ is an {\em empirical} Bayesian posterior, and thus the learning rule \eqref{defn:green} can be viewed as an empirical Bayes procedure, under a specific source model. Consider the model $y|x = h(x) \oplus \text{Ber}(\eta)$ for some noise level $\eta$, and $h \sim \Pi$ is drawn from the prior distribution $\Pi$. To keep what follows clear, note that this model's joint distribution is of the form
\begin{align}
    \Pi(h)   P(\eta)  P(S|\eta,h)
            \textrm{ with }  P(S|\eta,h) = \eta^{mL_S(h)} (1-\eta)^{m-mL_S(h)}. 
\end{align}
For $\lambda = 1$, $\QEB_1$ is equivalent to the Empirical Bayes procedure consisting of estimating $\eta$ using maximum marginal likelihood and then plugging into the posterior of $h$:
\begin{theorem}\label{greenEB:lambda=1}
    Let $\hat{\eta} = \arg \max_{\eta} P(S|\eta) = \arg \max_{\eta} \mathbb{E}_{h \sim \Pi}P(S|\eta,h)$. Then $\QEB_{1}(h) \equiv P(h|S, \hat{\eta})$.
\end{theorem}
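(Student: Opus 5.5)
We compare the two sides as Gibbs measures and show that their inverse temperatures coincide. For any fixed $\eta\in(0,1/2)$ the Bayesian posterior is
\[
dP(h\mid S,\eta)\propto d\Pi(h)\,\eta^{mL_S(h)}(1-\eta)^{m-mL_S(h)}\propto d\Pi(h)\,2^{-m\log\frac{1-\eta}{\eta}\,L_S(h)}.
\]
This is a Gibbs measure with energy $L_S(h)$ and inverse temperature $m\log\frac{1-\eta}{\eta}=mH'(\eta)$. By \autoref{thm:EBGibbs} with $\lambda=1$, $\QEB_1$ is the Gibbs measure with inverse temperature $mH'(L_S(\QEB_1))$. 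It therefore suffices to show that $\hat\eta$ satisfies the same self-consistency equation, namely $\hat\eta=L_S(P(\cdot\mid S,\hat\eta))$. Uniqueness of the self-consistent temperature, also from \autoref{thm:EBGibbs}, then gives $P(h\mid S,\hat\eta)\equiv\QEB_1(h)$.

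To derive the equation for $\hat\eta$, write the marginal likelihood as $P(S\mid\eta)=\mathbb{E}_{h\sim\Pi}\bigl[\eta^{mL_S(h)}(1-\eta)^{m-mL_S(h)}\bigr]$ and differentiate its logarithm in $\eta$. Differentiation under the integral is justified because the integrand is a bounded polynomial in $\eta$. This gives
\[
\frac{d}{d\eta}\ln P(S\mid\eta)=\mathbb{E}_{h\sim P(\cdot\mid S,\eta)}\Bigl[\frac{mL_S(h)}{\eta}-\frac{m-mL_S(h)}{1-\eta}\Bigr]=\frac{m\,\bigl(L_S(P(\cdot\mid S,\eta))-\eta\bigr)}{\eta(1-\eta)}.
\]
So every interior stationary point satisfies $\eta=L_S(P(\cdot\mid S,\eta))$, which is exactly the fixed-point equation above.

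An alternative, more variational route gives the same conclusion and also settles maximality. By the Donsker--Varadhan (Gibbs) variational principle,
\[
-\log P(S\mid\eta)=\min_Q\Bigl\{m\bigl[L_S(Q)\log\tfrac1\eta+(1-L_S(Q))\log\tfrac1{1-\eta}\bigr]+\KL(Q\Vert\Pi)\Bigr\}.
\]
Maximizing the marginal likelihood over $\eta$ therefore means minimizing jointly over $(\eta,Q)$. For fixed $Q$, the minimizer over $\eta$ is $\eta=L_S(Q)$, where the bracket equals the cross-entropy and collapses to $H(L_S(Q))$. The joint minimum is thus $\min_Q\, mH(L_S(Q))+\KL(Q\Vert\Pi)=\min_Q \Jgreen(Q,S)$ with $\lambda=1$. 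The minimizing $Q$ for $\eta=\hat\eta$ is $P(\cdot\mid S,\hat\eta)$. Hence the $Q$-component of the joint optimum is $\QEB_1$ and the $\eta$-component is $\hat\eta=L_S(\QEB_1)$, and $P(\cdot\mid S,\hat\eta)=\QEB_1$.

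The main obstacle is domain and constraint bookkeeping rather than the calculation itself. The first issue is that the maximization over $\eta$ in the statement is implicitly over $[0,1/2]$, or is symmetric. If $\eta>1/2$ were allowed, the "negated" fit could win, matching the $L_S(Q)\le 1/2$ constraint in \eqref{defn:green}. The second issue is uniqueness and attainment of $\hat\eta$, including boundary cases where $\hat\eta=0$ (realizable case) and the Gibbs measure degenerates to the $\mathrm{ess}\inf$ set. I would handle these by restricting to $\eta\le1/2$ under the assumption $L_S(\Pi)<1/2$ used in \autoref{thm:EBGibbs}, and then appealing to that theorem's monotonicity argument, which gives a unique crossing. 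The variational identity shows that any maximizer of $P(S\mid\eta)$ yields a minimizer of $\Jgreen$, and conversely, so the identification holds even without separately proving concavity in $\eta$.
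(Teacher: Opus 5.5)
Your first argument is essentially the paper's proof. You write $P(\cdot\mid S,\eta)$ as a Gibbs measure with inverse temperature $mH'(\eta)$, differentiate $\ln P(S\mid\eta)$ to get $\hat\eta=L_S(P(\cdot\mid S,\hat\eta))$, and then match this with the self-consistent Gibbs form of \autoref{thm:EBGibbs}. The paper does the same computation in the parametrization $\theta=\log\frac{\eta}{1-\eta}$.

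Your second, Donsker--Varadhan argument is a genuinely different route, and it is the stronger one. The first route, and the paper's, only shows that $\hat\eta$ is \emph{a} self-consistent temperature. To conclude $P(\cdot\mid S,\hat\eta)=\QEB_1$ it then needs the uniqueness of the self-consistent temperature claimed in \autoref{thm:EBGibbs}. That uniqueness does not follow from the monotonicity argument given there, because both $L_S(\hat Q_\beta)$ and $(H')^{-1}(\beta)$ are decreasing in $\beta$. In fact it fails in general.

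The self-consistency equation is exactly the stationarity condition of $P(S\mid\eta)$, and the marginal likelihood can be multimodal. Take $\Pi$ supported on two hypotheses with training errors $a<b<1/2$, with weight about $2^{-m(H(b)-H(a))}$ on the better one. For large $m$ there are then three self-consistent temperatures: two local maxima of $P(S\mid\eta)$ near $a$ and $b$, and a local minimum between them.

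The variational identity avoids this problem entirely. The energy $-\log P(S\mid\eta,h)$ is affine in $L_S(h)$, so $-\log P(S\mid\eta)=\min_Q F(\eta,Q)$, where $F$ depends on $Q$ only through $L_S(Q)$ and $\KL(Q\Vert\Pi)$. Minimizing over $\eta$ first turns $F$ into $\Jgreen$ with $\lambda=1$. Hence global maximizers of the marginal likelihood correspond exactly to global minimizers of $\Jgreen$. Uniqueness of the Gibbs minimizer for fixed $\eta$ pins the $Q$-component to $P(\cdot\mid S,\hat\eta)$. Uniqueness of $\hat\eta$, which is implicit in the $\arg\max$, then transfers to uniqueness of $\QEB_1$. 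You should make this the main proof and drop the appeal to the ``unique crossing''.

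One bookkeeping line should be added. With $\eta$ restricted to $[0,1/2]$, the inner minimum over $\eta$ for a $Q$ with $L_S(Q)>1/2$ is attained at $\eta=1/2$. Its value is $m+\KL(Q\Vert\Pi)\ge m>mH(L_S(\Pi))$, which is the value at $Q=\Pi$. So under $L_S(\Pi)<1/2$ such $Q$ never compete, and the joint problem coincides with the constrained problem \eqref{defn:green}.
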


What about for general $\lambda \neq 1$? In this case, $\QEB_{\lambda}$ can once again be viewed as Empirical Bayes with a specific $m$-dependent prior $P_{\lambda}(\eta) = ((1-\eta)^\lambda + \eta^\lambda)^{-\frac{m}{\lambda}}$ on the noise level $\eta$. 
\begin{theorem}\label{greenEB:m_prior}
    Consider the prior $P_{\lambda}(\eta) = ((1-\eta)^\lambda + \eta^\lambda)^{-\frac{m}{\lambda}}$. Let $\hat{\eta} = \arg \max_{\eta} P(S|\eta)P_{\lambda}(\eta) = \arg \max_{\eta} \mathbb{E}_{h \sim \Pi}P(S|\eta,h)P_{\lambda}(\eta)$. Then $\QEB_{\lambda}(h) \equiv P(h|S, \hat{\eta}) \propto \Pi(h) P(S|\hat{\eta}, h) $.
\end{theorem}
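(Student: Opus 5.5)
The plan is to reduce both sides to a one-parameter family of Gibbs posteriors and then match the parameter. For a fixed $\eta\in(0,\tfrac12)$, Bayes' rule in the model gives
\[
P(h\mid S,\eta)\propto \Pi(h)\,\eta^{mL_S(h)}(1-\eta)^{m-mL_S(h)}\propto \Pi(h)\,2^{-m\log\frac{1-\eta}{\eta}\,L_S(h)} .
\]
This is a Gibbs measure with inverse temperature $\beta(\eta)=m\log\frac{1-\eta}{\eta}$. By \autoref{thm:EBGibbs}, $\QEB_\lambda$ is the Gibbs measure with inverse temperature $\frac{m}{\lambda}H'(L_S(\QEB_\lambda))=\frac{m}{\lambda}\log\frac{1-\hat L}{\hat L}$, where $\hat L=L_S(\QEB_\lambda)$. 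So it suffices to show that the maximizer $\hat\eta$ of $P(S\mid\eta)P_\lambda(\eta)$ satisfies $\log\frac{1-\hat\eta}{\hat\eta}=\frac1\lambda\log\frac{1-\hat L}{\hat L}$, that is,
\[
\hat L=\frac{\hat\eta^\lambda}{\hat\eta^\lambda+(1-\hat\eta)^\lambda}.
\]

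To get this, I write the log marginal posterior in $\eta$:
\[
F(\eta)=m\log(1-\eta)+\log \mathbb{E}_{h\sim\Pi}\Bigl[2^{-\beta(\eta)L_S(h)}\Bigr]-\tfrac{m}{\lambda}\log\bigl(\eta^\lambda+(1-\eta)^\lambda\bigr).
\]
I differentiate and use the standard log-partition identity: the derivative of $\log\mathbb{E}_\Pi 2^{-\beta L_S(h)}$ with respect to $\beta$ is $-\ln 2\cdot L_S(Q_\beta)$, where $Q_\beta$ is the Gibbs posterior at inverse temperature $\beta$. Since $\beta'(\eta)=-\frac{m}{\ln 2\,\eta(1-\eta)}$, the first two terms contribute
\[
-\frac{m}{\ln2(1-\eta)}+\frac{m\,L_S(Q_{\beta(\eta)})}{\ln 2\,\eta(1-\eta)}=\frac{m\,\bigl(L_S(Q_{\beta(\eta)})-\eta\bigr)}{\ln2\,\eta(1-\eta)}.
\]
A direct computation shows that the prior term contributes
\[
-\frac{m}{\ln 2\,\eta(1-\eta)}\bigl(r_\lambda(\eta)-\eta\bigr),\qquad r_\lambda(\eta)=\frac{\eta^\lambda}{\eta^\lambda+(1-\eta)^\lambda}.
\]
The stationarity condition therefore collapses to $L_S(Q_{\beta(\eta)})=r_\lambda(\eta)$, which is exactly the self-consistency relation above. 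As a check, for $\lambda=1$ the prior is constant and the condition is $L_S(Q_{\beta(\eta)})=\eta$, which recovers \autoref{greenEB:lambda=1} as a special case.

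The remaining step is uniqueness, so that the argmax is the stationary point and coincides with the $\QEB_\lambda$ fixed point. Reparametrize by $\beta\in(0,\infty)$, which corresponds to $\eta\in(0,\tfrac12)$. The map $\beta\mapsto L_S(Q_\beta)$ is continuous and nonincreasing, decreasing from $L_S(\Pi)<\tfrac12$. The map $\beta\mapsto r_\lambda(\eta(\beta))=H'^{-1}(\beta/m\cdot\lambda)$ is strictly decreasing from $\tfrac12$ to $0$. This is the same crossing argument as in the proof of \autoref{thm:EBGibbs}, so there is at most one crossing, and it is the unique $\beta$ identified there. Near $\eta=0$ the derivative $F'$ is positive, since $r_\lambda(\eta)<L_S(Q_\beta)$ there; after the crossing $F'$ is negative. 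So the crossing is the global maximizer on $(0,\tfrac12]$. I will also need to rule out $\eta>\tfrac12$, and I expect to handle this by symmetry or by restricting the model to $\eta\le\tfrac12$, mirroring the constraint $L_S(Q)\le\tfrac12$ in \eqref{defn:green}.

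The main obstacle is this sign and monotonicity bookkeeping: confirming that the crossing is a global maximum rather than merely a critical point, and handling the degenerate case where the curves meet only at $\beta=\infty$, which corresponds to $\hat\eta=0$ and a posterior concentrated on the essential infimum. The derivative computation itself is routine.
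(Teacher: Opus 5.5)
\textbf{Verdict: correct core computation, but the uniqueness step fails.} Your first-order computation is correct and follows the paper's route. The paper likewise differentiates $\log P(S\mid\eta)+\log P_\lambda(\eta)$ using the log-partition identity from the proof of \autoref{greenEB:lambda=1}. It reduces stationarity to $\frac{1-\hat\eta}{\hat\eta}=\bigl(\frac{1-L}{L}\bigr)^{1/\lambda}$ with $L=L_S(P(\cdot\mid S,\hat\eta))$, which is your condition $L_S(Q_{\beta(\eta)})=r_\lambda(\eta)$, and then matches the Gibbs form of \autoref{thm:EBGibbs}. Your sign for the prior term is the correct one; the paper's displayed $\frac{d}{d\eta}\log P_\lambda$ has a $+$ before the second term where a $-$ belongs.

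\textbf{The gap: monotonicity does not give a single crossing.} Two decreasing curves can cross several times, and here they do. $L_S(Q_\beta)$ can drop by a constant amount over a $\beta$-window whose width does not grow with $m$, whereas $1/(1+2^{\lambda\beta/m})$ moves only on the scale $m/\lambda$.

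\emph{Example.} Take $\lambda=1$ and a prior putting mass $2^{-m}$ on an $h$ with $L_S(h)=0.05$ and the rest on an $h$ with $L_S(h)=0.4$.
Then $L_S(Q_\beta)$ stays near $0.4$ until $\beta\approx m/0.35$, and then falls to $0.05$ at a point where $1/(1+2^{\beta/m})\approx 0.12$.
Meanwhile $1/(1+2^{\beta/m})$ passes $0.4$ at $\beta\approx 0.58m$ and $0.05$ at $\beta\approx 4.25m$.
That gives three crossings, i.e.\ two interior local maxima of $F$, at $\eta\approx 0.4$ and $\eta\approx 0.05$. Replacing $2^{-m}$ by $2^{-m/2}$ switches which of them is global.

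\emph{Structural reason.} In $u=\ln\frac{\eta}{1-\eta}$ one has $\ln[P(S\mid\eta)P_\lambda(\eta)]=\ln\mathbb{E}_\Pi e^{muL_S(h)}-\frac{m}{\lambda}\ln(1+e^{\lambda u})$, a difference of convex functions. Also, $J_\lambda$ is non-convex in $Q$ because $H$ is concave.

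So stationarity only tells you that $P(\cdot\mid S,\hat\eta)$ is \emph{some} self-consistent Gibbs measure, not that it is the global minimizer $\hat Q_\lambda$. Citing the crossing argument of \autoref{thm:EBGibbs} does not repair this, because that uniqueness claim has the same defect. The paper's proof of this theorem also relies on it implicitly.

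\textbf{The missing idea: the Gibbs variational identity, which needs no uniqueness.} For each $\eta$, $\log P(S\mid\eta)=\max_Q\{\mathbb{E}_Q\log P(S\mid\eta,h)-\KL(Q\Vert\Pi)\}$, attained uniquely at $P(\cdot\mid S,\eta)$. Moreover $\mathbb{E}_Q\log P(S\mid\eta,h)=m[L_S(Q)\log\eta+(1-L_S(Q))\log(1-\eta)]$ depends on $Q$ only through $L_S(Q)$.
For $L\le\frac12$, $\max_{\eta\le 1/2}\{m[L\log\eta+(1-L)\log(1-\eta)]+\log P_\lambda(\eta)\}=-\frac m\lambda H(L)$; this is the computation in the proof of \autoref{PL_lambda_prior}.
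For $L>\frac12$ this maximum equals $-\frac m\lambda$, which $Q=\Pi$ beats since $L_S(\Pi)<\frac12$.

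Exchanging the two maximizations gives $\max_{\eta\le 1/2}\log[P(S\mid\eta)P_\lambda(\eta)]=-\frac1\lambda\min_{L_S(Q)\le 1/2}J_\lambda(Q,S)$. Hence:
\begin{itemize}
\item If $\hat\eta$ is a global maximizer, then $(\hat\eta,P(\cdot\mid S,\hat\eta))$ is a joint maximizer, so $P(\cdot\mid S,\hat\eta)$ is a global minimizer of $J_\lambda$.
\item Conversely, $\hat Q_\lambda$ paired with its optimal $\eta$ is a joint maximizer. That $\eta$ therefore maximizes the marginal, and $\hat Q_\lambda=P(\cdot\mid S,\eta)$ by uniqueness of the inner maximizer.
\end{itemize}
This matches global optima directly and makes your crossing analysis unnecessary. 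It also handles the degenerate case you flagged, where the supremum is approached as $\eta\to 0$. Finally, it shows that restricting to $\eta\le\frac12$ is exactly what enforces the constraint $L_S(Q)\le\frac12$.
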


Note that when $\lambda = 1$, the prior $P_{\lambda}(\eta) = ((1-\eta)^\lambda + \eta^\lambda)^{-\frac{m}{\lambda}} = 1$, i.e., uniform prior, which reduces to the previous case. Moreover, the following theorem shows that any fixed prior on $\eta$ that does not scale with $m$ behaves the same as uniform prior (i.e., $\lambda=1$).

\begin{theorem}\label{greenEB:fixed_prior}
    For any fixed prior $P_{\lambda}(\eta)$ which does not depend on $m$, $\frac{dP(h|S, \hat{\eta})}{d \QEB_{1}(h)} \rightarrow 1$ uniformly as $m \rightarrow \infty$.
\end{theorem}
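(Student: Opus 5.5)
The plan is to compare the two MAP estimates of $\eta$ directly. Once the maximizers are shown to be asymptotically identical in a strong enough sense, the uniform ratio statement for the posteriors follows from the Gibbs form of both posteriors.

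\emph{Step 1: both posteriors are Gibbs measures.} For any plug-in $\eta$ we have
\[
P(h|S,\eta) \propto \Pi(h)\,\big(\eta/(1-\eta)\big)^{mL_S(h)}.
\]
So $P(h|S,\hat\eta)$ is a Gibbs measure with inverse temperature $\beta(\hat\eta)=m\log\frac{1-\hat\eta}{\hat\eta}$. By \autoref{greenEB:lambda=1}, $\QEB_1$ is the same family evaluated at $\hat\eta_1$, the maximizer of the marginal likelihood $M(\eta)=\mathbb{E}_{h\sim\Pi}P(S|\eta,h)$.

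\emph{Step 2: stationarity conditions.} Write $\hat\eta_P=\arg\max_\eta M(\eta)P_\lambda(\eta)$. Set $f(\eta)=\frac1m\log M(\eta)$, so that $\hat\eta_P$ maximizes $f(\eta)+\frac1m\log P_\lambda(\eta)$. Differentiating $M$ shows that $\hat\eta_1$ satisfies the self-consistency equation
\[
\eta = L_S(Q_\eta), \qquad Q_\eta := P(\cdot\,|S,\eta).
\]
This is the same fixed point as in \autoref{thm:EBGibbs}. For $\hat\eta_P$, the equation acquires an additive perturbation of size $\frac{1}{m}(\log P_\lambda)'(\eta)\,\eta(1-\eta)$. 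Since $P_\lambda$ is fixed and, say, positive with bounded log-derivative on compacts of $(0,1/2)$, this perturbation is $O(1/m)$.

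\emph{Step 3: the ratio.} The Radon--Nikodym derivative is
\[
\frac{dP(h|S,\hat\eta_P)}{d\QEB_1(h)} \;=\; \frac{2^{-\Delta\, L_S(h)}}{\mathbb{E}_{\QEB_1}\!\left[2^{-\Delta\, L_S(h')}\right]},
\qquad \Delta=\beta(\hat\eta_P)-\beta(\hat\eta_1).
\]
Because $L_S\in[0,1]$, this ratio lies in $[2^{-|\Delta|},2^{|\Delta|}]$. Uniform convergence to 1 therefore follows once $\Delta\to 0$, that is, once $m\,|\hat\eta_P-\hat\eta_1|\to 0$ (with both estimates bounded away from 0). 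So I need $|\hat\eta_P-\hat\eta_1|=o(1/m)$, not merely $O(1/m)$.

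\emph{Step 4: the main obstacle.} The difficulty is that the curvature of $f$ at its maximizer may be small. The map $\eta\mapsto L_S(Q_\eta)$ can be flat or steep depending on $\Pi$. Adding an $O(1/m)$ perturbation to the objective $f$ moves the maximizer by (perturbation of the derivative)/(curvature). The perturbation of the derivative is $\frac1m(\log P_\lambda)'$. I expect the curvature of $mf$ to be of order $m$, because $mf''$ contains the term $-m\,\mathrm{Var}$-free part $-\frac{m}{\eta(1-\eta)}$ plus a nonnegative variance term. So the first thing to check is the size of $f''$ itself.

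That check points to a difficulty. The shift of the maximizer is then $O(1/m)$, which would give $\Delta=O(1)$, not $\Delta\to0$. So I would first try to establish that $f''$ is of order $\Omega(m)$ in the relevant variable. Using the natural parametrization $\beta=\log\frac{1-\eta}{\eta}$, we have $\frac{d}{d\beta}\log M = m\big(\frac{1}{1+2^{\beta}}-L_S(Q_\beta)\big)$, up to constants. Its derivative in $\beta$ is $-m\,\Theta(1) + m^2\mathrm{Var}_{Q_\beta}(L_S)$.

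The fixed prior, expressed in $\beta$, perturbs the score by $O(1)$. The resulting shift in $\beta$ is $O(1)/\big(m\,\Theta(1) - m^2\mathrm{Var}\big)$. This is $O(1/m)\to0$ provided the concavity is at least of order $m$. Establishing this uniform concavity lower bound at the maximizer, possibly by quantifying the concavity argument of \autoref{thm:EBGibbs}'s uniqueness proof, is the step I expect to be hardest. If it fails for some degenerate priors, I would restrict to the regime where the fixed point is a nondegenerate crossing.
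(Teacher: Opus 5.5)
\textbf{There is a genuine gap in Step~4, and it cannot be closed.}

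Your Step~3 reduction is correct and sharp. The derivative equals $2^{-\Delta L_S(h)}/\mathbb{E}_{\QEB_1}[2^{-\Delta L_S}]$. So whenever $L_S$ spreads over an interval of non-vanishing length on the support of $\Pi$, uniform convergence to $1$ \emph{forces} $\Delta\to0$, i.e.\ $|\hat\eta_P-\hat\eta_1|=o(1/m)$.

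No curvature bound can deliver this. Write $b=\log\frac{1-\eta}{\eta}$, so your Step~3 $\beta$ is $mb$. Then $\frac{d^2}{db^2}\log M=m\ln2\,\bigl(m\,\mathrm{Var}_{Q_b}(L_S)-\eta(1-\eta)\bigr)$. The variance term has the positive sign and only \emph{reduces} concavity. At the maximizer the curvature is therefore at most $m\ln2\,\eta(1-\eta)$; equivalently $f''=O(1)$ there, never $\Omega(m)$.

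The fixed prior contributes a score $-(\ln P_\lambda)'(\eta)\,\eta(1-\eta)=\Theta(1)$ in $b$. So the maximizer moves by at least order $|(\ln P_\lambda)'(\hat\eta_1)|/m$ in $b$. To first order, $|\Delta|\ge|(\ln P_\lambda)'(\hat\eta_1)|/\ln2$, which does not vanish. Your concluding ``$O(1/m)\to0$'' is the shift in $b$, whereas Step~3 needs $m$ times that shift to vanish. The two meanings of $\beta$ got conflated.

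In fact the conclusion fails in general, so the plan cannot be completed. Take $\Pi=\tfrac12(\delta_{h_0}+\delta_{h_1})$ with $L_S(h_0)=a<L_S(h_1)=c<\tfrac12$, and a smooth fixed prior with $(\ln P_\lambda)'(a)\neq0$. Then:
\begin{itemize}
\item $\hat\eta_1-a$ is exponentially small;
\item $\hat\eta_P=a+a(1-a)(\ln P_\lambda)'(a)/m+O(m^{-2})$;
\item both posteriors put mass tending to $1$ on $h_0$;
\item yet the Radon--Nikodym derivative at $h_1$ tends to $e^{(c-a)(\ln P_\lambda)'(a)}\neq1$.
\end{itemize}

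The paper's own proof does not avoid this either. Its expansion $\hat\eta=L_S(P(h|S,\hat\eta))+O(1/m)$ ends in a factor $2^{-O(1)}$, which is bounded but not $2^{-o(1)}$. Passing from approximate self-consistency to proximity to $\hat\eta_1$ also tacitly uses the nondegenerate crossing you were worried about.

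What your Step~3 bound does deliver is weaker but real. Assume $m\,\mathrm{Var}_{\QEB_1}(L_S)$ stays bounded away from $\hat\eta_1(1-\hat\eta_1)$. Then you get a uniform two-sided bound $2^{-C}\le dP(h|S,\hat\eta_P)/d\QEB_1(h)\le2^{C}$. Moreover, the second-order condition at $\hat\eta_1$ gives $\mathrm{Var}_{\QEB_1}(L_S)\le\hat\eta_1(1-\hat\eta_1)/m$, so the total variation distance then tends to $0$. Convergence of the ratio itself to $1$ requires an extra hypothesis, such as $(\ln P_\lambda)'(\hat\eta_1)\to0$.
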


We see here a direct probabilistic interpretation of the the PAC-Bayes predictor, which justifies the choice of $\lambda=1$ in order to recover ``standard'' Empirical Bayes.  But the analysis of \autoref{continuous:main} tells us that this choice leads to inconsistency in the worst case, and the only way to avoid such inconsistency (or even get better tempered behavior with $\lambda>1$) is to use a ``prior'' on the noise level that depends on the training set size, and thus significantly departs from Bayesian model selection principles.

\section{Full Bayesian Posterior and Profile Posterior}\label{section:Bayesian}

In \autoref{section:EB}, the Empirical Bayes characterization of $\QEB$ replaces the MAP characterization of $\MDL$ with Bayesian aggregation, and extends it also to $\lambda\neq 1$.  But the use of a point-estimate for the single nuisance parameter $\eta$ still falls short of relating $\QEB$ to a full Bayesian posterior, which would correspond to integrating over $\eta$.  Can we relate $\QEB$ to a Bayesian posterior?  Does the limiting error analysis of \autoref{continuous:main} shed light on the overfitting behavior of fully Bayesian prediction?  

Although we cannot relate $\QEB$ to the full Bayesian posterior, we can show a link by introducing the following modified learning rule:

\begin{align}\label{defn:blue}
\QPL_{\lambda}(S) = \underset{Q}{\text{arg min }} \JPL(Q,S), \text{ such that } L_S(Q)\leq 1/2, \notag
\end{align}
where 
\begin{align}
    \JPL(Q,S) = m\underbrace{\mathbb{E}_{h\sim Q} \left[ H(L_S(h)) \right]}_{\textrm{modified}}+  \lambda \KL(Q \Vert \Pi).
\end{align}

The difference here versus $\QEB$ is the order of expectation and application of the binary entropy functions---we are now applying the entropy function separately to each $L_S(h)$.  This arguably better mimics the average codelength interpretation of MDL. 

The modified $\QPL$ also admits a Gibbs form.  But the exponential tilting function is now proportional to the {\em entropy} of the empirical error\footnote{This can be seen as a special case of a general characterization of \citet{catoni2007pac}, and follows from similar variational analysis as in \autoref{thm:EBGibbs}.}, further deviating from the standard Gibbs form of traditional PAC-Bayes:
\begin{equation}\label{Gibbs_blue}
    d\QPL_{\lambda}(h) \propto d\Pi(h) 2^{-\frac{m}{\lambda}\overbrace{H(L_S(h))}^{\textrm{modified}}},
\end{equation}
with $H(L_S(h))$ as the energy function and $\frac{m}{\lambda}$ as the inverse temperature.  On the positive side, the temperature is now explicit, unlike in \autoref{thm:EBGibbs}.

Does this modified Gibbs posterior also have a Bayesian interpretation? Indeed, it does. In particular, instead of selecting one $\eta$ by maximizing the marginal likelihood, we select a different one for each $h$ by maximizing the full likelihood $\hat{\eta}_h = \arg\max_\eta P(S \mid \eta, h)$. Then $\QPL_{1}(h)$ for $\lambda = 1$ is the posterior of $h$ given $\hat{\eta}_h$:
\begin{theorem}\label{profile_posterior}%\natinote{I changed from quiv to propto, here and in the next Theorem.  Please check}
    Let $\hat{\eta}_h = \arg\max_\eta P(S \mid \eta, h)$. Then $$\QPL_1(h) = P(h \mid S, \hat{\eta}_h) \propto \max_\eta \Pi(h)P(S \mid \eta, h).$$
\end{theorem}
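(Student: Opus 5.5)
The proof combines two computations: the Gibbs form \eqref{Gibbs_blue} of $\QPL_1$, and the profile likelihood $\max_\eta P(S\mid\eta,h)$ computed in closed form. We then check that the two coincide up to normalization.

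First we justify \eqref{Gibbs_blue} at $\lambda=1$. The objective $\JPL(Q,S)=\mathbb{E}_{h\sim Q}[mH(L_S(h))]+\KL(Q\Vert\Pi)$ is linear in $Q$ plus a KL term. By the Donsker--Varadhan variational formula it equals
\[
\KL(Q\Vert G)-\log \mathbb{E}_{h\sim\Pi}\bigl[2^{-mH(L_S(h))}\bigr],
\qquad dG(h)\propto d\Pi(h)\,2^{-mH(L_S(h))}.
\]
Hence the unique minimizer is $Q=G$; the normalizer is finite because the exponent lies in $[-m,0]$. The constraint $L_S(Q)\le 1/2$ needs a remark. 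We either assume it is inactive, as in \autoref{thm:EBGibbs}, or note that it is handled exactly as in the paper's convention for \eqref{defn:green}. This constraint is the only delicate point. I would treat the statement as concerning the unconstrained Gibbs solution \eqref{Gibbs_blue}, and mention that, for symmetric priors, the constraint is consistent with the post-learning negation interpretation.

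Second, fix $h$ and write $k=mL_S(h)$. Then $P(S\mid\eta,h)=\eta^{k}(1-\eta)^{m-k}$. Its log is concave in $\eta$ and is maximized at $\hat\eta_h=L_S(h)$, with the boundary cases $k=0$ and $k=m$ handled by continuity. The maximal value is
\[
\max_\eta P(S\mid\eta,h)=L_S(h)^{mL_S(h)}(1-L_S(h))^{m(1-L_S(h))}=2^{-mH(L_S(h))}.
\]
Therefore $\Pi(h)\max_\eta P(S\mid\eta,h)=\Pi(h)\,2^{-mH(L_S(h))}$, which is exactly the unnormalized density of $\QPL_1$ from the first step. This proves the stated proportionality.

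Finally, we interpret $P(h\mid S,\hat\eta_h)$ as the profile posterior: the prior times the likelihood evaluated at the plug-in $\eta=\hat\eta_h$, renormalized over $h$. The equality with $\QPL_1$ is then immediate. The profile likelihood formula of the second step is essentially the identity behind the MDL codelength $\log\binom{m}{k}\approx mH(k/m)$, so no further estimates are required.
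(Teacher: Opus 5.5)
Your proposal is correct and follows the paper's proof. Both compute $\max_\eta \eta^{mL_S(h)}(1-\eta)^{m(1-L_S(h))} = 2^{-mH(L_S(h))}$, attained at $\hat\eta_h = L_S(h)$, and match $\Pi(h)2^{-mH(L_S(h))}$ with the Gibbs form \eqref{Gibbs_blue} of $\QPL_1$. Your Donsker--Varadhan derivation of that Gibbs form (the paper only cites variational analysis in a footnote) and your remark that the constraint $L_S(Q)\le 1/2$ is being set aside (the paper does this implicitly too) are sound additions, not departures.
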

\begin{proof}
    The likelihood $P(S \mid \eta, h) = \eta^{mL_S(h)}(1-\eta)^{m(1 - L_S(h))}$ is maximized at $\hat{\eta}_h = L_S(h)$. Then $P(h \mid S, \hat{\eta}_h) \propto \Pi(h)P(S \mid \hat{\eta}_h, h) = 2^{-mH(L_S(h))}\Pi(h) = \QPL_1(h)$.
\end{proof}
 
This $P(S \mid \hat{\eta}_h, h)$ is known as the ``profile posterior'' (or the method as ``profile likelihood''), and can be interpreted as maximizing out the nuisance parameter in the full likelihood \citep[e.g.,][]{huang2019profile}. We therefore refer to the modified $\QPL_\lambda$ as the Profile Posterior, even for $\lambda \neq 1$.  Indeed, if we use the same $\lambda$-inducing prior as before, $P_{\lambda}(\eta) = ((1-\eta)^\lambda + \eta^\lambda)^{-\frac{m}{\lambda}}$, we can recover $\QPL_{\lambda}$ for general $\lambda$. 
\begin{theorem}\label{PL_lambda_prior}
    Consider the prior $P_{\lambda}(\eta) = ((1-\eta)^\lambda + \eta^\lambda)^{-\frac{m}{\lambda}}$. Let $\hat{\eta}_h = \arg \max_{\eta} P(S \mid \eta,h)P_{\lambda}(\eta)$. Then
    \begin{align*}
    \QPL_{\lambda}(h) \propto \Pi(h)P(S \mid \hat{\eta}_h, h)P_{\lambda}(\hat{\eta}_h)
    \propto \max_\eta \Pi(h)P(S|\eta,h)P_\lambda(\eta).
    \end{align*}
\end{theorem}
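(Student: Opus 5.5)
The plan is to compute the profiled quantity $\max_\eta P(S\mid\eta,h)P_\lambda(\eta)$ in closed form. I expect it to equal exactly $2^{-\frac{m}{\lambda}H(L_S(h))}$, independent of anything else about $h$. Comparing with the Gibbs form \eqref{Gibbs_blue} of $\QPL_\lambda$ then gives the claim, in the same way that the proof of \autoref{profile_posterior} handled $\lambda=1$.

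The second proportionality in the statement is immediate. By definition of $\hat\eta_h$ as the maximizer, $\Pi(h)P(S\mid\hat\eta_h,h)P_\lambda(\hat\eta_h)=\max_\eta \Pi(h)P(S\mid\eta,h)P_\lambda(\eta)$, since $\Pi(h)$ does not depend on $\eta$. So the real content is the profiled maximum.

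Write $p=L_S(h)$. The key step is a reparametrization of the noise level that absorbs the $m$-dependent prior. Taking logs,
\[
\log\big(P(S\mid\eta,h)P_\lambda(\eta)\big)=m\Big[p\log\eta+(1-p)\log(1-\eta)-\tfrac{1}{\lambda}\log\big(\eta^\lambda+(1-\eta)^\lambda\big)\Big].
\]
Factoring out $\frac{m}{\lambda}$ and using $p+(1-p)=1$, this becomes
\[
\frac{m}{\lambda}\big[p\log a+(1-p)\log(1-a)\big],
\qquad
a=a(\eta)=\frac{\eta^\lambda}{\eta^\lambda+(1-\eta)^\lambda}.
\]
The map $\eta\mapsto a(\eta)$ is a continuous, strictly increasing bijection of $[0,1]$ onto itself for every $\lambda>0$. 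It is a monotone function of $(\eta/(1-\eta))^\lambda$, and the endpoints are fixed. Hence maximizing over $\eta$ is the same as maximizing over $a\in[0,1]$.

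The function $p\log a+(1-p)\log(1-a)$ is the negative cross-entropy. It is maximized at $a=p$, where it equals $-H(p)$; this is Gibbs' inequality, equivalently $\KL(p\Vert a)\ge 0$. The maximizer is $\hat\eta_h=a^{-1}(L_S(h))$, which is unique. The boundary cases $p\in\{0,1\}$ follow by continuity, with the convention $0\log 0=0$. Therefore
\[
\max_\eta P(S\mid\eta,h)P_\lambda(\eta)=2^{-\frac{m}{\lambda}H(L_S(h))},
\]
and so $\Pi(h)P(S\mid\hat\eta_h,h)P_\lambda(\hat\eta_h)=d\Pi(h)\,2^{-\frac{m}{\lambda}H(L_S(h))}\propto d\QPL_\lambda(h)$ by \eqref{Gibbs_blue}.

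The only step that needs care is spotting the substitution $a(\eta)$. It turns the prior's normalizing-looking factor into the partition function of a tempered Bernoulli, after which the computation is routine. I would also briefly justify \eqref{Gibbs_blue} itself (if it is not taken as given) by the same variational/Lagrangian argument as in \autoref{thm:EBGibbs}. Here the objective is linear in $Q$ apart from the KL term, so the inverse temperature is explicitly $m/\lambda$ and no self-consistency argument is needed. One should check that the constraint $L_S(Q)\le 1/2$ is not active, or else interpret the statement as being about the unconstrained Gibbs form, as the paper does.
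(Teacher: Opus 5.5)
Your proof is correct, and it arrives at the same key identity as the paper, $\max_\eta P(S\mid\eta,h)P_\lambda(\eta)=2^{-\frac{m}{\lambda}H(L_S(h))}$, followed by the same comparison with \eqref{Gibbs_blue}. The difference is in how that maximum is computed.

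The paper differentiates $\log P(S\mid\eta,h)+\log P_\lambda(\eta)$ and reuses the stationarity computation from the proof of \autoref{greenEB:m_prior}. This gives $\frac{1-\hat\eta_h}{\hat\eta_h}=\bigl(\frac{1-L_S(h)}{L_S(h)}\bigr)^{1/\lambda}$, and the paper then substitutes back via ``a few manipulations.''

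Your substitution $a(\eta)=\eta^\lambda/(\eta^\lambda+(1-\eta)^\lambda)$ shows that $P(S\mid\eta,h)P_\lambda(\eta)=P(S\mid a(\eta),h)^{1/\lambda}$. In words, the $m$-dependent prior only reparametrizes the noise level and tempers the likelihood by $1/\lambda$. This buys you three things:
\begin{itemize}
\item Gibbs' inequality certifies a \emph{global} maximum, which the paper's first-order condition does not check by itself.
\item You get the value $-\frac{m}{\lambda}H(L_S(h))$ with no algebra.
\item The endpoint cases $L_S(h)\in\{0,1\}$, where the stationarity formula degenerates, are covered.
\end{itemize}
Your maximizer $a(\hat\eta_h)=L_S(h)$ is exactly the paper's condition, so the two computations agree.

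The paper's route, in exchange, makes the parallel with the Empirical Bayes rule visible: it is the same first-order relation, with $L_S(h)$ in place of $L_S(P(h\mid S,\hat\eta))$. Your route also explains structurally where the explicit inverse temperature $\frac{m}{\lambda}$ comes from. The same tempered-likelihood view clarifies the Laplace-method argument behind \autoref{thm:mod-bayes-equivalence-any-lambda}.
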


Now, given an arbitrary prior $P(\eta)$, consider the Bayesian posterior
\begin{eqnarray}
\QBayes(h)&=& P(h\mid S) = \int_{\eta} P(h\mid S, \eta)P(\eta \mid S)d\eta \nonumber\\
        &\propto& \Pi(h)\int_{\eta}P(S\mid \eta, h)P(\eta)d\eta.    
\end{eqnarray}
When $P(\eta)=P_\lambda(\eta)$, call this posterior $\QBayes_\lambda(h)$. How closely does $\QBayes_\lambda$ relate to $\QPL_{\lambda}$? We first show that for $\lambda=1$, they are asymptotically equivalent. 
\begin{theorem} \label{thm:mod-bayes-equivalence-lambda-1}
$\frac{d\QBayes_1(h)}{d \QPL_{1}(h)} \rightarrow 1$ uniformly as $m \rightarrow \infty$.
\end{theorem}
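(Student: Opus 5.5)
The plan is to compute both posteriors in closed form as tiltings of $\Pi$ by a function of $k_h := mL_S(h)$, so that the Radon--Nikodym derivative becomes an explicit function of $k_h$ divided by its $\QPL_1$-average. For $\lambda=1$ the prior $P_1(\eta)\equiv 1$ is uniform. For the full posterior, the integral is a Beta function:
\[
\int_0^1 \eta^{k_h}(1-\eta)^{m-k_h}\,d\eta=\frac{1}{(m+1)\binom{m}{k_h}} .
\]
Hence $d\QBayes_1(h)\propto d\Pi(h)\,\big[(m+1)\binom{m}{k_h}\big]^{-1}$. By \autoref{profile_posterior}, $d\QPL_1(h)\propto d\Pi(h)\,2^{-mH(k_h/m)}$. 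Writing $f_m(k)=2^{mH(k/m)}/\big((m+1)\binom{m}{k}\big)$, we get
\[
\frac{d\QBayes_1}{d\QPL_1}(h)=\frac{f_m(k_h)}{\mathbb{E}_{h'\sim\QPL_1}[f_m(k_{h'})]}.
\]
The statement is therefore equivalent to showing $f_m(k_h)/\mathbb{E}_{\QPL_1}f_m\to 1$ uniformly in $h$. Since the $m$-dependent constant cancels, only the $k$-dependence of $f_m$ matters.

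The second step is a sharp Stirling estimate with explicit error, valid uniformly for $1\le k\le m-1$:
\[
\binom{m}{k}=\frac{2^{mH(k/m)}}{\sqrt{2\pi m\,p(1-p)}}\,\big(1+O(\tfrac1k+\tfrac1{m-k})\big),\qquad p=k/m .
\]
This gives $f_m(k)=\frac{\sqrt{2\pi m}}{m+1}\sqrt{p(1-p)}\,(1+o(1))$. The constant $\sqrt{2\pi m}/(m+1)$ cancels in the ratio, which reduces to $\sqrt{p_h(1-p_h)}/\mathbb{E}_{\QPL_1}\sqrt{p(1-p)}$ up to $1+o(1)$ factors. At the edges $k\in\{0,m\}$ the formula is exact and easy: $f_m=1/(m+1)$ there.

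The third and hardest step is to control the residual factor $g(p)=\sqrt{p(1-p)}$. Uniform convergence to $1$ requires $g(p_h)$ to be asymptotically constant over all $h$ that matter. I would first try to show that $\QPL_1$ concentrates, with $\QPL_1$-probability tending to one, on hypotheses whose empirical entropy is within $o(1)$ of the essential minimum $\operatorname{ess\,inf}_{\Pi}H(L_S(h))$. On such a set $g(p_h)$ is within a factor $1+o(1)$ of a single value, because $g$ is a function of $H(p)$ on $[0,\tfrac12]$. The ingredient for this is a large-deviation argument: the tilt $2^{-mH}$ suppresses mass at entropy gap $\epsilon$ by $2^{-m\epsilon}$. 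This has to be balanced against the prior mass near the minimum, which is where an assumption on $\Pi$ such as bounded $\KL$ to a good $Q^*$ may be needed.

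The genuine obstacle is the uniformity over \emph{all} $h$, including hypotheses of negligible $\QPL_1$-mass whose $p_h$ differs from the concentration value; for those $g(p_h)/\mathbb{E}g\not\to 1$. My plan is therefore to read "uniformly" as holding uniformly over $h$ outside a set of $\QPL_1$-measure tending to zero, or equivalently in total variation. I would record that the pointwise exact-uniform version holds precisely when $L_S(h)$ is $\Pi$-a.s. constant up to $o(1)$ in $p(1-p)$. If the intended reading is strict sup-norm, then the Beta/Stirling computation above is exactly what shows the claim must be weakened in this way.
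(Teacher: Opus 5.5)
Your reduction is exact and correct. With $f_m(k)=2^{mH(k/m)}/\bigl((m+1)\binom{m}{k}\bigr)$, the Beta integral gives $\frac{d\QBayes_1}{d\QPL_1}(h)=f_m(k_h)/\mathbb{E}_{\QPL_1}[f_m]$. The type bounds $\frac{2^{mH(k/m)}}{m+1}\le\binom{m}{k}\le 2^{mH(k/m)}$ give $\frac{1}{m+1}\le f_m\le 1$, hence $\sup_h\bigl|\log\frac{d\QBayes_1}{d\QPL_1}(h)\bigr|\le\log(m+1)$.

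The paper's proof is the Laplace-method version of your computation. It keeps only $\log I(h)=-mH(L_S(h))+O(\log m)$ and concludes $\QBayes_1\propto\Pi(h)2^{-mH(L_S(h))(1+o(1))}$. The $O(\log m)$ term depends on $L_S(h)$: it is the factor $\sqrt{2\pi m\,p(1-p)}/(m+1)$ in the bulk and exactly $1/(m+1)$ at $k\in\{0,m\}$. So it does not cancel under normalization, and that argument yields only the exponent-level control above, not a derivative tending to $1$. Your diagnosis that the sup-norm reading fails is therefore correct. Two hypotheses with fixed prior masses and $0<L^*<L'<\frac12$ already give $\frac{d\QBayes_1}{d\QPL_1}(h_1)\to\sqrt{L'(1-L')/(L^*(1-L^*))}\neq1$ in probability.

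The genuine gap is in your fallback. The total-variation version (equivalently, convergence uniformly off a set of vanishing $\QPL_1$-mass) is also false in general, so your third step cannot be completed, and the proposal proves neither form of the claim. Two steps break:
\begin{itemize}
\item Your Stirling error $O(\frac1k+\frac1{m-k})$ is not $o(1)$ when $k$ stays bounded, so $f_m(k)=\frac{\sqrt{2\pi m}}{m+1}\sqrt{p(1-p)}\,(1+o(1))$ is not uniform. In fact $(m+1)f_m(k)\to k!\,e^k/k^k$ for fixed $k$, and exactly $f_m(1)/f_m(0)=\bigl(\frac{m}{m-1}\bigr)^{m-1}\to e$.
\item ``Empirical entropy within $o(1)$ of the essential infimum'' gives no multiplicative control of $g(p)=\sqrt{p(1-p)}$ near $p=0$, since $H(k/m)\to0$ for every bounded $k$.
\end{itemize}

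The bounded-$\KL$ assumption does not help. Take the paper's own lower-bound instance (\autoref{MDL LB} and \autoref{B:LBFIXED} with $\lambda=1$ and $L^*<L'<1-2^{-H(L^*)}$), where $\KL(\delta_{h_0}\Vert\Pi)=\log 10$. There the mass of $h_0$ is exponentially negligible. With probability tending to one, the bad hypotheses at training-error levels $k=0,1,2,\dots$ receive $\QPL_1$-weights asymptotically proportional to $\frac{k^ke^{-k}}{k!}r^k$, with $r=L'/(1-L')$. Their $\QBayes_1$-weights are asymptotically proportional to $r^k$. The derivative is constant on each level and differs by a factor tending to $e$ between levels $0$ and $1$, both of which keep non-vanishing mass. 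So the total variation distance stays bounded away from zero.

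What your computation does prove, uniformly in $h$, $\Pi$ and $S$, is the exponent-level equivalence $\sup_h\bigl|\log\frac{d\QBayes_1}{d\QPL_1}(h)\bigr|\le\log(m+1)$. This is all the paper's Laplace argument delivers. It suffices to transfer the lower bounds: there the $\QPL_1$-mass of $h_0$ is $2^{-\Omega(m)}$, so its $\QBayes_1$-mass is at most $(m+1)2^{-\Omega(m)}$. But it does not give a ratio tending to $1$ under any of the readings you consider.
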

 
With the same prior $P_{\lambda}(\eta)$, $\QBayes$ is equivalent to $\QPL_{\lambda}$ for general $\lambda$. 
\begin{theorem} \label{thm:mod-bayes-equivalence-any-lambda}
    With the same prior $P_{\lambda}(\eta) = ((1-\eta)^\lambda + \eta^\lambda)^{-\frac{m}{\lambda}}$, $\frac{d\QBayes_\lambda(h)}{d \QPL_{\lambda}(h)} \rightarrow 1$ uniformly as $m \rightarrow \infty$.
\end{theorem}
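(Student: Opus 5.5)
We compare the two unnormalized densities with respect to $\Pi$. Write $k=mL_S(h)$ and $p=L_S(h)$, and set
\[
g_m(p)=\int_0^1 \eta^{mp}(1-\eta)^{m(1-p)}P_\lambda(\eta)\,d\eta,\qquad f_m(p)=\max_\eta \eta^{mp}(1-\eta)^{m(1-p)}P_\lambda(\eta).
\]
By \autoref{PL_lambda_prior}, $d\QPL_\lambda\propto f_m(L_S(h))\,d\Pi$, and by definition $d\QBayes_\lambda\propto g_m(L_S(h))\,d\Pi$. Hence
\[
\frac{d\QBayes_\lambda}{d\QPL_\lambda}(h)=\frac{r_m(L_S(h))}{\int r_m(L_S(h'))\,d\QPL_\lambda(h')},\qquad r_m=\frac{g_m}{f_m}.
\]
This ratio tends to $1$ uniformly as soon as $r_m(p)=c_m(1+o(1))$ uniformly over the grid $p\in\{0,1/m,\dots,1\}$, where $c_m$ is a normalizing sequence independent of $p$. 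The whole proof therefore reduces to a one-dimensional Laplace-type estimate that is uniform in $p$.

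Both integrand and maximand have the form $2^{-m\phi_p(\eta)}$ with
\[
\phi_p(\eta)=-p\log\eta-(1-p)\log(1-\eta)+\tfrac1\lambda\log\bigl((1-\eta)^\lambda+\eta^\lambda\bigr).
\]
First I would locate the minimizer $\hat\eta_p$ explicitly. Setting the derivative to zero should give the tilted relation $\hat\eta^{\lambda}/((1-\hat\eta)^\lambda+\hat\eta^\lambda)=p$, that is, $\hat\eta_p=\bigl(1+((1-p)/p)^{1/\lambda}\bigr)^{-1}$. This is the same kind of tilt that appears in $U_\lambda$.

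Next I would compute the curvature $\phi_p''(\hat\eta_p)$. Laplace's method then gives
\[
g_m(p)\approx f_m(p)\sqrt{\frac{2\pi}{m\ln 2\,\phi_p''(\hat\eta_p)}},
\]
so $r_m(p)\approx m^{-1/2}\kappa(p)$ with $\kappa(p)=\sqrt{2\pi/(\ln2\,\phi_p''(\hat\eta_p))}$.

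This $\kappa$ depends on $p$, which a priori breaks the claim. The resolution I would pursue is a change of variables to the natural parameter. Write
\[
\eta^{mp}(1-\eta)^{m(1-p)}P_\lambda(\eta)=\exp\bigl(m[p\theta-A(\theta)]\bigr),\qquad \theta=\ln\frac{\eta}{1-\eta}.
\]
The prior's form should make $A$ the log-partition function of the tilted Bernoulli, i.e.\ $A(\theta)=\tfrac1\lambda\ln(1+e^{\lambda\theta})$. In $\theta$ the Laplace curvature is $A''(\hat\theta)$, and the Jacobian is $d\eta/d\theta=\eta(1-\eta)$. Together these give $\kappa(p)=\sqrt{2\pi/\lambda}\,\cdot\sqrt{\hat\eta(1-\hat\eta)/(p(1-p))}$. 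For $\lambda=1$ this is exactly constant, consistent with \autoref{thm:mod-bayes-equivalence-lambda-1}. For $\lambda\neq1$ I would need to check whether the remaining factor cancels. If it does not, the fallback is to show the discrepancy is absorbed because $\QPL_\lambda$ concentrates: its Gibbs weight $2^{-\frac m\lambda H(L_S(h))}$ puts essentially all mass where $L_S(h)$ is within $o(1)$ of a single value $p^\star$, and $\kappa$ is continuous there.

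The main obstacle is uniformity near the boundary $p\in\{0,1\}$ (and $p$ of order $1/m$). There $\hat\eta_p$ approaches $0$ or $1$, the curvature blows up, and the Gaussian approximation fails. I would handle this by splitting into $p\in[\delta_m,1-\delta_m]$ and the complement. On the interior I use a uniform Laplace error bound, obtained from third-derivative control of $\phi_p$ on a neighborhood of width $m^{-1/2}\log m$. On the boundary I evaluate the integral directly via Beta-function identities, since $P_\lambda$ is bounded above and below near the endpoints, and show that the ratio to the interior constant still tends to $1$ or that the boundary carries negligible $\QPL_\lambda$ mass.
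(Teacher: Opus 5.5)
\textbf{There is a genuine gap, and it sits exactly at the step you identified as the crux.}

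\textbf{The prefactor is miscomputed.} In the natural parameter,
\[
g_m(p)=\int_{\mathbb{R}} e^{m[p\theta-A(\theta)]}\,\sigma(\theta)\bigl(1-\sigma(\theta)\bigr)\,d\theta,
\]
with $A''(\hat\theta_p)=\lambda p(1-p)$. The Jacobian $\hat\eta_p(1-\hat\eta_p)$ therefore enters linearly, not under the square root:
\[
r_m(p)=m^{-1/2}\,\kappa_\lambda(p)\,\bigl(1+o(1)\bigr),\qquad \kappa_\lambda(p)=\sqrt{\frac{2\pi}{\lambda}}\;\frac{\hat\eta_p(1-\hat\eta_p)}{\sqrt{p(1-p)}},
\]
locally uniformly on $(0,1)$. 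At $\lambda=1$ this gives $\kappa_1(p)=\sqrt{2\pi p(1-p)}$, which is not constant. You can confirm this exactly: $g_m(p)=B(k+1,m-k+1)$ gives $r_m(p)=2^{mH(p)}/\bigl((m+1)\binom{m}{k}\bigr)$. For general $\lambda$, write $t=\mathrm{logit}(p)$. Then $\kappa_\lambda$ is proportional to $\cosh(t/2)/\cosh^2\bigl(t/(2\lambda)\bigr)$, which is non-constant for every $\lambda>0$.

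\textbf{The reduction cannot hold and cannot be bypassed.} Your reduction ``$r_m(p)=c_m(1+o(1))$ uniformly in $p$'' is therefore false. Moreover, the ratio of $d\QBayes_\lambda/d\QPL_\lambda$ at two hypotheses equals $r_m(L_S(h_1))/r_m(L_S(h_2))$, whatever the rest of the instance is. So take $\Pi$ uniform on two hypotheses with $L_S(h_1)\to a$ and $L_S(h_2)\to b$ in probability, where $a<b<\tfrac12$ and $\kappa_\lambda(a)\neq\kappa_\lambda(b)$ (any such pair works when $\lambda=1$). The density ratio tends to $1$ at $h_1$ but to $\kappa_\lambda(b)/\kappa_\lambda(a)\neq1$ at $h_2$. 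Hence the statement, read literally, does not hold.

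\textbf{The fallback does not help.} Concentration of $\QPL_\lambda$ controls where the mass sits. It says nothing about the density ratio at hypotheses carrying little mass. At best it yields total-variation closeness on instances where such concentration occurs.

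\textbf{The boundary is worse than you expect.} First, $r_m(0)\asymp m^{-1/\lambda}$ (exactly $1/(m+1)$ at $\lambda=1$), against order $m^{-1/2}$ in the interior. Second, boundary hypotheses need not carry negligible mass: in the paper's $\lambda\le1$ lower-bound construction the posterior sits on hypotheses with vanishing training error. Third, for $\lambda\neq1$, $P_\lambda$ is exponentially large or small in $m$ near the endpoints. So it is not boundedly comparable to the uniform density there, and the Beta-function route exists only at $\lambda=1$.

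\textbf{Comparison with the paper.} The paper's proof avoids all of this only because it never tracks the prefactor. It applies Laplace's method at the exponential scale, $\log I(h)=m\max_\eta\Psi_\lambda(\eta,L_S(h))+O(\log m)$ with $\max_\eta\Psi_\lambda=-\frac1\lambda H(L_S(h))$. It then concludes $\QBayes_\lambda\propto\Pi(h)\,2^{-\frac m\lambda H(L_S(h))(1+o(1))}$.

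That argument establishes only that $\log\frac{d\QBayes_\lambda}{d\QPL_\lambda}=O(\log m)$ uniformly in $h$. A corrected version of yours gives the same, since $m^{-C_\lambda}\le r_m(p)\le 1$ uniformly in $p$. In words, the two posteriors share the same tempered exponent $\frac m\lambda H(L_S(h))$, but this does not give a Radon--Nikodym derivative tending to $1$. Your sharper bookkeeping is exactly what reveals this. The provable target is therefore the exponent-level equivalence, or total-variation closeness on specific instances, not the uniform ratio statement.
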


Similarly, any fixed prior on $\eta$ that does not scale with $m$ behaves the same way as the uniform prior (i.e., $\lambda=1$).
\begin{theorem}\label{thm:mod-bayes-equivalence-fixed-prior}
    With any fixed prior that does not depend on $m$, $\frac{d\QBayes(h)}{d \QPL_{1}(h)} \rightarrow 1$ uniformly as $m \rightarrow \infty$.
\end{theorem}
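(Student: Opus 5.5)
The plan is to reduce the statement to \autoref{thm:mod-bayes-equivalence-lambda-1}, which treats the uniform prior, and then chain the two density ratios:
\[
\frac{d\QBayes(h)}{d\QPL_1(h)}=\frac{d\QBayes(h)}{d\QBayes_1(h)}\cdot\frac{d\QBayes_1(h)}{d\QPL_1(h)} .
\]
The second factor tends to $1$ uniformly by \autoref{thm:mod-bayes-equivalence-lambda-1}. It therefore suffices to show that the first factor tends to $1$ uniformly.

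Fix $h$ and write $p=L_S(h)$. Both posteriors depend on $h$ only through $\Pi(h)$ and through the integrals
\[
I_m(p)=\int_0^1 \eta^{mp}(1-\eta)^{m(1-p)}P(\eta)\,d\eta,
\qquad
I^{(1)}_m(p)=\int_0^1 \eta^{mp}(1-\eta)^{m(1-p)}\,d\eta .
\]
Hence
\[
\frac{d\QBayes}{d\QBayes_1}(h)=\frac{I_m(p)/I^{(1)}_m(p)}{\mathbb{E}_{h'\sim\QBayes_1}\bigl[I_m(L_S(h'))/I^{(1)}_m(L_S(h'))\bigr]} .
\]
The quantity $r_m(p)=I_m(p)/I^{(1)}_m(p)$ is exactly the posterior mean of $P(\eta)$ under a $\mathrm{Beta}(mp+1,\,m(1-p)+1)$ distribution. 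I will assume, as the statement implicitly requires, that $P$ is a fixed density that is continuous on $[0,1]$ and bounded between constants $0<c\le P\le C$.

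Because the Beta law has variance at most $1/(4m)$ uniformly in $p\in[0,1]$, Chebyshev's inequality together with uniform continuity of $P$ gives
\[
\sup_{p\in[0,1]}\bigl|r_m(p)-P(p)\bigr|\to 0 .
\]
This requires no Laplace expansion, so the endpoints $p\in\{0,1\}$ cause no trouble. Since $r_m\ge c$, the convergence is also uniform in ratio: $r_m(p)/P(p)\to1$ uniformly in $p$.

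The remaining step is the main obstacle. We must show that $P(L_S(h))$ divided by its $\QBayes_1$-average tends to $1$ uniformly in $h$. This is genuinely a statement about how $\QBayes_1$ distributes the values of $L_S(h)$, and it cannot come from the $\eta$-integral alone.

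My first attempt is the following. Using the Gibbs form \eqref{Gibbs_blue} and \autoref{thm:mod-bayes-equivalence-lambda-1}, the $\QBayes_1$-law of $L_S(h)$ is the $\Pi$-law reweighted by $2^{-mH(L_S(h))}$, up to a factor tending to $1$. This law concentrates, at inverse temperature $m$, on the set of $h$ whose $H(L_S(h))$ lies within $o(1)$ of its $\Pi$-essential infimum. Since $H$ is injective on $[0,\tfrac12]$, this set is an $o(1)$-window of values of $L_S(h)$. By continuity of $P$, both $P(L_S(h))$ and its $\QBayes_1$-average then agree up to a factor $1+o(1)$, because $P\ge c$ absorbs the contribution of the vanishing tail mass.

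This argument controls the ratio on all $h$ except a set whose mass under both posteriors tends to zero. If the intended meaning of ``uniformly'' is the literal supremum over every $h$, the concentration step will not suffice on its own. In that case I would try to close the gap by combining the uniform estimate on $r_m$ with the explicit form of $P$. If that also fails, I would report exactly which additional condition on $P$ (for instance, $P$ constant on the range of $L_S$) makes the literal supremum statement hold.
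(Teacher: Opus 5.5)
\textbf{There is a genuine gap, at the step you flag, and it cannot be closed.} The chain-rule decomposition through $\QBayes_1$ is legitimate. Recognizing $r_m(p)=I_m(p)/I^{(1)}_m(p)$ as the mean of $P$ under $\mathrm{Beta}(mp+1,m(1-p)+1)$ gives $\sup_p|r_m(p)-P(p)|\to0$ cleanly, endpoints included; this is a more careful version of the Laplace step in the paper's proof. But you leave the main obstacle open, and your concentration sketch for it does not work. Because $S$ grows with $m$, the set of $h$ with near-minimal $H(L_S(h))$ can have $\Pi$-mass exponentially small in $m$. The Gibbs measure $\Pi(h)2^{-mH(L_S(h))}$ then trades prior mass against energy rather than concentrating near the essential infimum. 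The construction used for \autoref{MDL LB} shows this once you take $L'>\ell_1(L^*)$, i.e.\ $H(L^*)<\log\frac{1}{1-L'}$. There the $\lambda=1$ profile posterior puts, with probability tending to one, almost all its mass on $h_0$, where $H(L_S(h_0))\approx H(L^*)$. Yet interpolating $h_i$ exist almost surely, so the essential infimum is $0$. Nothing in your argument excludes mass split across separated windows of $L_S$ either.

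More fundamentally, the statement read literally is false. Take the two-hypothesis instance from the proof of \autoref{LAMBDA>>M} with $L_\source(h_0)=a\neq b=L_\source(h_1)$, $a,b\in(0,\tfrac12)$, and write $p_i=L_S(h_i)$. Using your $r_m$, the exact formula $I^{(1)}_m(p)=1/\bigl((m+1)\binom{m}{mp}\bigr)$, and Stirling,
\[
\frac{\QBayes(h_0)/\QPL_1(h_0)}{\QBayes(h_1)/\QPL_1(h_1)}
=\frac{r_m(p_0)\,2^{mH(p_0)}\big/\binom{m}{mp_0}}{r_m(p_1)\,2^{mH(p_1)}\big/\binom{m}{mp_1}}
\xrightarrow{p}\frac{P(a)\sqrt{a(1-a)}}{P(b)\sqrt{b(1-b)}} .
\]
This limit is $\neq1$ already for the uniform prior, so the two values of the density ratio cannot both tend to $1$.

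The paper's proof has the same hole. Laplace's method gives $P(L_S(h))2^{-mH(L_S(h))}(1+o(1))$, and $P(L_S(h))$ is then dropped as if it were a normalizing constant. The $h$-dependent prefactor $2^{mH(p)}/\bigl((m+1)\binom{m}{mp}\bigr)$ is also omitted; it ranges from $\frac{1}{m+1}$ at $p=0$ to order $m^{-1/2}$ for $p$ bounded away from $0$. That omission also breaks the literal form of \autoref{thm:mod-bayes-equivalence-lambda-1}, which your chaining relies on. So your fallback condition ($P$ constant on the range of $L_S$) would not rescue the supremum statement either.

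What is true, and what both arguments support, is an exponent-level equivalence. Since $c\le r_m\le C$ and $\frac{1}{m+1}\le 2^{mH(p)}/\bigl((m+1)\binom{m}{mp}\bigr)\le 1$, one has $\bigl|\log\frac{d\QBayes}{d\QPL_1}(h)\bigr|\le\log\frac{C(m+1)}{c}$ for every $h$ and $S$. Your estimate sharpens the first factor to $\frac{d\QBayes}{d\QBayes_1}(h)=\frac{P(L_S(h))}{\mathbb{E}_{\QBayes_1}[P(L_S(h'))]}(1+o(1))$ uniformly. This $O(\log m)$ version is enough to transfer the lower bounds, where competing objective values differ by $\Omega(m)$, and it is the statement you should prove instead.
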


These results establish that $\QBayes$ and $\QPL_{\lambda}$ are closely related, which implies that they are ``per-instance'' equivalent\footnote{We can prove ``per-instance'' equivalence by bounding the total variation between $\QBayes$ and $\QPL_{\lambda}$.}. In particular, this implies that $\QBayes$ and $\QPL_{\lambda}$ have the same worst-case limiting error. In this way, the overfitting behavior of Bayesian posterior is the same as $\QPL_{\lambda}$. Next, we take steps toward characterizing this limiting behavior and exploring how it relates to that of the PAC-Bayes learning rule.

\section{Limiting Behavior of Profile Posterior}\label{blue:LBresults}

In this section, we give lower bounds for $\QPL$ that match those of $\QEB$. Along with the results on the equivalence of $\QPL$ and $\QBayes$ from the previous section, we can conclude that all three learning rules have the same lower bound on the their worst-case limiting error, governed by the tempering function $\ell_\lambda(L^*)$. In particular, these results mean that claims made about the inconsistency of $\QEB$, namely at $\lambda=1$, for any fixed $\eta$-prior, or at any bounded $\lambda$, all apply to $\QPL$ and $\QBayes$ too.

\paragraph{Open Questions} Beyond the worst case instances, the unmodified empirical Bayes $\QEB$ does not seem to be instance-equivalent to the modified profile posterior $\QPL$ and to the Bayesian $\QBayes$, at least for extreme values of $\lambda$. We are yet unable to provide matching upper bounds and fully characterize the worst case limiting error of these predictors, nor establish consistency when $1\ll\lambda\ll m/\log m$, and leave these as interesting open questions.

We now state the aforementioned lower bound results for the Profile Posterior learning rule $\Qblue_{\lambda_m}$ as in \eqref{defn:blue}, which match those of the PAC-Bayes rule $\QEB_{\lambda_m}$. The proofs are provided in \autoref{appendix_blue}. 

For any $0<\lambda<\infty$, we show that the worst-case limiting error is lower bounded by the same tempering function $\ell_{\lambda}$ as in \eqref{ell}:
\begin{theorem}[Agnostic Lower Bound]\label{B:LBFIXED}
    For any $0<\lambda<\infty$, any $L^* \in (0,0.5)$ and $L^* \leq L' <\ell_{\lambda}(L^*)$, % there exists a prior $\Pi$, a distribution $Q^*$ with $\KL(Q^* \Vert \Pi) \leq 10$ and source distribution $D$ with $L_\source(Q^*) = L^*$ such that 
    there exists $(\Pi,\source)\in \achieve(L^*)$ such that
    $\mathbb{E}_S \left[L_\source(\Qblue_{\lambda})\right] \rightarrow L'$ as sample size $m \rightarrow \infty$. 
\end{theorem}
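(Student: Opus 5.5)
We construct an explicit worst-case instance, in the spirit of the lower bound for $\QEB_\lambda$ (\autoref{MDL LB}) and the MDL construction of \citet{ZS}, and exploit the fact that $\QPL_\lambda$ has the explicit Gibbs form \eqref{Gibbs_blue}. That form removes the self-consistent temperature of \autoref{thm:EBGibbs}, so the analysis should actually be simpler.

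\textbf{The instance.} Take $\mathcal X$ to be a continuum (say $[0,1]$ with uniform marginal). The source $\source$ has labels with noise level $L^*$ relative to a fixed predictor $h^*$. The prior $\Pi$ is a mixture:
\begin{itemize}
\item mass $\pi_0\geq 2^{-10}$ on $h^*$ (or a small neighborhood of it), so that $Q^*=\delta_{h^*}$ satisfies $\KL(Q^*\Vert\Pi)\le 10$ and $(\Pi,\source)\in\achieve(L^*)$;
\item the remaining mass $1-\pi_0$ on a "memorizing" family $\{h_A\}$, indexed by a random set or pattern $A$.
\end{itemize}
A predictor $h_A$ agrees with $h^*$ except that it flips $h^*$ on a random subset of measure $\alpha$, chosen as a Poisson-like random measure with rate parameter $\theta$. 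Because the sample is finite, almost surely no such flip set touches the sample by chance. To make memorization possible, I would instead use the \citet{ZS} trick with a continuous prior: $h_A$ equals $h^*$ outside a random finite-but-growing set, and the prior density of hypotheses that flip a given fraction $t$ of the training labels behaves like $2^{-m\,c(t)+o(m)}$ for a chosen rate function $c$.

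\textbf{Reduction to a one-dimensional variational problem.} Under \eqref{Gibbs_blue}, the posterior weight of the class of hypotheses with training error near $u$ is
\[
\approx 2^{-m\left[\frac{1}{\lambda}H(u) + c(u)\right] + o(m)},
\]
while $h^*$ has weight $\approx \pi_0\, 2^{-\frac{m}{\lambda}H(L^*)}$. By Laplace's principle the posterior concentrates on the minimizers of
\[
\tfrac1\lambda H(u) + c(u) \quad\text{versus}\quad \tfrac1\lambda H(L^*).
\]
Note that here the entropy is applied per hypothesis, so no concavity argument (as needed for $\QEB$) is required. The population error of the memorizing hypotheses at training error $u$ is a deterministic function $L(u)$, with $L(L^*)=L^*$ for the unmodified part.

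I would then choose $c$, i.e. the prior's "description length" per memorized label, exactly as in the \citet{ZS} MDL lower bound. The concentration point $u^\dagger$ is tuned so that its population error equals the target $L'$, and so that the condition
\[
\tfrac1\lambda H(u^\dagger)+c(u^\dagger) < \tfrac1\lambda H(L^*)
\]
holds strictly. That this is achievable precisely when $L'<\ell_\lambda(L^*)$ is the same calculation that defines $\ell_\lambda$ (the two branches $\lambda\le1$ and $\lambda>1$ via $U_\lambda$). I will therefore reuse the optimization from the proof of \autoref{MDL LB}. There the PAC-Bayes objective evaluated at point-mass-like posteriors on these families reduces to the same $mH(u)+\lambda(\text{description length})$ trade-off.

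\textbf{Convergence and the main obstacle.} Concentration of $L_S$ around its mean holds uniformly over the family by standard binomial bounds. Dominated convergence then gives $\mathbb{E}_S[L_\source(\QPL_\lambda)]\to L'$.

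The main obstacle is making the prior genuinely continuous while still producing the exponential-in-$m$ density $2^{-mc(u)}$ of memorizing hypotheses at every sample size. I would handle this with a hierarchical prior: a countable mixture over $m'$ with weights $\propto 1/m'^2$, and within component $m'$ a uniform distribution over flip patterns on a fine partition matched to $m'$. Only the component with $m'\approx m$ matters, at polynomial cost $o(m)$ in the exponent.

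Finally, I must check that the competing $h^*$ mass cannot win. This is ensured by the strict inequality above, which produces an exponential gap $2^{-\Omega(m)}$ against the constant $\pi_0$.
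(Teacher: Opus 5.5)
Your comparison step is sound. For $\QPL_\lambda$ it is even cleaner than the paper's contradiction argument. By \eqref{Gibbs_blue}, for any $h$,
$\QPL_\lambda(h_0)\le \QPL_\lambda(h_0)/\QPL_\lambda(h)=\frac{\Pi(h_0)}{\Pi(h)}2^{-\frac{m}{\lambda}[H(L_S(h_0))-H(L_S(h))]}$.
So it suffices to exhibit, with probability tending to one, a single bad $h_{\hat i}$ with $H(L_S(h_{\hat i}))+\frac{\lambda}{m}\log\frac{1}{\Pi(h_{\hat i})}\le T_\lambda(L')+o(1)$, together with $T_\lambda(L')<H(L^*)$, which is equivalent to $L'<\ell_\lambda(L^*)$. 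No Laplace principle over the whole family is needed, since every other hypothesis has error $L'$.

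The genuine gap is the hard instance, which is the real content of the theorem. You never pin it down, and the versions you sketch do not deliver the rate $c(u)=\KL(u\Vert L')$ that you then borrow from \citet{ZS}.

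\textbf{Why the sketched instances fail.} Under a continuous marginal on $[0,1]$, hypotheses that differ from $h^*$ on a finite set have population error $L^*$. Almost surely they also have the same training error as $h^*$, so they neither memorize nor realize $L'$.

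Hypotheses $h^*\oplus\mathbbm{1}_A$ with $A$ of measure $\alpha=(L'-L^*)/(1-2L^*)$ do have error $L'$ (for noise independent of $x$). However, their error pattern on $S$ is $N\oplus D$, where $N$ is the noise pattern shared with $h^*$. Conditionally on $S$, the rate for reaching training error $u$ is
$\min\{L^*\KL(a\Vert 1-\alpha)+(1-L^*)\KL(b\Vert \alpha) : L^*a+(1-L^*)b=u\}$.
By strict joint convexity of $\KL$, this exceeds $\KL(u\Vert L')$ for every $u\neq L'$, unless $\alpha=1/2$ (i.e.\ $L'=1/2$). Such a family therefore only works for $L'$ in a range strictly smaller than $[L^*,\ell_\lambda(L^*))$.

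\textbf{Why $c$ is not a free choice.} Take any set $F$ of hypotheses with error $L'$ and any $u<L'$. Fubini and the Chernoff bound give $\mathbb{E}_S\,\Pi(\{h\in F: L_S(h)\le u\})\le 2^{-m\KL(u\Vert L')}$, so $c\ge \KL(\cdot\Vert L')$. Equality requires bad hypotheses whose training errors are independent of one another and of $h^*$'s.

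\textbf{What the paper's instance provides.} Take $\mathcal{X}=\{0,1\}^\infty$, $y\sim\text{Ber}(1/2)$, $x[0]=y\oplus\text{Ber}(L^*)$, and $x[i]=y\oplus \text{Ber}(L')$ conditionally independent, with $h_i(x)=x[i]$ and $\Pi(h_i)=1/(i\log^2 i+10)$. Then the $L_S(h_i)$, $i\ge 1$, are i.i.d.\ $\text{Bin}(m,L')/m$. Set $k(m)=2\sqrt{m}/(1-L')^m$ for $\lambda\le 1$, or $k(m)=2^{m\KL(\hat L\Vert L')}$ for $\lambda>1$, where $\hat L$ is the minimizer defining $T_\lambda(L')$. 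Among the first $k(m)$ bad hypotheses there is, with probability tending to one, one with training error $0$ (respectively close to $\hat L$), at prior cost $\log k(m)+O(\log m)$.

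\textbf{Two further corrections.} The obstacle you single out, making the prior continuous, is a non-issue, since $\achieve(L^*)$ allows discrete priors. Uniform concentration over the family is also the wrong tool: what is needed is the lower tail of the minimum of many independent binomials.

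With this instance substituted, your Gibbs-ratio argument goes through.
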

$\Qblue_{\lambda_m}$ at $\lambda = 0$ or $\lambda_m \rightarrow 0$ exhibits catastrophic overfitting with the limiting error 1:
\begin{theorem}\label{B:LBLAMBDA0}
    For any $\lambda_m \rightarrow 0$ or $\lambda = 0$, any $L^* \in (0,0.5)$, and $L^* \leq L' < 1$, %there exists a prior $\Pi$, a distribution $Q^*$ with $\KL(Q^* \Vert \Pi) \leq 10$ and source distribution $D$ with $L_\source(Q^*) = L^*$ such that 
    there exists $(\Pi,\source)\in \achieve(L^*)$ such that
    $\mathbb{E}_S \left[L_\source(\Qblue_{\lambda})\right] \rightarrow L'$ as sample size $m \rightarrow \infty$.
\end{theorem}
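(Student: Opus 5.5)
We will prove the statement with an explicit ``needle-in-a-haystack'' instance, in the spirit of the $\lambda\to 0$ construction behind \autoref{Q:LAMBDA0}, now using the explicit Gibbs form \eqref{Gibbs_blue} of $\Qblue_{\lambda}$. Take $\mathcal{X}$ to be a rich space, e.g.\ $[0,1]\times\{\text{flag}\}$ with a continuous marginal. Let $\source$ assign labels so that a fixed predictor $h^*$ has population error exactly $L^*$. The prior is a mixture
\[
\Pi = (1-\epsilon)\,\delta_{h^*} + \epsilon\, \Pi_{\text{bad}},
\]
with fixed $\epsilon$, e.g.\ $\epsilon=1-2^{-10}$. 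With $Q^*=\delta_{h^*}$ we get $\KL(Q^*\Vert\Pi)=\log\frac{1}{1-\epsilon}\le 10$, so $(\Pi,\source)\in\achieve(L^*)$. The component $\Pi_{\text{bad}}$ is a continuous (``random labeling'') distribution over predictors whose population error is $L''$, where $L''\ge L'$ is a level to be tuned. Its key property is that, with high probability over $S$, it puts non-negligible mass on predictors that fit the sample much better than $h^*$. Concretely, we take $h$ to agree with a fresh random function $\mathcal{X}\to\{0,1\}$ that is independent of the population error on the continuous part of $\mathcal{X}$. Then, for a.e.\ $h\sim\Pi_{\text{bad}}$, the training error $L_S(h)$ is distributed as $\mathrm{Bin}(m,L'')/m$. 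This gives a lower tail: $\Pi_{\text{bad}}(L_S(h)\le t)\approx 2^{-m\KL(t\Vert L'')}$.

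The main computation is a Laplace/large-deviations argument on the normalizer of \eqref{Gibbs_blue}. The $h^*$ atom receives weight $(1-\epsilon)2^{-\frac{m}{\lambda}H(L_S(h^*))}\approx 2^{-\frac{m}{\lambda}H(L^*)}$. The bad component receives
\[
\epsilon\,\mathbb{E}_{h\sim\Pi_{\text{bad}}}2^{-\frac{m}{\lambda}H(L_S(h))}\approx 2^{-m\min_t\left[\KL(t\Vert L'')+\frac{1}{\lambda}H(t)\right]}.
\]
When $\lambda=0$ or $\lambda_m\to0$, the exponent $\frac{m}{\lambda_m}H(t)$ dominates, and the minimizing $t$ is driven to $0$. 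The bad component then contributes about $2^{-m\log\frac{1}{1-L''}}$, whereas the $h^*$ atom contributes $2^{-\frac{m}{\lambda_m}H(L^*)}$, which is superexponentially smaller because $H(L^*)>0$. Hence the posterior mass of the bad component tends to $1$. For $\lambda=0$ we read the rule as its limit: minimize $\mathbb{E}_Q H(L_S(h))$ over absolutely continuous $Q$, i.e.\ concentrate on the essential infimum. Under either reading, the posterior concentrates on bad predictors with training error near $0$, and these have population error equal to $L''$ on the random part. We therefore set $L''=L'$, which is allowed for any $L'<1$. This is exactly what makes the limit reach $1$, in contrast to the fixed-$\lambda$ case, where $\ell_\lambda$ arises from balancing the two exponents.

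One subtlety is needed to make the population error of the selected bad predictors exactly $L'$ rather than something affected by their good fit on $S$. Fitting a sample is measure-zero information about the continuous part. So conditioned on $S$, the predictions off the sample keep their prior law, and $L_\source(h)=L'$ for all $h$ in the support of $\Pi_{\text{bad}}$ up to an $o(1)$ term. We also need $L_S(\Qblue)\le 1/2$ to be feasible, which holds since the concentration happens on near-zero training error. Finally, $\mathbb{E}_S L_\source(\Qblue)\to (1-o(1))L'+o(1)L^*\to L'$, using bounded convergence over the high-probability event on which $L_S(h^*)\approx L^*$.

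The step I expect to be the main obstacle is making the Laplace-type estimate uniform when $\lambda_m\to0$. The inverse temperature $m/\lambda_m$ grows faster than $m$, so I must verify that $\Pi_{\text{bad}}$ still puts mass of order $2^{-O(m)}$ on predictors with $L_S(h)=0$ exactly. This holds because $(1-L')^m$ is the exact probability that a random labeling fits every sample point. These predictors have $H=0$ and thus receive no penalty at all, which makes the comparison with the $h^*$ atom clean. With $\Pi_{\text{bad}}$ defined directly via independent coordinate labels, no finer concentration is needed.
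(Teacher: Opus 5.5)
Your overall strategy (drive the posterior mass on the good hypothesis to zero, so that all remaining mass sits on hypotheses of error exactly $L'$) is the right one. However, there is a genuine gap at exactly the property the argument rests on. You need two things.
\begin{enumerate}
\item Every $h$ in the support of $\Pi_{\text{bad}}$ has $L_\source(h)=L'$.
\item With probability tending to one over $S$, $\Pi_{\text{bad}}(\{h:L_S(h)=0\})\ge 2^{-O(m)}$.
\end{enumerate}
The ``fresh random function with independent values on the continuous part of $\mathcal X$'' does not give a prior with both properties. A family with independent nondegenerate values at every point of a continuum has no jointly measurable version. So $L_\source(h)$ is not well defined for its realizations, and the step ``predictions off the sample keep their prior law, hence $L_\source(h)=L'$'' cannot be made rigorous.

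Suppose instead you take $\Pi_{\text{bad}}$ to be an honest distribution over measurable predictors with $L_\source(h)=L'$. Then your statement that $L_S(h)\sim\mathrm{Bin}(m,L')/m$ for a.e.\ $h$ concerns random $S$ with $h$ fixed. By Fubini it yields only $\mathbb{E}_S\,\Pi_{\text{bad}}(\{L_S(h)=0\})=(1-L')^m$, not the high-probability bound in the second requirement. A point mass on a single predictor of error $L'$ satisfies the former, yet gives $\Pi_{\text{bad}}(\{L_S(h)=0\})=0$ with probability tending to one.

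What is missing is independence of the error patterns \emph{across} hypotheses. The paper obtains it by putting the randomness in the features rather than in the prior: $\mathcal X=\{0,1\}^\infty$, $x[i]=y\oplus\mathrm{Ber}(L')$ independently given $y$, $h_i(x)=x[i]$, and the discrete prior $\Pi(h_i)=1/(i\log^2 i+10)$. Then $L_\source(h_i)=L'$ exactly for $i\ge 1$. The first index $\hat i$ with $L_S(h_{\hat i})=0$ satisfies $\hat i\le (m+1)/(1-L')^m$ with probability tending to one, so $\log\frac{1}{\Pi(h_{\hat i})}=O(m)$.

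A second problem is your reliance on the unconstrained Gibbs form \eqref{Gibbs_blue}. Since $H(t)=H(1-t)$, it rewards $L_S(h)\approx 1$ exactly as much as $L_S(h)\approx 0$. Take $L'>1/2$, which the theorem allows and which is precisely the catastrophic regime. A random-labeling component then has mass about $L'^m\gg(1-L')^m$ on $\{L_S(h)=1\}$. So the Gibbs measure has $L_S\to 1$, violates $L_S(Q)\le 1/2$, and is not $\Qblue_{\lambda_m}$. Your claims that the optimal $t$ is driven to $0$ and that feasibility is automatic are false there.

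The paper avoids both the Laplace estimate and the constraint by comparing objectives directly. The point mass $\delta_{h_{\hat i}}$ is feasible, with $\Jbluem(\delta_{h_{\hat i}},S)=\lambda_m\log\frac{1}{\Pi(h_{\hat i})}=o(m)$. Every $Q$ has $\Jbluem(Q,S)\ge m\,Q(h_0)\,H(L_S(h_0))$, with $H(L_S(h_0))\to H(L^*)>0$ in probability. Hence $\Qblue_{\lambda_m}(h_0)\to 0$ in probability, and all remaining mass sits on hypotheses with population error exactly $L'$. Bounded convergence then gives $\mathbb{E}_S L_\source(\Qblue_{\lambda_m})\to L'$. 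For $\lambda=0$, the same comparison with objective value $0$ forces the mass on $h_0$ to vanish.
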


On the other direction, when $\lambda_m = \Omega(m)$, $\Qblue_{\lambda_m}$ leads to catastrophic over-regularizing / underfitting:
\begin{theorem}\label{B:LBLAMBDAINF}
For any $\lambda_m = \Omega(m)$ with $\liminf \frac{\lambda_m}{m}> 10$, any $L^* \in [0,0.5)$, and any $L^* \leq L' < \frac{0.5 - 0.1L^*}{0.9}$, and for $q^*_m =\sigma\left(\text{logit}(0.1) + \frac{m}{\lambda_m}(H(L') - H(L^*))\right)$ where $\sigma(t) = 1/(1+e^{-t})$, %there exists a prior $\Pi$, a distribution $Q^*$ with $\KL(Q^* \Vert \Pi) \leq 10$ and source distribution $D$ with $L_\source(Q^*) = L^*$ such that 
there exists $(\Pi,\source)\in \achieve(L^*)$ such that
$\mathbb{E}_S \left[L_\source(\Qblue_{\lambda_m}(S))\right] = q^*_mL^* + (1-q^*_m)L' + o(1)$. In particular, if $\frac{\lambda_m}{m} \rightarrow \infty$, then $q^*_m \rightarrow 0.1$ and $\mathbb{E}_S \left[L_\source(\Qblue_{\lambda_m}(S))\right] \rightarrow 0.1L^* + 0.9L'$ as $m \rightarrow \infty$.
\end{theorem}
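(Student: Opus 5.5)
We construct a two-hypothesis-block instance, exactly as for the $\QEB$ analogue (\autoref{LAMBDA>>M}), and use the explicit Gibbs form \eqref{Gibbs_blue} of $\Qblue_{\lambda_m}$.

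\textbf{Construction.} Let $\mathcal{X}$ be a large (eventually continuous) domain with uniform marginal. Fix a target $h^*$ and set $Y=h^*(X)\oplus \text{Ber}(L^*)$. Let the prior $\Pi$ put mass $0.1$ on a point mass at $h^*$ and mass $0.9$ on a continuous family $\mathcal{G}$. Each $g\in\mathcal{G}$ disagrees with $h^*$ on a ``random'' subset of measure $p$, chosen so that $L_\source(g)=L'$. Concretely, take $g=h^*\oplus r$ with $r$ a random function, e.g.\ $\mathcal{X}=[0,1]$ with $r$ determined by a continuum of i.i.d.\ labelings. Then $Q^*=\delta_{h^*}$ has $\KL(Q^*\Vert\Pi)=\log 10\le 10$, so $(\Pi,\source)\in\achieve(L^*)$.

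The required condition is $L'<\frac{0.5-0.1L^*}{0.9}$, i.e.\ $0.1L^*+0.9L'<1/2$. This guarantees that the mixture $\Pi$ itself has training error below $1/2$, so the constraint $L_S(Q)\le 1/2$ is inactive, at least asymptotically.

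\textbf{Concentration of training errors.} The point we must establish is that, with probability $1-o(1)$, every $g\in\mathcal{G}$ except a $\Pi$-negligible set has $L_S(g)=L'+o(1)$, uniformly enough that the energies $H(L_S(g))$ are all $H(L')+o(1)$. At the same time we need $L_S(h^*)=L^*+O(\sqrt{\log m/m})$.

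For $h^*$ this is Hoeffding. For the family I would take $\mathcal{G}$ so that, given $S$, the values $L_S(g)$ for $g\sim\Pi|_{\mathcal{G}}$ are distributed as $\frac1m\text{Bin}(m,L')$-type averages. Their $\Pi$-weighted exponential moments $\mathbb{E}_{g}2^{-\frac{m}{\lambda_m}H(L_S(g))}$ are then controlled.

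Since $\frac{m}{\lambda_m}\le 1/10$ eventually, the inverse temperature is bounded. So the tilt $2^{-\frac{m}{\lambda_m}H(L_S(g))}$ varies only by $2^{O(1)\cdot o(1)}$ over the bulk where $L_S(g)=L'\pm O(\sqrt{\log m/m})$. The tail, which has $\Pi$-mass $O(1/m)$, contributes a factor bounded by $2^{1/10}$ at most. This is where the bounded temperature (from $\liminf\lambda_m/m>10$) makes things easy, and it is the step I expect to need the most care: the limit must be taken inside the normalizing constant uniformly in $m$.

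\textbf{Computing the posterior weight.} Using \eqref{Gibbs_blue}, the posterior mass on $h^*$ is
\[
\frac{0.1\cdot 2^{-\frac{m}{\lambda_m}H(L^*)}}{0.1\cdot 2^{-\frac{m}{\lambda_m}H(L^*)}+0.9\cdot 2^{-\frac{m}{\lambda_m}H(L')}}(1+o(1)).
\]
Converting base $2$ to base $e$ in the constants, this is $\sigma\bigl(\text{logit}(0.1)+\frac{m}{\lambda_m}(H(L')-H(L^*))\bigr)+o(1)=q^*_m+o(1)$. I would absorb the $\ln 2$ factor into the convention for $H$, consistent with how $\sigma$ is written in the statement.

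Conditioned on $S$, the remaining mass is spread over $\mathcal{G}$ almost as in $\Pi$. Because the tilt is nearly constant on the bulk and $L_\source(g)=L'$ for all $g$, the population error of the posterior is $q^*_mL^*+(1-q^*_m)L'+o(1)$.

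Taking expectations over $S$ and using boundedness of the loss to absorb the failure events gives the main claim. Finally, if $\lambda_m/m\to\infty$, the argument of $\sigma$ tends to $\text{logit}(0.1)$, so $q^*_m\to 0.1$ and the error tends to $0.1L^*+0.9L'$.
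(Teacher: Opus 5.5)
This is essentially the paper's argument. Both use a prior with mass $0.1$ on the good predictor and $0.9$ on the bad ones, the closed-form Gibbs/sigmoid weight on the good predictor, laws of large numbers for the training errors, continuity of $\sigma$, and the bounded inverse temperature $m/\lambda_m<1/10$.

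The one difference is that the paper takes the bad block to be a single predictor $h_1$, using the independent-bits source of the earlier lower-bound construction. The minimizer is then exactly $\hat q_m=\sigma\bigl(\text{logit}(0.1)+\frac{m}{\lambda_m}(H(L_S(h_1))-H(L_S(h_0)))\bigr)$. Only $L_S(h_0)\to L^*$ and $L_S(h_1)\to L'$ are needed, so the normalizing-constant step you flag as delicate disappears. Nothing in $\mathcal{M}(L^*)$ requires a continuous prior. Moreover, since every $g$ has $L_{\mathcal{D}}(g)=L'$, the population error depends only on the weight on $h^*$, so the continuous family buys you nothing. (The $\ln 2$ mismatch between base-2 $H$ and natural-base $\sigma$ is also present in the paper's statement, so absorbing it is fine.)

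Two points in your write-up need repair. First, a family built from ``a continuum of i.i.d.\ labelings'' is not a measurable family of classifiers, so the binomial structure you rely on for concentration is not available. Either drop the family, as the paper does, or take any jointly measurable family with $L_{\mathcal{D}}(g)=L'$. For example, use shifted intervals of length $p=(L'-L^*)/(1-2L^*)$ on $[0,1]$. Then Fubini, Hoeffding and Markov show that, with high probability, the $\Pi$-mass of $\{g:|L_S(g)-L'|>\epsilon\}$ is exponentially small.

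Second, feasibility of the Gibbs posterior does not follow from $L_S(\Pi)<1/2$. The correct reason, which is the one the paper uses, is that the weight on $h^*$ tends to $q^*_m\ge 0.1$. This holds because $H(L')\ge H(L^*)$ throughout the allowed range; when $L'>1/2$ it follows from $L^*<1-L'<1/2$. Hence $L_S(\hat Q)\to q^*_mL^*+(1-q^*_m)L'\le 0.1L^*+0.9L'<1/2$. Strict convexity of the objective then makes the feasible Gibbs posterior the constrained minimizer.
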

\section{Overview of PAC Bayes Upper and Lower Bounds Proofs}\label{continuous:upperlowerbound}

All detailed proofs are relegated to the Appendices. In this section, in order to convey an idea of the techniques involved, we outline several upper and lower bound proof sketches for the PAC Bayes learning rule. 

\subsection{Upper Bound for $0 < \lambda < \infty$}\label{sec:finite UB proof}
We provide proof sketches for our main upper bound result (\autoref{MDL UB}). We follow closely the methodology of \citet{ZS}, while carefully adapting the discrete prior to the continuous prior case. The following is a sketch of the proof for the PAC-Bayes rule lower bound for $0 < \lambda < \infty$, \autoref{MDL UB}. Details and proofs of other upper bound results for this rule can be found in \autoref{appendix:green_upper_bound}.

Our upper bound results follow from the following generalization guarantee derived from the PAC-Bayes inequality:
\begin{lemma}\label{lemma6.1}
For some constant $C$, any $0 <\lambda < \infty$, with probability $1- \delta$ over $S\sim\source^m$, for any distribution $Q^*$:
\begin{gather}
    \entropicT_{\lambda}(L_\source(\Qgreen_{\lambda}(S))) \leq H(L_\source(Q^*)) + {\lambda}\frac{\log(\frac{m+1}{\delta/2})}{m}+{\lambda}\frac{\KL(Q^* \Vert \Pi)}{m}
    + \sqrt{\frac{2(\log m)^2 \cdot \log \frac{1}{\delta/2}}{m}} \notag\\
\textrm{where:}\quad\quad \entropicT_{\lambda}(q) = \min_{0 \leq p \leq 0.5}{\lambda}\KL(p \Vert q) + H(p) \label{eq:T}
\end{gather}
\end{lemma}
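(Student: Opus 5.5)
We will combine three ingredients: a PAC-Bayes bound in KL form, the optimality of $\Qgreen_\lambda$ against the competitor $Q^*$, and a concentration bound for $H(L_S(Q^*))$.

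\textbf{Step 1 (PAC-Bayes).} We use the Seeger/Maurer form of the PAC-Bayes inequality. With probability $1-\delta/2$, simultaneously for all posteriors $Q$,
\[
m\,\KL\bigl(L_S(Q)\Vert L_\source(Q)\bigr)\le \KL(Q\Vert\Pi)+\log\tfrac{m+1}{\delta/2}.
\]
Apply this to $Q=\Qgreen_\lambda(S)$ and write $p=L_S(\Qgreen_\lambda(S))\le 1/2$ and $q=L_\source(\Qgreen_\lambda(S))$. Multiplying by $\lambda/m$ and adding $H(p)$ gives
\[
\lambda\KL(p\Vert q)+H(p)\le H(p)+\frac{\lambda}{m}\KL(\Qgreen_\lambda\Vert\Pi)+\frac{\lambda}{m}\log\frac{m+1}{\delta/2}.
\]
Since $p\in[0,1/2]$, the left-hand side is at least $\entropicT_\lambda(q)$ by the definition \eqref{eq:T}.

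\textbf{Step 2 (optimality).} The right-hand side above contains $\frac1m\Jgreen(\Qgreen_\lambda,S)$. By the minimality of $\Qgreen_\lambda$ in \eqref{defn:green},
\[
H(p)+\frac{\lambda}{m}\KL(\Qgreen_\lambda\Vert\Pi)\le H(L_S(Q^*))+\frac{\lambda}{m}\KL(Q^*\Vert\Pi).
\]
This requires $Q^*$ to be feasible, i.e.\ $L_S(Q^*)\le 1/2$. If it is not, we compare against a nearby feasible competitor instead, or note that the constraint is only binding in a regime where the bound is trivial. Because $H$ is symmetric about $1/2$, the value of $H(L_S(Q^*))$ is unaffected in any case. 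Note that this is where the data-independent choice of $Q^*$ matters: $Q^*$ is fixed before the sample, so no union bound over competitors is needed for Step 3. For the phrase ``for any distribution $Q^*$'', we read it as ``for each fixed $Q^*$'', with the probability statement holding per $Q^*$.

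\textbf{Step 3 (concentration of $H(L_S(Q^*))$).} $L_S(Q^*)$ is an average of $m$ i.i.d.\ variables $\mathbb{E}_{h\sim Q^*}\mathbbm 1\{h(x_i)\ne y_i\}\in[0,1]$. We bound $H(L_S(Q^*))-H(L_\source(Q^*))$ using McDiarmid's inequality applied directly to the function $S\mapsto H(L_S(Q^*))$.

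This is the main obstacle, because $H$ is not Lipschitz near $0$. Changing one sample moves $L_S$ by at most $1/m$, and for $|a-b|\le 1/m$ we have $|H(a)-H(b)|\le H(1/m)\le \frac{2\log m}{m}$ for $m\ge 2$. So the bounded-differences constant is $c_i=\frac{2\log m}{m}$. We then need $\mathbb{E}H(L_S)\le H(\mathbb{E}L_S)=H(L_\source(Q^*))$, which follows from Jensen's inequality and the concavity of $H$. McDiarmid then gives, with probability $1-\delta/2$,
\[
H(L_S(Q^*))\le H(L_\source(Q^*))+\sqrt{\frac{(2\log m)^2}{m}\cdot\frac{\log(2/\delta)}{2}}=H(L_\source(Q^*))+\sqrt{\frac{2(\log m)^2\log\frac{1}{\delta/2}}{m}},
\]
which is exactly the stated term. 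I need to verify the constant in $H(1/m)\le 2\log(m)/m$, which holds since $H(x)\le x\log(e/x)$.

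\textbf{Conclusion.} A union bound over the events of Step 1 and Step 3 (each of probability $1-\delta/2$) chains the three inequalities and yields the lemma.
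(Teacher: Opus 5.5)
Your proposal is correct and follows the paper's proof step for step: the PAC-Bayes-kl bound at level $\delta/2$ applied to $\Qgreen_\lambda$, the optimality comparison $\Jgreen(\Qgreen_\lambda,S)\le \Jgreen(Q^*,S)$, McDiarmid at level $\delta/2$ for the fixed $Q^*$, and relaxing $p=L_S(\Qgreen_\lambda)\le 1/2$ to the minimum defining $\entropicT_\lambda$. Your bounded-difference constant $H(1/m)\le 2\log m/m$ and the Jensen step $\mathbb{E}\,H(L_S(Q^*))\le H(L_\source(Q^*))$ fill in details the paper leaves implicit, and the feasibility issue you raise in Step 2 is one the paper passes over silently; of your two remedies only negation works, and it needs a prior invariant under negating predictions so that $\KL(\neg Q^*\Vert\Pi)=\KL(Q^*\Vert\Pi)$ (presumably what the paper means by a ``valid'' prior), whereas the claim that the constraint binds only where the bound is trivial is false: with $\Pi=\tfrac14\delta_{h_b}+\tfrac34\delta_{h_c}$, $L_\source(h_b)=0.98$, $L_\source(h_c)=0.3$, $Q^*=\delta_{h_b}$ and $\lambda=1$, the left side tends to $\entropicT_1(0.3)=\log\frac{1}{0.7}\approx 0.51$ while the right side tends to $H(0.98)\approx 0.14$.
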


From \Cref{lemma6.1}, it is immediate that as $m \rightarrow \infty$, $L_\source(\Qgreen_{\lambda}(S)) \rightarrow \entropicT^{-1}_{\lambda}(H(L_\source(Q^*)))=\ell_{\lambda}(L_\source(Q^*))$.  Therefore, to derive our finite-sample guarantee results, we analyze applying $\entropicT^{-1}_{\lambda}$ to the right-hand-side in \Cref{lemma6.1}, which involves simplifying the term $\entropicT^{-1}_{\lambda}(H(L_\source(Q^*)))$. The optimum that gives $\entropicT_{\lambda}(q)$ in equation \eqref{eq:T} is different based on the range of $\lambda$. This is the source of the two possible behaviors of $\ell_{\lambda}(L^*)$ in \eqref{ell}.

\begin{proof} {\textbf{of \autoref{MDL UB}} (sketch):}
For $0 < \lambda \leq 1$ and $0 \leq q \leq 1/2$, the optimum is at $p^* = 0$. So in this case, $\entropicT_{\lambda}(q) = - \lambda \log(1-q)$, and the limiting error is $\entropicT_{\lambda}^{-1}(H(L_\source(Q^*))) = 1 - 2^{-\frac{1}{\lambda}H(L_\source(Q^*))}$.

When $1 < \lambda < \infty$, the minimizer $p^* = \frac{1}{1+ (\frac{1-q}{q})^{\frac{\lambda}{\lambda - 1}}}$. So
$\entropicT_{\lambda}(q) =U_\lambda(q)$, and the limiting error is $U_\lambda^{-1}(H(L_\source(Q^*)))$.
To get the finite sample guarantee, we apply $\entropicT^{-1}_{\lambda}$ to both sides of \Cref{lemma6.1}, and then use the mean value theorem on the right hand side.
\end{proof}
\vspace{-10pt}
% \begin{proof} {\bf of \autoref{lambda infty}, $1\ll\lambda$:}

% As $\lambda\rightarrow\infty$, the first term inside the definition of $\entropicT_{\lambda}$ (equation \eqref{eq:T}) dominates, the minimizer is $p^*=q$, and we have $\entropicT_{\lambda}(q)\rightarrow H(q)$.  We would therefore like to apply $H^{-1}$ to both sides of \autoref{lemma6.1} to obtain a bound on $L_\source(\Qgreen_{\lambda}(S))-L_\source(Q^*)$. 
% \end{proof}

% \begin{proof} {\bf of \autoref{lambda_infty_coro},  Consistency when $1 \ll \lambda_m \ll m/\log m$.}

% Since $1 \ll \lambda_m \ll m/\log m$ , as $m \rightarrow \infty$, all the terms inside the big-O notation on the right hand side of equation \eqref{thm3.4bd} in Theorem \ref{lambda infty} vanish, yielding the consistency result.
% \end{proof}

% \section{Lower Bound Constructions and Proof Sketch}\label{continuous:lowerbound}

\subsection{Lower Bound for $0 < \lambda < \infty$}\label{sec:finite LB proof}

Most of our lower bounds have constructive proofs to show that the limiting error cannot improve on the bounds. The following is a sketch of the proof for the PAC-Bayes rule lower bound for $0 < \lambda < \infty$, \autoref{MDL LB}. Details and proofs of other lower bound results for this rule can be found in \autoref{AppendixB}.

\fbox{\parbox{0.97\linewidth}{
\textbf{PAC-Bayes vs. MDL \citep{ZS}.}
The lower bound proof for the PAC-Bayes rule relies on the same lower bound construction as the discrete prior for MDL \citep{ZS}. Specifically, we choose the competitor $Q^* = \delta_{h^*}$ to be the point mass on the ``good'' predictor $h^*$, and prove that the posterior $\Qgreen_{\lambda}$ concentrates on the subset of ``bad'' hypotheses. The extra technicality of the proof, compared with the discrete case, is the exploitation of the concavity of the entropy function.

\quad Note that for the discrete prior $\pi$, we require the `good' predictor $h^*$ to satisfy $\pi(h^*) \geq 0.1$, or above some fixed positive amount. In the continuous case, we require $\KL(Q^*\|\Pi) \leq 10$, or below some fixed bounded. For a point mass $QQ^*=\delta_h$, asking $\KL(\delta_h\|\Pi) = -\log \Pi(h)$ to be below a fixed bound is indeed equivalent to the discrete prior being above some fixed positive amount.
}}
\begin{proof}{\textbf{\autoref{MDL LB}} (sketch)}
For any $0<\lambda<\infty$, any $0<L^*<0.5$, and any $L^* \leq L' < \ell_{\lambda}(L^*)$, we explicitly construct a source distribution (hard learning problem) $\source$ and a prior $\Pi$, and show a distribution $Q^*$ with $\KL(Q^* \Vert \Pi) \leq 10$ and $L_\source(Q^*) = L^*$, such that $\mathbb{E}_S \left[L_\source(\Qgreen_{\lambda}(S))\right] \rightarrow L'$ as the sample size increases ($m \rightarrow \infty$).

We adopt the same construction as in \citep{ZS}, Section 6.1, where each hypothesis extracts a binary coordinate from the feature vector, with a source $\source$ that makes $h_0$ be the ``good'' hypothesis with error $L^\star$ and all other $h_i$ have error $L'>L^*$. We choose a universal prior $\Pi(h_i)=1/(i~\cdot~ \log^2i+10)$. We emphasize that this construction is based on a \emph{discrete} hypothesis space and \emph{discrete} prior, which is permissible. In the PAC-Bayes notation, we thus have that $Q^* = \delta_{h_0}$ is the point mass on $h_0$ with $L^*=L_\source(Q^*)$. $Q^*$ satisfies the needed $\KL(Q^* \Vert \Pi) = -\log \Pi(h_0) = \log 10 < 10$.

% We will construct a distribution $\source$ over infinite binary sequences $x = x[0] x[1]... \in \mathcal{X} = \{0,1\}^\infty$ and binary labels $y\in\{\pm1\}$, and a (discrete) prior over hypothesis $h_i(x)=x[i]$ with\footnote{This is a simple and explicit ``universal'' prior, in the sense that $-\log \Pi(h_i) =\log i+O(\log\log i)$, and it ensures $\Pi(h_0)=0.1$ (we treat $0\cdot \log^2 0 = 0$).} $\Pi(h_i)=1/(i \cdot \log^2 i + 10)$, where each hypothesis is based on one bit of the input (this just allows us to directly specify the joint distribution over the behavior of the hypothesis by specifying the distribution of $x$).  In our constructions $h_0(x)=x[0]$ will always be the ``good'' predictor, $h^*=h_0$, with low population error $L_\source(h_0)=\Pr[x[0]\neq y]=L^*$, while all $h_i$, $i\geq 1$, will be ``bad'', with $L_\source(h_i)=L'>L^*$.

% Given $L^*,L'$, we consider a source distribution $\source$ where $y = \text{Ber}(\frac{1}{2})$, and each bit $x[i]$ is independent conditioned on $y$, with $x[0] = y \oplus \text{Ber}(L^*)$, while $x[i] = y \oplus \text{Ber}(L')$. This ensures $L_\source(h_0)=L^*$ while $L_\source(h_i)=L'$ for $i\geq 1$.

We then prove that, in this construction, as $m\rightarrow\infty$, as long as $L^* \leq L' < \ell_{\lambda}(L^*)$, $\Qgreen_{\lambda}$ will place zero mass on $h_0$ (in the limit), i.e.,~$\Pr_{S\sim \source^m}\left[\Qgreen_\lambda(h_0) > 0\right]\xrightarrow{m\rightarrow\infty} 0$ and thus $L_\source(\Qgreen_\lambda(S))\xrightarrow{p} L'$, as needed. We do this by showing that, in the limit, there is always an alternative posterior $\delta_{h_{\hat{i}}}$ that is a point mass on a single hypothesis $h_{\hat{i}}$, such that if $\hat Q_\lambda$ places any positive mass on $h_0$, it will have a higher (worse) value of the objective $\Jgreen(Q, S)$ \eqref{Gibbs_green} than $\delta_{h_{\hat{i}}}$.
\begin{enumerate}
    \item $\lambda \leq 1$: In this regime, we use the point mass distribution on an interpolating (zero training error) predictor $h_{\hat{i}}$ as an alternative.
    \item $\lambda > 1$: In this regime, we use the empirical error minimizer among the first $k(m)$ bad predictors, with the right choice of the function $k$, as the alternative $h_{\hat{i}}$. \qedhere
\end{enumerate}

\vspace{-12pt}
\end{proof}

\section{Summary}

We have two major contributions in this paper. First, we provided a tight analysis, with matching upper bounds and worst-case lower bounds, on the limiting error of the PAC-Bayes learning rule $\Qgreen_{\lambda}$, based on an arbitrary continuous prior, for any $0<\lambda<\infty$, and generalizes to any $\lambda$. In this sense, we fully extended the analysis of \citet{ZS} to and unified its discrete MDL framework with a continuous PAC-Bayes type rule. We also considered a related learning rule $\Qblue_{\lambda}$ for which we establish the same lower bounds, but left the upper bounds as an open question.

Second, we provided a Bayesian interpretation of the PAC-Bayes predictor $\QEB/\lambda$. Although the Gibbs form of the PAC-Bayes predictor is well known, and the name itself implies a strong relationship with Bayesian analysis, as far as we are aware an explicit link of this nature was not previously made. (Perhaps the closest past attempt at a Bayesian interpretation is the work of \citet{germain2016pac}, which however requires choosing log-likelihood as the loss function within a well-specified model, in contrast to the agnostic zero-one loss setting studied here.) In particular, we showed how  these distinct PAC-Bayes bound minimizers correspond to Empirical Bayes and to the Profile Posterior, and how to interpret their general form with a regularization parameter of $\lambda \neq 1$. 

To further complete the picture, in addition to adding the consistency characterization of Profile Posterior and the full Bayesian posterior, it would also be interesting to relate them explicitly to a more standard PAC-Bayes predictor minimizing $L_S(Q)+\lambda \KL(Q\Vert \Pi)$, without an entropy transformation, which corresponds to a different choice of the Gibbs posterior temperature.

\clearpage

\bibliographystyle{plainnat}
\bibliography{Reference}

\newpage
\appendix

\section{Upper Bounds for the PAC-Bayes Learning Rule}\label{appendix:green_upper_bound}
In this Section, we provide proof sketches for \autoref{MDL UB} and \autoref{lambda infty}, \Cref{cor:finite} and \Cref{lambda_infty_coro}. The proof follows closely the methodology of \citet{ZS}, while carefully adapting the discrete prior to the continuous prior case, the details of which can be found in Appendix A of \citep{ZS}.

We derive our upper bounds from the core generalization guarantee stated below, which relies on the PAC-Bayes bound \citep{mcallester2003simplified}:
\begin{lemma}\label{lemma8}
For some constant $C$, any $0 <\lambda < \infty$, with probability $1- \delta$ over $S\sim\source^m$, for any distribution $Q^*$:
\begin{gather}
    \entropicT_{\lambda}(L_\source(\Qgreen_{\lambda}(S))) \leq H(L_\source(Q^*)) + {\lambda}\frac{\log(\frac{m+1}{\delta/2})}{m}+{\lambda}\frac{\KL(Q^* \Vert \Pi)}{m}
    + \sqrt{\frac{2(\log m)^2 \cdot \log \frac{1}{\delta/2}}{m}} \notag\\
\textrm{where:}\quad\quad \entropicT_{\lambda}(q) = \min_{0 \leq p \leq 0.5}{\lambda}\KL(p \Vert q) + H(p) \label{eq:T2}
\end{gather}
\end{lemma}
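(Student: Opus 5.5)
We will combine three ingredients: a PAC-Bayes-kl bound for the population error of the learned posterior, an optimality comparison against the competitor $Q^*$, and a concentration bound for $L_S(Q^*)$.

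\textbf{Step 1 (uniform PAC-Bayes-kl bound).} By the PAC-Bayes bound of \citet{mcallester2003simplified} (Maurer's form), with probability at least $1-\delta/2$, simultaneously for all $Q$,
\[
\KL(L_S(Q)\Vert L_\source(Q)) \le \frac{\KL(Q\Vert\Pi)+\log\frac{m+1}{\delta/2}}{m}.
\]
We apply this to $Q=\Qgreen_\lambda(S)$. Write $p=L_S(\Qgreen_\lambda)\le 1/2$, which holds by the constraint in \eqref{defn:green}, and $q=L_\source(\Qgreen_\lambda)$. Multiplying by $\lambda$ and adding $H(p)$ gives
\[
\entropicT_\lambda(q)\le \lambda\KL(p\Vert q)+H(p)\le \frac{1}{m}\Big(mH(p)+\lambda\KL(\Qgreen_\lambda\Vert\Pi)\Big)+\lambda\frac{\log\frac{m+1}{\delta/2}}{m}.
\]
The first inequality holds because $\entropicT_\lambda(q)$ is the minimum over $p\in[0,1/2]$. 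The bracket is exactly $\Jgreen(\Qgreen_\lambda,S)/m$.

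\textbf{Step 2 (optimality).} If $L_S(Q^*)\le 1/2$, then $Q^*$ is feasible, so $\Jgreen(\Qgreen_\lambda,S)\le mH(L_S(Q^*))+\lambda\KL(Q^*\Vert\Pi)$. Dividing by $m$ produces the $\lambda\KL(Q^*\Vert\Pi)/m$ term. If instead $L_S(Q^*)>1/2$, we compare against a mixture of $Q^*$ with its negation, or note that the statement is only meaningful when $L_\source(Q^*)<1/2$. Alternatively, we can use $H(L_S(Q^*))\ge H(\min(L_S(Q^*),1/2))$ and handle this edge case via the symmetric-prior remark. This is a minor technicality to dispatch.

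\textbf{Step 3 (concentration of $H(L_S(Q^*))$).} This is the main obstacle, because $H$ is not Lipschitz near $0$. Here $L_S(Q^*)$ is an average of $m$ i.i.d.\ $[0,1]$-valued terms $\mathbb{E}_{h\sim Q^*}\mathbbm{1}\{h(x_i)\neq y_i\}$ with mean $L_\source(Q^*)$. We bound $H(L_S(Q^*))-H(L_\source(Q^*))$ in one of two ways:
\begin{itemize}
\item Split on whether the values are at least $1/m$, where $|H'|\le \log m$, and use Hoeffding there.
\item Apply McDiarmid directly to $f(S)=H(L_S(Q^*))$. Changing one sample moves $L_S$ by at most $1/m$, and the entropy modulus of continuity gives $|H(a)-H(b)|\le |a-b|\log\frac{e}{|a-b|}\lesssim \frac{\log m}{m}$.
\end{itemize}
The second route is cleaner. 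It gives bounded differences $c_i=O(\log m/m)$, hence with probability $1-\delta/2$,
\[
H(L_S(Q^*))\le \mathbb{E}H(L_S(Q^*))+\sqrt{\frac{2(\log m)^2\log\frac{1}{\delta/2}}{m}}.
\]
Concavity of $H$ (Jensen) then gives $\mathbb{E}H(L_S(Q^*))\le H(L_\source(Q^*))$. This uses concavity in exactly the helpful direction and avoids any bias term.

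\textbf{Step 4 (combine).} A union bound over the two events (each of probability $\delta/2$) chains Steps 1–3 into the stated inequality, with constants absorbed into the explicit forms.
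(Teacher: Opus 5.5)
Your proposal is correct and is essentially the paper's proof. It applies the PAC-Bayes-kl bound to $\Qgreen_\lambda(S)$ with $p=L_S(\Qgreen_\lambda)\le 1/2$ minimized out to give $\entropicT_\lambda$, uses the comparison $\Jgreen(\Qgreen_\lambda,S)\le\Jgreen(Q^*,S)$, and applies McDiarmid to $H(L_S(Q^*))$ for a fixed $Q^*$, all joined by a union bound. You are more explicit than the paper on two points it leaves silent: the Jensen step $\mathbb{E}\,H(L_S(Q^*))\le H(L_\source(Q^*))$, and the feasibility of $Q^*$. For feasibility, of the fixes you list only the negation argument under a symmetric prior works; the inequality $H(L_S(Q^*))\ge H(\min(L_S(Q^*),1/2))$ fails whenever $L_S(Q^*)>1/2$.
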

\fbox{\parbox{0.99\linewidth}{
\textbf{PAC-Bayes vs. MDL \citep{ZS}.}
The MDL with discrete prior $\pi$ competes with a predictor $h^*$ and pays $|h^*|_\pi=-\log\pi(h^*)$,
whereas with the continuous prior distribution $\Pi$, PAC-Bayes competes with a distribution $Q^*$ and pays $\KL(Q^*\|\Pi)$ (Indeed, for a point mass $Q=\delta_h$, one has $\KL(\delta_h\|\Pi) = -\log \Pi(h)$, recovering the discrete penalty). For notational consistency, note that $T_\lambda$ is the $Q_\lambda$ function defined in the discrete case in \citet{ZS}.
}}
\begin{proof}
We begin with a standard concentration statement that upper-bounds the KL divergence between the empirical error and population error.  This is a special case of the PAC-Bayes bound \citep[][Equation (4)]{mcallester2003simplified}:
%, and is obtained directly by taking a union bound over a binomial tail bound\footnote{More specifically, by applying the binomial tail bound of Theorem 1 in \citet{binomials} to each predictor $h$ in the support of $\pi$, with per-predictor failure probability $\delta_h=\pi(h)\delta/2$, and taking a union bound over all $h$.}
\begin{equation}\label{pacybayes1}
\begin{split}
\Pr_{S\sim\source^m}\left[\forall_Q \KL\left(L_S(Q) \middle\Vert  L_\source(Q) \right) \leq \frac{\KL(Q \Vert \Pi) + \log(\frac{m+1}{\delta/2})}{m} \right]
\geq 1-\delta/2.
\end{split}
\end{equation}
Specializing \eqref{pacybayes1} to $Q = \Qgreen_{\lambda}$, multiplying both sides by $\lambda$, and adding $H(L_S(\Qgreen_{\lambda}))$ to both sides, we obtain with probability $\geq 1-\delta/2$,
\begin{align}
{\lambda}\KL(L_S(\Qgreen_{\lambda}) \Vert L_\source(\Qgreen_{\lambda})) + H(L_S(\Qgreen_{\lambda}))
&\leq H(L_S(\Qgreen_{\lambda})) +{\lambda}\frac{\KL(\Qgreen_{\lambda} \Vert \Pi)}{m} + {\lambda}\frac{\log(\frac{m+1}{\delta/2})}{m} \label{eqn8lhs}\\ 
&\leq H(L_S(Q^*)) + {\lambda}\frac{\KL(Q^* \Vert \Pi)}{m} + {\lambda}\frac{\log(\frac{m+1}{\delta/2})}{m},\label{ineqn10}\\
\intertext{and with probability $\geq 1-\delta$:}
&\leq  H(L_\source(Q^*))+{\lambda}\frac{\KL(Q^* \Vert \Pi)}{m} + {\lambda}\frac{\log(\frac{m+1}{\delta/2})}{m}+ \sqrt{\frac{2(\log m)^2 \cdot \log \frac{1}{\delta/2}}{m}},\label{ineqn11}
\end{align}
where the second inequality \eqref{ineqn10} follows from the definition of $\Qgreen_{\lambda}$. In the third inequality \eqref{ineqn11} we use McDiarmid's inequality to bound the difference (with another failure probability of $\delta/2$) between the entropy of the empirical and population loss of the fixed distribution $Q^*$.

Our goal is to get an upper bound on the population error $L_\source(\Qgreen_\lambda(S))$.  The difficulty is that the left-hand-side in \eqref{eqn8lhs} also involves the unknown empirical error $L_S(\Qgreen_\lambda(S))$, for which we only know by definition $L_S(\Qgreen_\lambda(S))\leq1/2$.  Instead, we'll replace this empirical error with $p=L_S(\Qgreen_\lambda(S))$ and minimize \eqref{eqn8lhs} w.r.t~$p$, as in $\entropicT_{\lambda}(L_\source(\Qgreen_{\lambda}(S))) = \min_{0 \leq p \leq 0.5} \lambda\KL(p \Vert L_\source(\Qgreen_{\lambda}(S))) + H(p)$.  From the definition of this $\entropicT_{\lambda}(q)$, we therefore have that $\entropicT_{\lambda}(L_\source(\Qgreen_{\lambda}(S)))$ is upper bounded by \eqref{eqn8lhs}, from which the Lemma follows.
\end{proof}

\Cref{lemma8} already implies the limiting error $L_\source(\Qgreen_{\lambda}(S)) \rightarrow \entropicT^{-1}_{\lambda}(H(L_\source(Q^*)))=\ell_{\lambda}(L_\source(Q^*))$, as $m \rightarrow \infty$. So to get finite sample guarantees stated in the main theorems, we need to further simplify the expression $\entropicT^{-1}_{\lambda}(H(L_\source(Q^*)))$, and also analyze applying $\entropicT^{-1}_{\lambda}$ to the right-hand-side in \Cref{lemma8}.

\begin{proof} {\bf of \autoref{MDL UB} part (1), $0<\lambda\leq 1$:}

For $0 < \lambda \leq 1$ and $0 \leq q \leq 1/2$,  the function ${\lambda}\KL(p \Vert q) + H(p)$ is monotonically increasing in $p$, and thus the minimizer occurs at $p^* = 0$. Consequently, $\entropicT_{\lambda}(q) = - \lambda \log(1-q)$, and by \Cref{lemma8}, the limiting error is $\entropicT_{\lambda}^{-1}(H(L_\source(Q^*))) = 1 - 2^{-\frac{1}{\lambda}H(L_\source(Q^*))}$. To convert this into a finite sample guarantee, we apply the inequality $ 1- 2^{-\alpha-A} \leq 1- 2^{-\alpha} + A$ (for $A,\alpha>0$) \citep[adapted from Lemma A.4 in][]{MS}.
\end{proof}
\vspace{-10pt}
\begin{proof} {\bf of \autoref{MDL UB} part (2), $\lambda > 1$:}

When $1 < \lambda < \infty$, we differentiate $\lambda \cdot \KL(p \Vert q) + H(p)$ w.r.t. $p$, set the derivative to zero, and obtain the minimizer $p^* = \frac{1}{1+ (\frac{1-q}{q})^{\frac{\lambda}{\lambda - 1}}}$. Substituting this value back into the objective in we have 
$\entropicT_{\lambda}(q) = \lambda \cdot \KL\left(\frac{1}{1+ (\frac{1-q}{q})^{\frac{\lambda}{\lambda - 1}}}\middle\Vert q\right) + H\left(\frac{1}{1+ (\frac{1-q}{q})^{\frac{\lambda}{\lambda - 1}}}\right)=U_\lambda(q)$, and the limiting error is $U_\lambda^{-1}(H(L_\source(Q^*)))$.
We apply $U^{-1}_{\lambda}$ to both sides of \Cref{lemma8}, and then invoke the mean value theorem on the right hand side to get the bound with uniform rate. We apply the mean value theorem and bound the derivative of $U_{\lambda}^{-1}$ uniformly in terms of $L_\source(Q^*)$, and this gives a pre-factor of $1/(1-2L(Q^*))$.  
\end{proof}
\vspace{-10pt}
\begin{proof} {\bf of \autoref{lambda infty}, $\lambda \gg 1$:}

As $\lambda\rightarrow\infty$, the $\KL$ term in the definition of $\entropicT_{\lambda}$ (equation \eqref{eq:T2}) dominates, and the minimizer is $p^*=q$, and hence $\entropicT_{\lambda}(q)\rightarrow H(q)$.  This motivates applying $H^{-1}$ to both sides of \Cref{lemma8} to obtain a bound on $L_\source(\Qgreen_{\lambda}(S))-L_\source(Q^*)$.  For finite  (but large) $\lambda$, we rely on the following Lemma, which gives a quantification of how close $U_{\lambda}(q)$ is to the entropy function $H(q)$:
\begin{lemma}\label{lemma9} For any $\lambda >1$ and any $0 \leq q \leq \frac{1}{2}$, $H(q) < U_{\lambda}(q)+ \lambda/(\lambda-1)^2$.
\end{lemma}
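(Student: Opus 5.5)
The plan is to use the variational definition of $U_\lambda$ and not its closed form. By the derivation in the proof of \autoref{MDL UB} part (2), $U_\lambda(q)=\entropicT_\lambda(q)=\min_{0\le p\le 1/2}\lambda\KL(p\Vert q)+H(p)$. Since $H(q)=\lambda\KL(q\Vert q)+H(q)$ is one of the values in this minimum, we have $U_\lambda(q)\le H(q)$, so only the gap $H(q)-U_\lambda(q)$ needs to be bounded. The case $q=0$ is trivial (both sides are $0$), so take $0<q\le 1/2$ and write $b=\log\frac{1-q}{q}\ge 0$.

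\textbf{Step 1 (rewrite the gap).} The identity $H(p)=-p\log q-(1-p)\log(1-q)-\KL(p\Vert q)$ gives $H(q)-H(p)=\KL(p\Vert q)-(q-p)\,b$. Writing $\mu=\lambda-1>0$, we get
\[
H(q)-\big(\lambda\KL(p\Vert q)+H(p)\big)=(p-q)\,b-\mu\KL(p\Vert q)\quad\text{for every } p .
\]
Hence
\[
H(q)-U_\lambda(q)\le\sup_{p\in[0,1]}\Big[(p-q)\,b-\mu\KL(p\Vert q)\Big].
\]
Enlarging the range from $[0,1/2]$ to $[0,1]$ is harmless because it only increases the supremum. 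Note also that the first-order condition puts the true minimizer at $p^*\le q\le 1/2$.

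\textbf{Step 2 (Donsker--Varadhan / Legendre transform).} Set $t=b/\mu$. In bits, $\sup_p[sp-\KL(p\Vert q)]=\log(1-q+q2^{s})$. Applying this with $s=-t$ and pulling out $\mu$ gives
\[
H(q)-U_\lambda(q)\le\mu\Big[qt+\log\big(1-q+q2^{-t}\big)\Big]=\frac{\mu}{\ln 2}\,\ln\mathbb{E}\,e^{-\theta(X-q)},
\]
where $X\sim\mathrm{Ber}(q)$ and $\theta=t\ln 2\ge 0$.

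\textbf{Step 3 (one-sided Bernstein bound; the key step).} Hoeffding's $\theta^2/8$ bound is not enough here, because $b$ blows up as $q\to 0$. Instead I use $e^{-x}\le 1-x+x^2/2$ for $x\ge 0$, applied to the lower tail of a nonnegative variable. Together with $\ln(1+u)\le u$, this gives
\[
\ln\mathbb{E}\,e^{-\theta(X-q)}\le \tfrac{\theta^2}{2}\,\mathbb{E}X^2=\tfrac{q\theta^2}{2}.
\]
Therefore
\[
H(q)-U_\lambda(q)\le \frac{\mu q t^2\ln 2}{2}=\frac{\ln 2}{2\mu}\,q\log^2\!\frac{1-q}{q}\le\frac{\ln 2}{2\mu}\,q\log^2\frac1q .
\]

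\textbf{Step 4 (constants).} The function $q\log^2(1/q)$ on $(0,1/2]$ is maximized at $q=e^{-2}$. Its maximum value is $4/(e^2\ln^2 2)\approx 1.13$, so the prefactor satisfies $\frac{\ln2}{2}\cdot 1.13\approx 0.39<1$. Hence
\[
H(q)-U_\lambda(q)<\frac1{\mu}=\frac{\lambda-1}{(\lambda-1)^2}<\frac{\lambda}{(\lambda-1)^2},
\]
which is exactly the claim of \Cref{lemma9}.

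The main obstacle is Step 3, namely getting a bound that stays uniform as $q\to 0$ despite $b\to\infty$. The fix is to exploit the factor $q$ in the variance through the one-sided exponential bound. If that constant turns out tighter than computed, the slack between $1/\mu$ and $\lambda/\mu^2$ leaves room to absorb it.
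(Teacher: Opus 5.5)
Your proof is correct once you fix a sign slip in Step 1. The identity is $H(q)-H(p)=\KL(p\Vert q)+(q-p)\,b$. For example, at $p=0$ this reads $\log\frac{1}{1-q}+q\log\frac{1-q}{q}=H(q)$. So the gap is $(q-p)\,b-\mu\KL(p\Vert q)$, not $(p-q)\,b-\mu\KL(p\Vert q)$.

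Your Step 2 already uses the correct sign. Donsker--Varadhan with $s=-t$ produces $\mu[qt+\log(1-q+q2^{-t})]$ only from the linear term $(q-p)t$. With the sign as you wrote it, you would land on the upper-tail quantity $\ln\mathbb{E}\,e^{\theta(X-q)}$. There the one-sided inequality $e^{-x}\le 1-x+x^2/2$ is unavailable, and the bound $q\theta^2/2$ genuinely fails (e.g.\ $q=0.01$, $\theta=5$).

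Two further remarks. First, you do not need the variational identification $U_\lambda=T_\lambda$, or convexity, at all. By definition $U_\lambda(q)=\lambda\KL(p^*\Vert q)+H(p^*)$ for a specific $p^*\in[0,1]$, and this alone gives $H(q)-U_\lambda(q)\le\sup_{p\in[0,1]}[(q-p)b-\mu\KL(p\Vert q)]$. Second, Steps 1--2 are in fact equalities. Writing $2^{-t}=(q/(1-q))^{1/\mu}$ shows that $\mu[qt+\log(1-q+q2^{-t})]=H(q)+(\lambda-1)\log\big(q^{\lambda/(\lambda-1)}+(1-q)^{\lambda/(\lambda-1)}\big)$. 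This is exactly $H(q)-U_\lambda(q)$, because the closed form is $U_\lambda(q)=-(\lambda-1)\log\big(q^{\lambda/(\lambda-1)}+(1-q)^{\lambda/(\lambda-1)}\big)$, the R\'enyi entropy of order $\lambda/(\lambda-1)$. So the only loss is in Step 3, and Steps 3--4 check out.

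The paper gives no proof of this lemma. It is stated inside the proof of \autoref{lambda infty} and deferred to the analysis of \citet{ZS}, so there is no in-paper route to compare against; your argument serves as a self-contained replacement.

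It also proves more than is claimed: $H(q)-U_\lambda(q)\le\frac{\ln 2}{2(\lambda-1)}\,q\log^2\frac{1-q}{q}<\frac{0.4}{\lambda-1}$. This is below $\lambda/(\lambda-1)^2$ for every $\lambda>1$. For each fixed $q$ it is tight as $\lambda\to\infty$ up to a factor $1/(1-q)\le 2$. That factor comes from Step 3 replacing $\mathrm{Var}(X)=q(1-q)$ by $\mathbb{E}X^2=q$. Either bound is $O(1/\lambda)$ for large $\lambda$, which is all that \autoref{lambda infty} uses.
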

By \Cref{lemma8} and \Cref{lemma9}, it implies that with probability at least $ 1 - \delta$, 
\begin{equation}\label{thm3.4H}
\begin{split}
    H(L_\source(\Qgreen_{\lambda}(S))) &\leq H(L_\source(Q^*)) + \sqrt{\frac{2(\log m)^2 \cdot \log \frac{1}{\delta/2}}{m}}
    + {\lambda}\frac{\log(\frac{m+1}{\delta/2})}{m}+{\lambda}\frac{\KL(Q^* \Vert \Pi)}{m} + \frac{\lambda}{(\lambda-1)^2}.
\end{split}
\end{equation}
Applying $H^{-1}$ to both sides of \eqref{thm3.4H}, we then invoke the mean value theorem and bound the derivative of $H^{-1}$ by $\frac{\frac{1}{2} - L_\source(Q^*)}{1 - H(L_\source(Q^*))} \leq \frac{\ln 2}{1 - 2L_\source(Q^*)}$. This gives the stated theorem. \end{proof}

\begin{proof} {\bf of \Cref{lambda_infty_coro},  Consistency when $1 \ll \lambda_m \ll m/\log m$.}

Under the condition $1 \ll \lambda_m \ll m/\log m$ , every term inside the big-O notation on the right hand side of equation \eqref{thm3.4H} in \autoref{lambda infty} vanishes as $m \rightarrow \infty$. This implies the stated consistency conclusion.
\end{proof}
\section{Lower Bounds for the PAC-Bayes Learning Rule}\label{AppendixB}
In this Section, we provide the detailed lower bound proofs for \autoref{MDL LB}, \autoref{Q:LAMBDA0} and \autoref{LAMBDA>>M}.

\subsection{Lower Bound for $0 < \lambda < \infty$ (proof of \autoref{MDL LB})} \label{subsection:B.1}
\restate{Theorem}{MDL LB} 
\begin{restatethm}[Agnostic Lower Bound]
    For any $0<\lambda<\infty$, any $L^* \in (0,0.5)$ and $L^* \leq L' < \ell_{\lambda}(L^*)$, there exists a prior $\Pi$, a distribution $Q^*$ with $\KL(Q^* \Vert \Pi) \leq 10$ and source distribution $\source$ with $L_\source(Q^*) = L^*$ such that $\mathbb{E}_S \left[L_\source(\Qgreen_{\lambda}(S))\right] \rightarrow L'$ as sample size $m \rightarrow \infty$. 
\end{restatethm}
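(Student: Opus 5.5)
We will use the construction of \citet{ZS}, Section 6.1. Take $\mathcal X=\{0,1\}^\infty$ and hypotheses $h_i(x)=x[i]$ with discrete prior $\Pi(h_i)=1/(i\log^2 i+10)$. The source is $y\sim\text{Ber}(1/2)$ with the bits $x[i]$ conditionally independent given $y$: $x[0]=y\oplus\text{Ber}(L^*)$ and $x[i]=y\oplus\text{Ber}(L')$ for $i\ge1$. Then $Q^*=\delta_{h_0}$ has $L_\source(Q^*)=L^*$ and $\KL(Q^*\Vert\Pi)=\log 10\le 10$, so $(\Pi,\source)\in\achieve(L^*)$.

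Every $h_i$ with $i\ge1$ has population error $L'$. Hence $L_\source(Q)=Q(h_0)L^*+(1-Q(h_0))L'$ for any posterior $Q$. It therefore suffices to show $\Qgreen_\lambda(S)(h_0)\to0$ in probability; bounded convergence then gives $\mathbb E_S L_\source(\Qgreen_\lambda(S))\to L'$. We will in fact prove that, with probability $\to1$, the point mass $\delta_{h_{\hat i}}$ on a suitable bad hypothesis strictly beats any $Q$ with $Q(h_0)\ge\epsilon$ in $\Jgreen$.

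\textbf{Lower-bounding $\Jgreen(Q,S)$.} Write $Q=q\delta_{h_0}+(1-q)R$, where $R$ is supported on the bad hypotheses. Concavity of $H$ gives
\[
H(L_S(Q))\ge qH(L_S(h_0))+(1-q)H(L_S(R)).
\]
The chain rule for KL, together with $\Pi(h_0)=0.1$, gives
\[
\KL(Q\Vert\Pi)\ge \KL(q\Vert 0.1)\ge 0,
\]
since the conditional term is nonnegative. Moreover $L_S(h_0)\to L^*$ almost surely. This is where concavity enters: it lets us treat the mass on $h_0$ as paying at least $mH(L^*-o(1))$ per unit weight. For $q$ bounded below this is a linear-in-$m$ cost of order $m\,q\,H(L^*)$.

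\textbf{Competitor, case $\lambda\le1$.} Pick $k(m)=2^{m(H(L^*)/\lambda')}$ candidate bad hypotheses, where $\lambda'$ is slightly below $\lambda$ and chosen so that the resulting rates work out. Each bad $h_i$ interpolates the sample with probability $(1-L')^m$. If $-\log(1-L')<H(L^*)/\lambda$, which is equivalent to $L'<1-2^{-H(L^*)/\lambda}=\ell_\lambda(L^*)$, then with high probability some $i\le k$ interpolates. Its objective is $\lambda\log(1/\Pi(h_i))\approx\lambda\log k<mH(L^*)-\Omega(m)$.

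\textbf{Competitor, case $\lambda>1$.} Take $h_{\hat i}$ to be the empirical error minimizer among the first $k=2^{mr}$ bad hypotheses. By a large-deviations estimate, its empirical error is close to $p$ with $\KL(p\Vert L')=r$. Its objective is then about
\[
m\bigl(H(p)+\lambda\KL(p\Vert L')\bigr).
\]
Optimizing over $p$ gives $m\,\entropicT_\lambda(L')=mU_\lambda(L')$, and $U_\lambda(L')<H(L^*)$ exactly when $L'<\ell_\lambda(L^*)$. The large-deviations step needs $p\le L'$ and a uniform (binomial lower-tail) concentration for the minimum over $k$ i.i.d.\ binomials; this is taken from \citet{ZS}.

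\textbf{Comparison and main obstacle.} In both cases the competitor's objective is $m(H(L^*)-c)$ for some $c>0$. We must rule out mixtures that put weight $q\ge\epsilon$ on $h_0$ and also exploit low-error bad hypotheses through $R$. The mixed objective is at least
\[
mqH(L_S(h_0))+(1-q)\bigl[mH(L_S(R))\bigr]+\lambda\,(\text{KL of }R\text{ part}).
\]
I expect the main obstacle to be lower-bounding the $R$-part by $(1-q)$ times the best achievable bad-only objective, roughly $m(H(L^*)-c)$ up to $o(m)$. This uses that the optimum over bad-only posteriors is achieved near point masses, and that the KL of a mixture decomposes with only an $O(1)$ entropy loss $H(q)$. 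Granting this, the mixed objective is at least $q\,mH(L^*)+(1-q)m(H(L^*)-c)-o(m)$. That exceeds $m(H(L^*)-c)$ by $qmc-o(m)>0$, so the minimizer must have $Q(h_0)<\epsilon$ with probability $\to1$. Since $\epsilon$ is arbitrary, the claim follows.
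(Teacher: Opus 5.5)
Your construction, the reduction to showing $\hat Q_\lambda(S)(h_0)\to 0$ in probability, the $\lambda>1$ competitor, and the concavity split all match the paper's proof. Two things need fixing.

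\textbf{Sign slip for $\lambda\le 1$.} You need
\[
\log\tfrac{1}{1-L'}<H(L^*)/\lambda'<H(L^*)/\lambda ,
\]
that is, $\lambda'$ slightly \emph{above} $\lambda$. With $\lambda'<\lambda$ you get $\lambda\log k=mH(L^*)\lambda/\lambda'>mH(L^*)$, and the competitor loses to $\delta_{h_0}$. The paper's choice $k(m)=2\sqrt m/(1-L')^m$ gives $\lambda\log k=\lambda m\log\frac{1}{1-L'}+O(\log m)$ directly.

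\textbf{The granted step.} The step you ``grant'' is the whole difficulty, and the route you propose for it aims at the wrong target. The paper's write-up sidesteps it by taking $\hat Q_\lambda$ supported on $\{h_0,h_{\hat i}\}$, so you are right that a general bad part $R$ must be handled.

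Your plan is to show that no bad-only posterior beats your explicit competitor by more than $o(m)$. This is unproved, and for your $\lambda\le 1$ competitor it is false. The first interpolating index is about $(1-L')^{-m}$, with objective about $\lambda m\log\frac{1}{1-L'}$. That is $\Omega(m)$ below $\lambda\log k$.

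No lower bound on bad-only posteriors is needed. Write $q=\hat Q(h_0)$ and $\hat Q=q\delta_{h_0}+(1-q)R$. The chain rule gives exactly $\KL(\hat Q\Vert\Pi)=q\log 10+(1-q)\KL(R\Vert\Pi)-H(q)$. Combined with concavity of $H$, this yields
\[
J_\lambda(\hat Q,S)\ \ge\ q\,mH(L_S(h_0))+(1-q)\,J_\lambda(R,S)-\lambda .
\]
Let $D=mH(L_S(h_0))-J_\lambda(R,S)$ and split into two cases.

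\emph{Case $L_S(R)\le 1/2$.} Then $R$ is itself feasible, so $J_\lambda(\hat Q,S)\le J_\lambda(R,S)$. With the display this gives $qD\le\lambda$. Combining the display instead with $J_\lambda(\hat Q,S)\le J_\lambda(\delta_{h_{\hat i}},S)\le m(H(L^*)-c)$ gives $(1-q)D\ge cm-o(m)$. Hence $q=O(1/m)$ with probability tending to one.

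\emph{Case $L_S(R)>1/2$.} Then $L_S(h_0)\le L_S(\hat Q)\le 1/2$, so $J_\lambda(\hat Q,S)\ge mH(L_S(h_0))=mH(L^*)-o(m)$. This contradicts the competitor bound.

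This second case is not vacuous. When $\lambda<1$ one can have $L'>1/2$. Bad hypotheses with $L_S$ near $1$ then have unconstrained objective near $\lambda m\log\frac{1}{L'}$, below what feasible posteriors can reach. So any ``best bad-only objective'' must be taken over feasible $R$.

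Your lower-bound route can be made rigorous via the PAC-Bayes inequality of the upper-bound section, applied to bad-only posteriors (all of population error $L'$). That gives $J_\lambda(R,S)\ge mT_\lambda(L')-O(\log m)$ for feasible $R$. But it additionally requires a competitor that nearly attains this value, and the comparison above avoids that requirement.
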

\fbox{\parbox{0.99\linewidth}{
\textbf{PAC-Bayes vs. MDL \citep{ZS}.}
The lower bound proof for the continuous case relies on the same lower bound construction in the discrete case \citep{ZS}. Specifically, we choose the competitor $Q^* = \delta_{h^*}$ to be the point mass on the `good' predictor $h^*$, and prove that the posterior $\Qgreen_{\lambda}$ concentrates on the subset of `bad' hypotheses. The extra technicality of the proof, compared with the discrete case, is the exploitation of the concavity of the entropy function.

Note that for the discrete prior $\pi$, we require the `good' predictor $h^*$ to satisfy $\pi(h^*) \geq 0.1$,
whereas in the continuous case, we require $\KL(Q^*\|\Pi) \leq 10$. (Indeed, for a point mass $Q=\delta_h$, one has $\KL(\delta_h\|\Pi) = -\log \Pi(h)$, recovering the discrete penalty).
}}

We use the same construction $\source$ as in \citep{ZS}, Section 6.1 over infinite binary sequences $x = x[0] x[1]... \in \mathcal{X} = \{0,1\}^\infty$ and binary labels $y\in\{\pm1\}$, where each hypothesis selects a binary coordinate from $y$ as $h_i(x)=x[i]$. We use a (discrete) prior over hypotheses with $\Pi(h_i)=1/(i \cdot \log^2 i + 10)$. Given $L^*,L'$, define $\source$ such that $y = \text{Ber}(\frac{1}{2})$, and $x[0] = y \oplus \text{Ber}(L^*)$, while $x[i] = y \oplus \text{Ber}(L')$, where each bit $x[i]$ is independent conditioned on $y$. For this $\source$, we denote $h_0(x)=x[0]$ to be the ``good'' predictor with low population error $L_\source(h_0)=L^*$, whereas all $h_i$ for $i\geq 1$ are ``bad'' predictors with high population error $L_\source(h_i)=L'>L^*$.

% We want to note that this construction is based on a \emph{discrete} hypothesis space and \emph{discrete} prior, which is permissible. 
Consider $Q^* = \delta_{h_0}$ be the point mass on $h_0$, and then $L_\source(Q^*) = L^*$, and $\KL(Q^* \Vert \Pi) = -\log \Pi(h_0) = \log 10 < 10$.

We show that as $m\rightarrow\infty$, as long as $L^* \leq L' < \ell_{\lambda}(L^*)$, $\Qgreen_{\lambda}$ will place zero mass on $h_0$, i.e.~$\Pr_{S\sim \source^m}\left[\Qgreen_\lambda(h_0) > 0\right]\xrightarrow{m\rightarrow\infty} 0$ and thus $L_\source(\Qgreen_\lambda(S))\xrightarrow{p} L'$.
We discuss the proof for $\lambda \leq 1$ and $\lambda > 1$ separately below.
\vspace{-10pt} 
\begin{enumerate}
    \item $\lambda \leq 1$: Choose $k(m) = \frac{2\sqrt{m}}{(1-L')^m}$, then (with probability approaching one), there exists some ``bad'' classifier $h_{\hat{i}}$ with $1\leq \hat{i} \leq k(m)$ such that $L_S(h_{\hat{i}}) = 0$ (proved in B.1.1 in \citet{ZS}). Consider the point mass on the predictor $h_{\hat{i}}$ as $\delta_{h_{\hat{i}}}$. Then we have $\Jgreen(\delta_{h_{\hat{i}}}, S) = \lambda(-\log \Pi(h_{\hat{i}}))$. Hence, the objective of the minimizer $\Qgreen_{\lambda}$ should be upper bounded by the objective of $\delta_{h_{\hat{i}}}$, i.e., $\Jgreen(\Qgreen_{\lambda}, S) \leq \lambda(-\log \Pi(h_{\hat{i}}))$ as $m \rightarrow \infty$.

    Suppose $\Qgreen_\lambda(h_0) = q_0 > 0$ and $\Qgreen_\lambda(h_{\hat{i}}) =  1- q_0$ for the sake of contradiction. For the ``good'' predictor $h_0$ we have that $L_S(h_0)\xrightarrow{p} L_\source(h_0)=L^*$.  By the definition of $\Pi$ and that $h_{\hat{i}} \in \mathcal{H}_{k(m)}$, we have $-\log \Pi(h_{\hat{i}}) \leq m\log \frac{1}{1-L'} + C\log m$, for some constant $C>0$. So
    
    \begin{align}
        &\Jgreen(\Qgreen_{\lambda}, S) = mH(L_S(\Qgreen_{\lambda})) + \lambda \KL(\Qgreen_{\lambda} \Vert \Pi)
        = mH(q_0L_S(h_0))
         + \lambda \left(q_0\log \frac{q_0}{\Pi(h_0)}+ (1 - q_0)\log \frac{1 - q_0}{\Pi(h_{\hat{i}})}\right)\\
        &\geq mq_0H(L_S(h_0)) + \lambda(1 - q_0)\log \frac{1}{\Pi(h_{\hat{i}})} + O(1) \xrightarrow{p} mq_0H(L^*) + \lambda(1 - q_0)\log \frac{1}{\Pi(h_{\hat{i}})} + O(1)
        \label{Hconcave}\\
        &> -mq_0 \lambda \log (1-L') + \lambda(1 - q_0)\log \frac{1}{\Pi(h_{\hat{i}})} + \Omega(m)\label{ineq28}
        \geq q_0 \lambda \log \frac{1}{\Pi(h_{\hat{i}})} + \lambda(1 - q_0)\log \frac{1}{\Pi(h_{\hat{i}})}+ \Omega(m)\\
        &= \Jgreen(\delta_{h_{\hat{i}}}, S) + \Omega(m),
    \end{align}
    where in the first inequality of \eqref{Hconcave} we used $H(tx) \geq tH(x), \forall t \in (0,1)$ since $H$ is concave and $H(0) = 0$. And in the first inequality of \eqref{ineq28} we used $L' < 1 - 2^{-H(L^*)/{\lambda}}$, and the asymptotic notation is w.r.t.~$m\rightarrow\infty$. This leads to a contradiction. Thus, as long as $L' < 1 - 2^{-H(L^*)/{\lambda}}$, as $m \rightarrow \infty$, the mass on $h_0$ should go to zero with probability one, which implies the bound for the expected risk:
    as $m \rightarrow \infty$, $\mathbb{E}_S \left[L_\source(\Qgreen_{\lambda}(S))\right] \rightarrow L'$.  This completes the proof for $ 0< \lambda \leq 1$.

    \item $\lambda > 1$: Choose $k(m) = 2^{m\KL(\hat{L}\Vert L')}$ where $\hat{L} = \frac{1}{1+ (\frac{1-L'}{L'})^\frac{\lambda}{\lambda - 1}}$. Consider the empirical error minimizer $h_{\hat{i}}$ among the first $k(m)$ bad predictors, i.e.~such that $L_S(h_{\hat{i}}) = \min_{i=1\ldots k(m)} L_S(h_i)$.  This is the minimum of $k(m)$ independent (scaled) binomials $\text{Bin}(m,L')$ so concentrates s.t.~$\KL( L_S(h_{\hat{i}}) \Vert L') \xrightarrow{p} \frac{\log k(m)}{m}=\KL(\hat{L}\Vert L')$(by Theorem 1 in \citet{binomials}), and hence $L_S(h_{\hat{i}}) \xrightarrow{p} \hat{L}$ (proved in B.1.2 in \citet{ZS}). Consider the point mass on the predictor $h_{\hat{i}}$ as $\delta_{h_{\hat{i}}}$. Then we have $\Jgreen(\delta_{h_{\hat{i}}}, S) = mH(L_S(h_{\hat{i}})) +  \lambda(-\log \Pi(h_{\hat{i}}))$. Hence, the objective of the minimizer $\Qgreen_{\lambda}$ should be upper bounded by the objective of $\delta_{h_{\hat{i}}}$, i.e., $\Jgreen(\Qgreen_{\lambda}, S) \leq mH(L_S(h_{\hat{i}})) + \lambda(-\log \Pi(h_{\hat{i}}))$ as $m \rightarrow \infty$.
    
    Suppose $\Qgreen_\lambda(h_0) = q_0 > 0$ and $\Qgreen_\lambda(h_{\hat{i}}) =  1- q_0$ for the sake of contradiction.  By the definition of $\Pi$ and that $h_{\hat{i}} \in \mathcal{H}_{k(m)}$, we have $-\log \Pi(h_{\hat{i}}) \leq m\KL(\hat{L}\Vert L') + C'\log m$, for some $C' > 0$. Then 
    \begin{align}
        &\Jgreen(\Qgreen_{\lambda}, S) = mH(L_S(\Qgreen_{\lambda})) + \lambda \KL(\Qgreen_{\lambda} \Vert \Pi)\\
        &\geq m\left(q_0H(L_S(h_0)) + (1-q_0)H(L_S(h_{\hat{i}}))\right)
        + \lambda(1 - q_0)\log \frac{1}{\Pi(h_{\hat{i}})} + O(1) \label{Hconcave:lambda>1}\\
        % &\xrightarrow{p} mH(q_0L^* + (1-q_0)L_S(h_{\hat{i}}))
        % + \lambda(1 - q_0)\log \frac{1}{\Pi(h_{\hat{i}})} + O(1)\\
        &\xrightarrow{p} mq_0H(L^*) + (1-q_0)\Jgreen(\delta_{h_{\hat{i}}}, S) + O(1)\\
        &> mq_0U_{\lambda}(L') + (1-q_0)\Jgreen(\delta_{h_{\hat{i}}}, S)+ \Omega(m) \label{ineq34}\\
        &= mq_0\left(\lambda \KL(\hat{L}\Vert L')+H(\hat{L})\right)
        + (1-q_0)\Jgreen(\delta_{h_{\hat{i}}}, S)+ \Omega(m)\\
        &\geq q_0\left(\lambda (-\log \Pi(h_{\hat{i}}))+mH(L_S(h_{\hat{i}}))\right)
        + (1-q_0)\Jgreen(\delta_{h_{\hat{i}}}, S)+ \Omega(m)\\       &>q_0\Jgreen(\delta_{h_{\hat{i}}},S) + (1-q_0)\Jgreen(\delta_{h_{\hat{i}}}, S)+ \Omega(m)
        = \Jgreen(\delta_{h_{\hat{i}}},S)+ \Omega(m),
    \end{align}
where in \eqref{Hconcave:lambda>1} we used the concavity of $H$, and in \eqref{ineq34} we relied on  $L' < U_{\lambda}^{-1} ( H(L^*) )$ and the definition of $U_\lambda$, and the asymptotic notation is w.r.t.~$m\rightarrow\infty$. This leads to a contradiction. Hence, as long as  $L' < U_{\lambda}^{-1} ( H(L^*) )$, as $m \rightarrow \infty$, the mass on $h_0$ will go to zero with probability one. 
% It is important to note that in the definition of $\MDL_{\lambda}$, we also require the selected hypothesis $h$ to satisfy $L_S(h) \leq \frac{1}{2}$ (as in equation \eqref{defn:MDL}). And we just showed that with probability one $L_S(h_{\hat{i}}) \rightarrow \hat{L} < L' < U_{\lambda}^{-1} ( H(L^*) ) < \frac{1}{2}$, so $h_{\hat{i}}$ satisfies the condition and has a lower $\MDL$ objective than $h_0$.
    
This implies the bound for the expected risk:
as $m \rightarrow \infty$, $\mathbb{E}_S \left[L_\source(\Qgreen_{\lambda}(S))\right] \rightarrow L'$. This completes the proof for $ 1< \lambda < \infty$. This completes the proof for \autoref{MDL LB}.
\end{enumerate}

% Proof of \autoref{Q:LAMBDA0}.

\subsection{Proof of \autoref{Q:LAMBDA0}}
\restate{Theorem}{Q:LAMBDA0} 
\begin{restatethm}
 For any $\lambda_m \rightarrow 0$ or $\lambda = 0$, any $L^* \in (0,0.5)$, and $L^* \leq L' < 1$, there exists a prior $\Pi$, a distribution $Q^*$ with $\KL(Q^* \Vert \Pi) \leq 10$ and source distribution $\source$ with $L_\source(Q^*) = L^*$ such that $\mathbb{E}_S \left[L_\source(\Qgreen_{\lambda})\right] \rightarrow L'$ as sample size $m \rightarrow \infty$.
\end{restatethm}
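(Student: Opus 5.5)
We reuse the construction from the proof of \autoref{MDL LB}. The coordinates are $h_i(x)=x[i]$, the label is $y=\text{Ber}(1/2)$, and $x[0]=y\oplus\text{Ber}(L^*)$, $x[i]=y\oplus\text{Ber}(L')$ for $i\ge1$, with the universal prior $\Pi(h_i)=1/(i\log^2 i+10)$ and competitor $Q^*=\delta_{h_0}$. This gives $\KL(Q^*\Vert\Pi)=\log 10<10$ and $L_\source(Q^*)=L^*$. The plan is to show that $\Pr_S[\Qgreen_{\lambda_m}(h_0)>0]$, or more precisely the mass $\Qgreen_{\lambda_m}(h_0)$, tends to $0$ in probability. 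The posterior then concentrates on bad predictors, each with population error $L'$, so $L_\source(\Qgreen_{\lambda_m}(S))\xrightarrow{p}L'$. Since losses are bounded, the expectation converges as well.

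The alternative posterior is again a point mass on an interpolating bad predictor. Since $L'<1$, we have $(1-L')^m>0$. Taking $k(m)=2\sqrt m/(1-L')^m$, the argument of B.1.1 in \citet{ZS} shows that with probability tending to one some $h_{\hat i}$ with $1\le\hat i\le k(m)$ has $L_S(h_{\hat i})=0$. For this predictor, $\Jgreen(\delta_{h_{\hat i}},S)=\lambda_m\log\frac1{\Pi(h_{\hat i})}\le\lambda_m\bigl(m\log\frac1{1-L'}+C\log m\bigr)=o(m)$, because $\lambda_m\to0$. When $\lambda=0$ exactly, the objective value is $0$.

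Now suppose the minimizer puts mass $q_0$ on $h_0$. By concavity of $H$ with $H(0)=0$, together with $L_S\ge 0$ for the remaining mass and monotonicity of $H$ on $[0,1/2]$, we get $mH(L_S(\Qgreen))\ge mH(q_0L_S(h_0))\ge mq_0H(L_S(h_0))$. Since $L_S(h_0)\xrightarrow{p}L^*\in(0,1/2)$, this is at least $mq_0H(L^*)(1-o_p(1))$, and the KL term is nonnegative. Optimality then gives $mq_0H(L^*)(1-o_p(1))\le \Jgreen(\delta_{h_{\hat i}},S)=o(m)$, so $q_0=o_p(1)$.

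This is weaker than exact zero mass, but it suffices. The population error is $q_0L^*+(1-q_0)L'\to L'$, because every hypothesis other than $h_0$ has error $L'$. Bounded convergence then yields $\mathbb E_S[L_\source(\Qgreen_{\lambda_m})]\to L'$.

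The main obstacle is the $\lambda=0$ case and the constraint $L_S(Q)\le 1/2$. At $\lambda=0$ the minimizer is non-unique: any $Q$ supported on zero-training-error predictors attains objective $0$. Since $L_S(h_0)>0$ with high probability, every minimizer must then have $q_0=0$, and the argument goes through for any tie-breaking. For $\lambda_m>0$ we should also check that the Gibbs characterization in \autoref{thm:EBGibbs} is not needed: the comparison above uses only optimality, and $\delta_{h_{\hat i}}$ is feasible because its training error is $0\le1/2$. The remaining care is making the $o(m)$ bound uniform with high probability, via the B.1.1 event intersected with the concentration of $L_S(h_0)$.
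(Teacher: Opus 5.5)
Your proposal is correct and matches the paper's proof essentially step for step. It uses the same construction, compares against a point mass on an interpolating bad predictor (whose objective is $\lambda_m\log\frac{1}{\Pi(h_{\hat i})}=o(m)$, or exactly $0$ when $\lambda=0$), and applies the concavity bound $mH(L_S(Q))\ge mH(q_0L_S(h_0))\ge mq_0H(L_S(h_0))$ to force the mass on $h_0$ to vanish. Your version is slightly more careful than the paper's in two places: you conclude $q_0=o_p(1)$ explicitly, where the paper's step ``$mQ(h_0)H(L^*)=\Omega(m)$'' tacitly treats $Q(h_0)$ as bounded away from zero, and you justify the first inequality through monotonicity of $H$ under the constraint $L_S(Q)\le 1/2$.
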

\fbox{\parbox{0.99\linewidth}{
\textbf{PAC-Bayes vs. MDL \citep{ZS}.}
We choose the competitor $Q^* = \delta_{h^*}$ to be the point mass on the `good' predictor $h^*$. The learning rule minimizes the empirical error $L_S(Q)$ in this case and we prove that the posterior $\Qgreen_{\lambda}$ concentrates on the subset of `bad' hypotheses. We exploit the concavity of the entropy function for the proof.
}}
\begin{proof}
We use the same source distribution and prior described in \autoref{subsection:B.1}. $Q^* = \delta_{h_0}$ is the point mass on $h_0$, and $L_\source(Q^*) = L^*$, and $\KL(Q^* \Vert \Pi) = -\log \Pi(h_0) = \log 10 < 10$.

For $\lambda = 0$, then the PAC-Bayes learning rule simply minimizes $L_S(Q)$. There exists a.s.~some $\hat{i}$ with $L_S(h_{\hat{i}})=0$, whereas $L_S(h_0)\xrightarrow{p}L^*>0$. Consider $Q'$ being a point mass on $h_{\hat{i}}$, then we have $\Jgreen(Q') = mH(L_S(Q')) +\lambda \KL(Q'\Vert \Pi) = 0$. So by definition of $\Qgreen_{\lambda}$, we must have $\Jgreen(\Qgreen_{\lambda}) = mH(L_S(\Qgreen_{\lambda})) +\lambda \KL(\Qgreen_{\lambda}\Vert \Pi) = mH(L_S(\Qgreen_{\lambda})) \leq 0$. Any $Q$ with a positive mass on $h_0$, i.e., $Q(h_0) > 0$ must have $\Jgreen(Q) = mH(L_S(Q)) > mH(L_S(h_0)Q(h_0)) \geq  mQ(h_0)H(L_S(h_0))\xrightarrow{p} mQ(h_0)H(L^*) > 0$, which leads to a contradiction. Hence, as $m \rightarrow \infty$, $\Qgreen_{\lambda}(h_0)$ should go to zero with probability approaching one, which implies the bound for the expected risk: as $m \rightarrow \infty$, $\mathbb{E}_S \left[L_\source(\Qgreen_{\lambda}(S))\right] \rightarrow L'$.  

For $\lambda_m \rightarrow 0$ as $m \rightarrow \infty$, denote $\hat{i}$ to be the smallest index $\hat{i}\geq 1$ such that $L_S(h_{\hat{i}}) = 0$. We have shown that $\hat{i} \leq\frac{m+1}{(1-L')^m}$ with probability approaching one. Consider $Q'$ being a point mass on $h_{\hat{i}}$, then we have $\Jgreen(Q') = mH(L_S(Q')) +\lambda_m \KL(Q'\Vert \Pi) \leq\lambda_m \log\frac{m+1}{(1-L')^m}=o(m)$ as $\lambda_m \rightarrow 0$. So by definition of $\Qgreen_{\lambda}$, we must have $\Jgreen(\Qgreen_{\lambda}) = mH(L_S(\Qgreen_{\lambda})) +\lambda_m \KL(\Qgreen_{\lambda}\Vert \Pi) \leq o(m)$ as $m$ increases. Any $Q$ with a positive mass on $h_0$, i.e., $Q(h_0) > 0$ must have $\Jgreen(Q) =mH(L_S(Q)) + \lambda_m \KL(Q\Vert \Pi) > mH(L_S(h_0)Q(h_0))\geq  mQ(h_0)H(L_S(h_0)) \xrightarrow{p} mQ(h_0)H(L^*) = \Omega(m)$, which leads to a contradiction. Hence, with probability approaching one, $\Qgreen_{\lambda}(h_0)$ should go to zero, and so $\mathbb{E}_S \left[L_\source(\Qgreen_{\lambda}(S))\right] \rightarrow L'$.   
\end{proof}
\subsection{Proof of \autoref{LAMBDA>>M}}
\restate{Theorem}{LAMBDA>>M} 
\begin{restatethm}
For any $\lambda_m = \Omega(m)$ with $\liminf \frac{\lambda_m}{m}> 10$, any $L^* \in [0,0.5)$, and any $L^* \leq L' < \frac{0.5 - 0.1L^*}{0.9}$, and for $q^*_m = \arg \min _{q}\lambda_m\KL(q \Vert 0.1) + mH(qL^* + (1-q)L')$, there exists a prior $\Pi$, a distribution $Q^*$ with $\KL(Q^* \Vert \Pi) \leq 10$ and source distribution $\source$ with $L_\source(Q^*) = L^*$ such that $\mathbb{E} L_\source(\Qgreen_{\lambda_m}(S)) = q^*_mL^* + (1-q^*_m)L' + o(1)$. In particular, if $\frac{\lambda_m}{m} \rightarrow \infty$, then $q^*_m \rightarrow 0.1$ and $\mathbb{E} L_\source(\Qgreen_{\lambda_m}(S)) \rightarrow 0.1L^* + 0.9L'$ as $m \rightarrow \infty$.
\end{restatethm}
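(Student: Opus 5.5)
We plan to build a two-point construction in which $\Pi$ puts mass $0.1$ on a good predictor and $0.9$ on a bad one, and then show that $\Qgreen_{\lambda_m}$ is asymptotically the one-dimensional optimizer $q^*_m$. The source is the one from \autoref{subsection:B.1}, restricted to two coordinates: $y\sim\text{Ber}(\frac12)$, $x[0]=y\oplus\text{Ber}(L^*)$, $x[1]=y\oplus\text{Ber}(L')$, conditionally independent given $y$. Set $h_0(x)=x[0]$, $h_1(x)=x[1]$, and take the prior $\Pi(h_0)=0.1$, $\Pi(h_1)=0.9$. With $Q^*=\delta_{h_0}$ we get $L_\source(Q^*)=L^*$ and $\KL(Q^*\Vert\Pi)=\log 10<10$, so $(\Pi,\source)\in\achieve(L^*)$.

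\textbf{Step 1: reduce to one dimension.} Every posterior is $Q_q$ with $Q_q(h_0)=q$, $Q_q(h_1)=1-q$, and $\KL(Q_q\Vert\Pi)=\KL(q\Vert 0.1)$. The empirical objective is therefore
\[
\hat J_m(q)=\lambda_m\KL(q\Vert 0.1)+mH\bigl(qL_S(h_0)+(1-q)L_S(h_1)\bigr),
\]
and its population proxy is $J_m(q)=\lambda_m\KL(q\Vert0.1)+mH(qL^*+(1-q)L')$, whose minimizer is $q^*_m$. Feasibility, i.e.\ $L_S(Q)\le\frac12$, holds for $q$ near $0.1$ with high probability, because $0.1L^*+0.9L'<\frac12$ by the assumption on $L'$.

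\textbf{Step 2: $\hat J_m/m$ is uniformly close to $J_m/m$.} Divide both objectives by $m$ and write $r_m=\lambda_m/m$, which satisfies $\liminf r_m>10$. The concentration $|L_S(h_j)-L_\source(h_j)|=O_p(m^{-1/2})$ implies
\[
\sup_q\bigl|H(qL_S(h_0)+(1-q)L_S(h_1))-H(qL^*+(1-q)L')\bigr|\to0
\]
in probability, since $H$ is uniformly continuous on $[0,1]$. Next, $f_m(q)=r_m\KL(q\Vert0.1)+H(qL^*+(1-q)L')$ is strictly convex, and its curvature is bounded below uniformly in $m$. Indeed, $\KL(\cdot\Vert0.1)''\ge 4/\ln 2$ and $r_m>10$, while $H''(a)\,(L^*-L')^2$ is bounded below by $-(L'-L^*)^2/(\ln2\cdot a(1-a))$, evaluated at an argument $a\ge L^*$ bounded away from $0$ and $1$. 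We will check that $10\cdot4$ dominates this negative term for $L'<\frac{0.5-0.1L^*}{0.9}$; this is where the constants $10$ and $0.1$ come from. If the check is tight at edge cases, we will instead verify the weaker statement that the minimizer is unique and stable under small perturbations. Uniform strong convexity, together with uniform closeness of the objectives, gives $|\hat q_m-q^*_m|\to0$ in probability, where $\hat q_m=\Qgreen_{\lambda_m}(h_0)$. The minimizer cannot escape to the boundary $q\in\{0,1\}$, because the KL term forces an interior optimum.

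\textbf{Step 3: conclude.} We have $L_\source(\Qgreen_{\lambda_m})=\hat q_mL^*+(1-\hat q_m)L'$, which is bounded, so convergence in probability gives
\[
\mathbb{E}L_\source(\Qgreen_{\lambda_m})=q^*_mL^*+(1-q^*_m)L'+o(1).
\]
If $\lambda_m/m\to\infty$, the first-order condition
\[
r_m\log\frac{q(0.9)}{(1-q)(0.1)}=(L'-L^*)H'(qL^*+(1-q)L')
\]
has a bounded right-hand side. Dividing by $r_m\to\infty$ shows the left-hand logarithm tends to zero, so $q^*_m\to0.1$ and the limit is $0.1L^*+0.9L'$.

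The main obstacle is Step 2. There we need the $m$-dependent family $f_m$ to be uniformly well-conditioned, so that the empirical minimizer tracks $q^*_m$ even though $q^*_m$ itself moves with $m$.
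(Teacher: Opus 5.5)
Your construction, the one-dimensional reduction, and the ``uniform closeness plus uniform strong convexity'' argument are the same as in the paper. The gap is in Step 2, the step you flag as the main obstacle: the curvature check you propose does not hold. Write $a=qL^*+(1-q)L'$ and $r_m=\lambda_m/m$. Then
\[
\ln 2\cdot f_m''(q)=\frac{r_m}{q(1-q)}-\frac{(L'-L^*)^2}{a(1-a)}.
\]
You lower-bound the first term by the constant $4r_m>40$, and you bound the second using that $a\ge L^*$ is bounded away from $0$. This fails in two ways.

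\begin{itemize}
\item The statement allows $L^*=0$. Then $a=(1-q)L'\to0$ as $q\to1$, and the negative term is unbounded.
\item Even for small positive $L^*$ the constant comparison is false. Take $L^*=0.005$ and $L'=0.5$, which is admissible since the bound on $L'$ is about $0.555$. Then $(L'-L^*)^2/(L^*(1-L^*))\approx49>40$.
\end{itemize}

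Your fallback, checking that the minimizer is ``unique and stable under small perturbations'', does not substitute for this. You need a separation margin that is uniform in $m$ while $q^*_m$ moves with $m$.

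The missing idea is to keep the $1/(1-q)$ blow-up of the KL curvature rather than discard it. It is exactly what offsets the entropy term as $q\to1$. Use $a\ge(1-q)L'$, $1-a\ge1-L'$ and $(L'-L^*)^2\le L'^2$. For all large $m$ this gives
\[
\ln 2\cdot f_m''(q)\ \ge\ \frac{1}{1-q}\Bigl(\frac{r_m}{q}-\frac{L'}{1-L'}\Bigr)\ \ge\ r_m-\frac{L'}{1-L'}\ \ge\ 10-1.25 .
\]
Here $L'/(1-L')<1.25$ follows from $L'<\frac{0.5-0.1L^*}{0.9}\le\frac59$. This is where the constants $10$ and $0.1$ actually enter.

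Two smaller points need repair.

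\begin{itemize}
\item \textbf{Feasibility.} What must be feasible is the comparison point $q^*_m$, not ``$q$ near $0.1$''. When $\lambda_m/m$ stays bounded, $q^*_m$ need not be near $0.1$. Since $L(0.1)<\frac12$, you have $f_m'(0.1)=(L^*-L')\log\frac{1-L(0.1)}{L(0.1)}\le0$, so convexity gives $q^*_m\ge0.1$. Hence $L(q^*_m)\le L(0.1)<\frac12$ with a fixed margin, and $q^*_m$ is feasible with high probability. Then $\hat J_m(\hat q_m)\le\hat J_m(q^*_m)$ and your argument goes through.
\item \textbf{The case $\lambda_m/m\to\infty$.} The right-hand side $(L'-L^*)H'(L(q))$ of your first-order condition is not bounded a priori when $L^*=0$, since it blows up as $q\to1$. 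It is simpler to compare $f_m(q^*_m)\le f_m(0.1)$. This gives $r_m\KL(q^*_m\Vert0.1)\le H(L(0.1))-H(L(q^*_m))\le1$, and hence $q^*_m\to0.1$.
\end{itemize}
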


\fbox{\parbox{0.99\linewidth}{
\textbf{PAC-Bayes vs. MDL \citep{ZS}.}
We choose the competitor $Q^* = \delta_{h^*}$ to be the point mass on the `good' predictor $h^*$. Different from the previous cases where we prove the posterior mass on $h_0$ vanishes, in the case $\lambda_m = \Omega(m)$, in particular, if $\lambda_m \rightarrow \infty$, the weight which the posterior $\Qgreen_{\lambda_m}$ puts on each predictor converges to its prior, i.e, we ignore the data and just use the prior at the limit.
}}

\begin{proof}
We use the source distribution described in \autoref{subsection:B.1} with only two predictors $\{h_0, h_1\}$ with the prior $\Pi(h_0)=0.1$ and $\Pi(h_1)= 0.9$.

$Q^* = \delta_{h_0}$ is the point mass on $h_0$, and $L_\source(Q^*) = L^*$, and $\KL(Q^* \Vert \Pi) = -\log \Pi(h_0) = \log 10 < 10$.

Let $\Qgreen_{\lambda_m}$ be the output distribution, and let $\Qgreen_{\lambda_m}(h_0) = q$, $\Qgreen_{\lambda_m}(h_1) = 1-q$. 
\begin{equation}
\begin{split}
    \hat{J}_{\lambda_m}(q, S)=\lambda_m\KL(q \Vert 0.1) + mH(L_S(\Qgreen_{\lambda_m})) = \lambda_m\KL(q \Vert 0.1) + mH(\hat{L}(q)),
\end{split}
\end{equation}
where $\hat{L}(q) = qL_S(h_0) + (1-q)L_S(h_1).$ Denote the minimizer of $\hat{J}_{\lambda_m}(q, S)$ as $\qhat = \arg \min_{q \in (0,1)} \hat{J}_{\lambda_m}(q, S)$.

Let $J_{\lambda_m}(q) = \lambda_m\KL(q \Vert 0.1) + mH(L(q))$, where $L(q) =qL^* + (1-q)L'$. Denote the minimizer $\qstar = \arg\min_{q \in (0,1)} J_{\lambda_m}(q)$, then $\qstar$ satisfies
\begin{align}
    0 = J_{\lambda_m}'(\qstar) = \lambda_m\left(\log 
    \frac{\qstar}{1-\qstar} - \log \frac{0.1}{0.9}\right) + m(L^* - L')\log \frac{1 - L(\qstar)}{L(\qstar)},
\end{align}
or equivalently,
\begin{equation}
\begin{split}
    \text{logit}(\qstar) = \text{logit}(0.1) + \frac{m}{\lambda_m}(L' - L^*)
    \log \frac{1 - L(\qstar)}{L(\qstar)}.
    % &> \log \frac{0.1}{1-0.1}.
\end{split}
\end{equation}
Note that $J_{\lambda_m}'$ is continuous on $(0, 1)$, and differentiating again,
\begin{align}\label{second_deriv}
    J_{\lambda_m}''(q) = \frac{\lambda_m}{q(1-q)} - \frac{m(L^* - L')^2}{L(q)(1-L(q))}.
\end{align}
For $q \in (0, 1)$, $L(q) \geq (1-q)L'$, and $1 - L(q) \geq 1- L'$, and so $\frac{1}{L(q)(1-L(q))} \leq \frac{1}{(1-q)L'(1-L')}$. Applying this with $(L' - L^*)^2 \leq L'^2$ to \eqref{second_deriv}, we have 
\begin{align}
    J_{\lambda_m}''(q) \geq \frac{1}{1 - q}\left(\frac{\lambda_m}{q} - \frac{mL'}{1 - L'}\right) \geq \frac{1}{1-q}\left(\lambda_m - \frac{mL'}{1-L'}\right).
\end{align}
Since $L' < \frac{0.5 - 0.1L^*}{0.9}$, $L'/(1-L') \leq \frac{0.5}{0.4} = 1.25$. Because $\liminf \lambda_m > 10m$, there exists some $c > 0$ such that $J_{\lambda_m}''(q) > cm$ for all $q \in (0,1)$ for large $m$. So $J_{\lambda_m}$ is strongly convex and any stationary point is the unique global minimizer.

Note that as $q \rightarrow 0^+$, $\TJ_{\lambda_m}(q) \rightarrow -\infty$, and as $q \rightarrow 1^-$, $J_{\lambda_m}(q) \rightarrow \infty$. By the assumption $L' < \frac{0.5 - 0.1L^*}{0.9}$, $L(0.1) < 0.5$, and so $J_{\lambda_m}'(0.1) \leq 0$. Because $J_{\lambda_m}'(q)$ is increasing in $q$, we must have $\qstar \geq 0.1$. In particular, if $\frac{\lambda_m}{m} \rightarrow \infty$, $\qstar \rightarrow \Pi(h_0) = 0.1$. 

Because $L_S(h_0)\xrightarrow{p}L^*$ and $L_S(h_1)\xrightarrow{p}L'$, and the map $(a, b, q) \mapsto H(aq + (1-q)b)$ is continuous on the compact set $[0,1]^3$, 
\begin{align}
    U_m(S) \coloneqq \sup_{q \in [0,1]} \abs{H(\hat{L}(q)) - H(L(q))} \xrightarrow{p} 0.
\end{align}
Set 
\begin{align}
    \Delta_m(S) \coloneqq \sup_{q \in [0,1]} \abs{J_{\lambda_m}(q, S) - \TJ_{\lambda_m}(q)} = mU_m(S).
\end{align}
Then $\Delta_m(S)/m \xrightarrow{p} 0$. By the strong convexity of $J_{\lambda_m}$, 
\begin{align}\label{strong_convexity}
    J_{\lambda_m}(q) \geq  J_{\lambda_m}(\qstar) + \frac{cm}{2}(q - \qstar)^2,
\end{align}
for any $q \in [0,1]$. Since $\qhat$ minimizes $\hat{J}_{\lambda_m}$, we have $\hat{J}_{\lambda_m}(\qhat) \leq \hat{J}_{\lambda_m}(\qstar)$. Thus, 
\begin{align}\label{triangle_ineq}
    J_{\lambda_m}(\qhat) - J_{\lambda_m}(\qstar) \leq (J_{\lambda_m} - \hat{J}_{\lambda_m})(\qhat) - (J_{\lambda_m} - \hat{J}_{\lambda_m})(\qstar) \leq 2\Delta_m.
\end{align}
Combining \eqref{strong_convexity} and \eqref{triangle_ineq}, we have $\abs{\qhat - \qstar}^2 \leq \frac{4\Delta_m(S)}{cm}$ for all large $m$. Since $\Delta_m(S)/m \xrightarrow{p} 0$, we have $\qhat \xrightarrow{p}\qstar$. 

By continuity of $L(q)$ and that $\abs{L(\qhat) - L(\qstar)} \leq \abs{L' - L^*} \abs{\qhat - \qstar}$, we have $L_\source(\Qgreen_{\lambda_m}(S)) = L(\qhat)\xrightarrow{p}L(\qstar)$. This implies that $\mathbb{E} L_\source(\Qgreen_{\lambda_m}(S)) = \mathbb{E}L(\qhat) \rightarrow  L(\qstar)$. In particular, if $\frac{\lambda_m}{m} \rightarrow \infty$, $\qstar \rightarrow \Pi(h_0) = 0.1$. So $\mathbb{E} L_\source(\Qgreen_{\lambda_m}(S)) \rightarrow L(0.1) = 0.1L^* + 0.9L'$.

It is important to note that in the definition of $\Qgreen_{\lambda}$, we also require the distribution $\Qgreen_{\lambda}$ to satisfy $L_S(\Qgreen_{\lambda}) \leq
\frac{1}{2}$ (as in equation \eqref{defn:green}). And we just showed $L_S(\Qgreen_{\lambda_m}) = \hat{L}(\qhat) \xrightarrow{p} L(\qstar)$. Since $L(q) = L' + q(L^* - L')$ is decreasing in $q$ and $\qstar \geq 0.1$, we have $L(\qstar) \leq L(0.1) < 0.5$, and so $\Qgreen_{\lambda_m}$ is the output distribution as $m \rightarrow \infty$. This completes the proof for \autoref{LAMBDA>>M}.
   
\end{proof}

\section{Lower Bounds for the Profile Posterior Learning Rule}\label{appendix_blue}
% We now state the lower bound results for the Profile Posterior Learning Rule:
% \begin{align}\label{defn:blue_appendix}
% \QPL_{\lambda}(S) = \underset{Q}{\text{arg min }} \JPL(Q,S), \text{such that } L_S(Q)\leq 1/2,
% \end{align}
% where 
% \begin{align}
%     \JPL(Q,S) = m\mathbb{E}_{h\sim Q}H(L_S(h)) +  \lambda \KL(Q \Vert \Pi).
% \end{align}

% For any $0<\lambda<\infty$, we show that the worst-case limiting error is lower bounded by the same tempering function $\ell_{\lambda}$:
% \begin{gather}\label{ellblue}
% \ell_{\lambda}(L^*)=\removed{ Q_{\lambda}^{-1}(H(L^*))=}
% \begin{cases}
% 1 - 2^{-\frac{1}{\lambda}H(L^*)},  & \text{for } 0< \lambda \leq 1 \\
% U_{\lambda}^{-1}(H(L^*)), & \text{for } \lambda > 1,
% \end{cases} 
% \quad \textrm{where: } { U_{\lambda}(q) = \lambda \KL\!\left(\tfrac{1}{1+ \left(\frac{\scriptscriptstyle 1-q}{\scriptscriptstyle q}\right)^{\frac{\scriptscriptstyle \lambda}{\scriptscriptstyle \lambda - 1}}}\middle\Vert q\right) \!+\! H\!\left(\tfrac{1}{1+ \left(\frac{\scriptscriptstyle 1-q}{\scriptscriptstyle q}\right)^{\frac{\scriptscriptstyle \lambda}{\scriptscriptstyle \lambda - 1}}}\right)}.
% \end{gather}

% $\Qblue_{\lambda_m}$ at $\lambda = 0$ or $\lambda_m \rightarrow 0$ exhibits catastrophic overfitting with the limiting error 1:

% On the other direction, when $\lambda_m = \Omega(m)$, $\Qblue_{\lambda_m}$ leads to catastrophic underfitting:

% \subsection{Lower Bound Constructions and Proof Sketch}
In this Section, we describe  constructive lower bound proofs on the limiting error.  We show explicit constructions for $0 < \lambda < \infty$ ( \autoref{B:LBFIXED}), for $\lambda_m \rightarrow 0$ or $\lambda = 0$ (\autoref{B:LBLAMBDA0}), and for $\lambda_m = \Omega(m)$ with $\liminf \frac{\lambda_m}{m}> 10$ (\autoref{B:LBLAMBDAINF}).  In each regime, we construct specific hard learning problems, priors, and hypothesis classes such that the expected error of $\Qblue_{\lambda}$ converges to the lower bound error asymptotically. 

\fbox{\parbox{0.99\linewidth}{
\textbf{Comparison with Lower Bounds for the PAC-Bayes (\autoref{AppendixB})}
The lower bound proof for the Profile Posterior is almost the same as that of the PAC-Bayes learning rule, except that we do not need to use the concavity of entropy.
}}

\subsection{Lower Bound for $0 < \lambda < \infty$ (proof of \autoref{B:LBFIXED})}

% For any $0<\lambda<\infty$, any $0<L^*<0.5$, and any $L^* \leq L' < \ell_\lambda(L^*)$, we will construct a source distribution (hard learning problem) $D$ and a prior $\Pi$, and show a distribution $Q^*$ with $\KL(Q^* \Vert \Pi) \leq 10$ and $L_D(Q^*) = L^*$, such that $\mathbb{E}_S \left[L_D(\Qblue_{\lambda}(S))\right] \rightarrow L'$ as the sample size increases ($m \rightarrow \infty$). 

\restate{Theorem}{B:LBFIXED} 
\begin{restatethm}[Agnostic Lower Bound]
    For any $0<\lambda<\infty$, any $L^* \in (0,0.5)$ and $L^* \leq L' <\ell_{\lambda}(L^*)$, % there exists a prior $\Pi$, a distribution $Q^*$ with $\KL(Q^* \Vert \Pi) \leq 10$ and source distribution $D$ with $L_\source(Q^*) = L^*$ such that 
    there exists $(\Pi,\source)\in \achieve(L^*)$ such that
    $\mathbb{E}_S \left[L_\source(\Qblue_{\lambda})\right] \rightarrow L'$ as sample size $m \rightarrow \infty$. 
\end{restatethm}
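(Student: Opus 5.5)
We reuse the construction of \autoref{subsection:B.1}: $\mathcal X=\{0,1\}^\infty$, $h_i(x)=x[i]$, the prior $\Pi(h_i)=1/(i\log^2 i+10)$, and the source with $y\sim\text{Ber}(\tfrac12)$, $x[0]=y\oplus\text{Ber}(L^*)$ and $x[i]=y\oplus\text{Ber}(L')$ for $i\geq1$, conditionally independent given $y$. The competitor is $Q^*=\delta_{h_0}$, which gives $L_\source(Q^*)=L^*$ and $\KL(Q^*\Vert\Pi)=\log 10<10$, so $(\Pi,\source)\in\achieve(L^*)$.

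It suffices to show that $\Qblue_\lambda(h_0)\xrightarrow{p}0$. Every $h_i$ with $i\geq1$ has population error $L'$, so the population error of the posterior is $L(\Qblue_\lambda)=L'-\Qblue_\lambda(h_0)(L'-L^*)$. This quantity is bounded, so convergence in probability of $\Qblue_\lambda(h_0)$ to zero gives $\mathbb{E}_S[L_\source(\Qblue_\lambda)]\to L'$ by bounded convergence.

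The Gibbs form \eqref{Gibbs_blue} turns this into a comparison of weights:
\[
\frac{\Qblue_\lambda(h_0)}{\Qblue_\lambda(h_{\hat i})}=\frac{\Pi(h_0)}{\Pi(h_{\hat i})}\,2^{-\frac m\lambda\left(H(L_S(h_0))-H(L_S(h_{\hat i}))\right)}
\]
for any competitor index $\hat i$. We need the log of this ratio to tend to $-\infty$ in probability, that is,
\[
mH(L_S(h_0))>mH(L_S(h_{\hat i}))+\lambda\log\frac{1}{\Pi(h_{\hat i})}+\Omega(m).
\]
Because $\Jblue$ is linear in $Q$ apart from the KL term, no concavity argument is needed; this is the simplification relative to the proof of \autoref{MDL LB}. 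Equivalently, one can note that $\Jblue(Q,S)-\Jblue(Q',S)$ for the mixture behaves as in the MDL comparison of \citet{ZS}.

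Since $L_S(h_0)\xrightarrow{p}L^*$, the left side is $mH(L^*)+o_p(m)$. We choose $\hat i$ separately in two cases.

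\textbf{Case $\lambda\le1$.} Take $k(m)=2\sqrt m/(1-L')^m$. With probability tending to one there is an $\hat i\le k(m)$ with $L_S(h_{\hat i})=0$ (B.1.1 of \citet{ZS}). For this index, $-\log\Pi(h_{\hat i})\le m\log\frac1{1-L'}+O(\log m)$. The right side is then at most $-m\lambda\log(1-L')+O(\log m)$. The condition $L'<1-2^{-H(L^*)/\lambda}$ is exactly $H(L^*)>-\lambda\log(1-L')$, which gives the required $\Omega(m)$ gap.

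\textbf{Case $\lambda>1$.} Take $k(m)=2^{m\KL(\hat L\Vert L')}$ with $\hat L=1/\bigl(1+(\tfrac{1-L'}{L'})^{\lambda/(\lambda-1)}\bigr)$, and let $h_{\hat i}$ be the empirical error minimizer among $h_1,\dots,h_{k(m)}$. Then $L_S(h_{\hat i})\xrightarrow{p}\hat L$ (B.1.2 of \citet{ZS}), and $-\log\Pi(h_{\hat i})\le m\KL(\hat L\Vert L')+O(\log m)$. The right side is therefore $m\bigl(H(\hat L)+\lambda\KL(\hat L\Vert L')\bigr)+o_p(m)=mU_\lambda(L')+o_p(m)$. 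Since $U_\lambda$ is increasing, $L'<U_\lambda^{-1}(H(L^*))$ gives $H(L^*)>U_\lambda(L')$ strictly, and the gap is $\Omega(m)$.

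In both cases the ratio tends to zero in probability, hence $\Qblue_\lambda(h_0)\to0$.

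We also need feasibility of the constraint $L_S(Q)\le1/2$. If $L'<1/2$, the Gibbs posterior concentrates on hypotheses with small empirical error, so the constraint is eventually slack. If $L'\ge1/2$, which can occur when $\lambda<1$, the constraint may bind, and the minimizer is then the Gibbs form restricted by a Lagrange multiplier. We would argue that the multiplier only tilts the posterior further toward low-error hypotheses such as $h_{\hat i}$, so the ratio bound still holds.

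The main obstacle is handling this constraint and the uniform $o_p(m)$ control in the exponent. In particular, the $O(\log m)$ prior terms must be dominated by the $\Omega(m)$ gap, and the ratio argument must be applied with $h_{\hat i}$ random. The latter is fine because the bound holds pointwise once $\hat i$ is fixed given $S$.
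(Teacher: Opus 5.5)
Your proof is correct. It uses the paper's construction, the same $k(m)$ and $h_{\hat i}$, and the same two decisive inequalities: $H(L^*)>-\lambda\log(1-L')$ for $\lambda\le 1$, and $H(L^*)>U_\lambda(L')=H(\hat L)+\lambda\KL(\hat L\Vert L')$ for $\lambda>1$. The difference lies only in how you conclude that the mass on $h_0$ vanishes.

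The paper compares objective values. It posits a posterior with mass $q_0$ on $h_0$ and $1-q_0$ on $h_{\hat i}$, and shows that its objective exceeds $\Jblue(\delta_{h_{\hat i}},S)$ by order $q_0 m$, mirroring the MDL and $\Qgreen$ arguments. You instead use the explicit temperature $m/\lambda$ in \eqref{Gibbs_blue} together with $\Qblue_\lambda(h_0)\le \Qblue_\lambda(h_0)/\Qblue_\lambda(h_{\hat i})$. Your route applies to the actual, infinitely supported minimizer rather than to a two-point posterior. It gives an exponentially small bound on $\Qblue_\lambda(h_0)$ with high probability, and the data-dependent choice of $\hat i$ causes no difficulty. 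The paper's route, by contrast, is the one that carries over to $\Qgreen$, whose temperature is only implicit.

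You also handle the constraint $L_S(Q)\le 1/2$, which the paper's proof of this theorem never mentions. It can genuinely bind when $\lambda<1$ and $L'\ge 1/2$, because hypotheses with $L_S=1$ then have zero energy and larger prior weight than interpolating ones. Your tilting argument is right and can be made precise as follows. $\Jblue$ is convex in $Q$, and $\delta_{h_{\hat i}}$ is strictly feasible with finite KL. Hence the constrained minimizer is $\propto d\Pi\,2^{-\frac{m}{\lambda}H(L_S(h))-\mu L_S(h)}$ for some $\mu\ge 0$. With high probability $L_S(h_{\hat i})\le L_S(h_0)$ in both cases; for $\lambda>1$ this holds because $H(\hat L)\le U_\lambda(L')<H(L^*)$ forces $\hat L<L^*$. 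So the extra factor only shrinks the ratio, and your separate claim that the constraint is eventually slack when $L'<1/2$ is not needed.
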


We use the same construction as in \autoref{subsection:B.1}. We will ensure that as $m\rightarrow\infty$, $\Qblue_{\lambda}$ will place zero mass on $h_0$, i.e.~$\Pr_{S\sim D^m}\left[\Qblue_\lambda(h_0) > 0\right]\xrightarrow{m\rightarrow\infty} 0$ and $L(\Qblue_\lambda(S))\xrightarrow{p} L'$.

Let $Q^* = \delta_{h_0}$ be the point mass on $h_0$, and then $L_D(Q^*) = L^*$, and $\KL(Q^* \Vert \Pi) = -\log \Pi(h_0) = \log 10 < 10$. For the ``good'' predictor $h_0$ we have that $L_S(h_0)\xrightarrow{p} L(h_0)=L^*$, and hence $\JPL(Q^*, S) \xrightarrow{p} mH(L^*) + \lambda \log 10 $. For an explicit function $k(m)$, we will show that, with probability approaching one, there exists $1\leq i\leq k(m)$ with $\JPL(\delta_{h_i},S) < \JPL(\delta_{h_0},S)- \Omega(m)$, ensuring $\Qblue_{\lambda}$ will put zero mass (in the limit) on $h_0$.

\begin{enumerate}
% \item $\lambda \leq 1$: Take $k(m) = \frac{2\sqrt{m}}{(1-L')^m}$, then (with probability approaching one), there exists some ``bad'' classifier $h_{\hat{i}}$ with $1\leq \hat{i} \leq k(m)$ such that $L_S(h_{\hat{i}}) = 0$. If $\Qblue_{\lambda}$ puts positive mass on $h_0$, i.e, $\Qblue_{\lambda}(h_0) > 0$, then 
% \begin{align}
%  \JPL(\Qblue_{\lambda},S)
%  &\geq mH(L^*)\Qblue_{\lambda}(h_0) + \lambda \KL(\Qblue_\lambda\Vert \Pi) \geq mH(L^*)\Qblue_{\lambda}(h_0)\\ 
%  &>- \lambda m \log(1-L')\Qblue_{\lambda}(h_0)
%  = mH(0)\Qblue_{\lambda}(h_0) + \lambda (-\log \Pi(h_{\hat{i}}) )\Qblue_{\lambda}(h_0) + O(1)
% \end{align}
% where in the final inequality we used $L' < 1 - 2^{-H(L^*)/{\lambda}}$, and the asymptotic notation is w.r.t.~$m\rightarrow\infty$.

% Therefore, we can always replace the mass on $h_0$ by $h_{\hat{i}}$, and will achieve a lower objective value. This means $\Qblue_{\lambda}$ places all its mass on predictors with expected error $L'$ as $m \rightarrow \infty$.

 \item $\lambda \leq 1$: Choose $k(m) = \frac{2\sqrt{m}}{(1-L')^m}$, then (with probability approaching one), there exists some ``bad'' classifier $h_{\hat{i}}$ with $1\leq \hat{i} \leq k(m)$ such that $L_S(h_{\hat{i}}) = 0$ (proved in B.1.1 in \citet{ZS}). Consider the point mass on the predictor $h_{\hat{i}}$ as $\delta_{h_{\hat{i}}}$. Then we have $\JPL(\delta_{h_{\hat{i}}}, S) = \lambda(-\log \Pi(h_{\hat{i}}))$. Hence, the objective of the minimizer $\Qblue_{\lambda}$ should be upper bounded by the objective of $\delta_{h_{\hat{i}}}$, i.e., $\JPL(\Qblue_{\lambda}, S) \leq \lambda(-\log \Pi(h_{\hat{i}}))$ as $m \rightarrow \infty$.

    Suppose $\Qblue_{\lambda}(h_0) = q_0 > 0$ and $\Qblue_\lambda(h_{\hat{i}}) =  1- q_0$ for the sake of contradiction. For the ``good'' predictor $h_0$ we have that $L_S(h_0)\xrightarrow{p} L_\source(h_0)=L^*$.  By the definition of $\Pi$ and that $h_{\hat{i}} \in \mathcal{H}_{k(m)}$, we have $-\log \Pi(h_{\hat{i}}) \leq m\log \frac{1}{1-L'} + C\log m$, for some constant $C>0$. So    
    \begin{align}
        &\JPL(\Qblue_{\lambda}, S) = m\mathbb{E}_{h \sim \Qblue_{\lambda}}H(L_S(h)) + \lambda \KL(\Qblue_{\lambda} \Vert \Pi)
        = mq_0H(L_S(h_0)) + \lambda(1 - q_0)\log \frac{1}{\Pi(h_{\hat{i}})} + O(1) \\
        &\xrightarrow{p} mq_0H(L^*) + \lambda(1 - q_0)\log \frac{1}{\Pi(h_{\hat{i}})} + O(1)
        > -mq_0 \lambda \log (1-L') + \lambda(1 - q_0)\log \frac{1}{\Pi(h_{\hat{i}})} + \Omega(m)\label{ineq54}\\
        &\geq q_0 \lambda \log \frac{1}{\Pi(h_{\hat{i}})} + \lambda(1 - q_0)\log \frac{1}{\Pi(h_{\hat{i}})}+ \Omega(m)
        = \JPL(\delta_{h_{\hat{i}}}, S) + \Omega(m),
    \end{align}
    where in the inequality of \eqref{ineq54} we used $L' < 1 - 2^{-H(L^*)/{\lambda}}$, and the asymptotic notation is w.r.t.~$m\rightarrow\infty$. This leads to a contradiction. Thus, as long as $L' < 1 - 2^{-H(L^*)/{\lambda}}$, as $m \rightarrow \infty$, the mass on $h_0$ should go to zero with probability one, which implies the bound for the expected risk:
    as $m \rightarrow \infty$, $\mathbb{E}_S \left[L_\source(\Qgreen_{\lambda}(S))\right] \rightarrow L'$.  This completes the proof for $ 0< \lambda \leq 1$.

% \item $\lambda>1$: Take $k(m) = 2^{m\KL(\hat{L}\Vert L')}$ where $\hat{L} = \frac{1}{1+ (\frac{1-L'}{L'})^\frac{\lambda}{\lambda - 1}}$. Let $h_{\hat{i}}$ be the empirical error minimizer among the first $k(m)$ bad predictors, i.e.~such that $L_S(h_{\hat{i}}) = \min_{i=1\ldots k(m)} L_S(h_i)$.  This is the minimum of $k(m)$ independent (scaled) binomials $\text{Bin}(m,L')$, and so concentrates s.t.~$\KL( L_S(h_{\hat{i}}) \Vert L') \xrightarrow{p} \frac{\log k(m)}{m}=\KL(\hat{L}\Vert L')$, and hence $L_S(h_{\hat{i}}) \xrightarrow{p} \hat{L}$. If $\Qblue_{\lambda}(h_0) > 0$, then
% \begin{align}
%    \JPL(\Qblue_{\lambda},S) &\geq mH(L^*)\Qblue_{\lambda}(h_0) + \lambda \KL(\Qblue_\lambda\Vert \Pi) \geq mH(L^*)\Qblue_{\lambda}(h_0)\\
%    &\geq m U_\lambda(L')\Qblue_{\lambda}(h_0) + o(m) = m\left(\lambda \KL(\hat{L}\Vert L')+H(\hat{L})\right)\Qblue_{\lambda}(h_0) + o(m) \label{eq47blue}\\
%    &= (mH(L_S(h_{\hat{i}})) + \lambda(-\log \Pi(h_{\hat{i}})))\Qblue_{\lambda}(h_0)
% \end{align}
% where we plugged in $k(m)$ and used the definition of $U_\lambda$ from equation \eqref{ellblue}, and in \eqref{eq47blue} we relied on  $L' < U_{\lambda}^{-1} ( H(L^*) )$.

% Therefore, we can always replace the mass on $h_0$ by $h_{\hat{i}}$, and will achieve a lower objective value. This means $\Qblue_{\lambda}$ places all its mass on predictors with expected error $L'$ as $m \rightarrow \infty$.\qed

\item $\lambda > 1$: Choose $k(m) = 2^{m\KL(\hat{L}\Vert L')}$ where $\hat{L} = \frac{1}{1+ (\frac{1-L'}{L'})^\frac{\lambda}{\lambda - 1}}$. Let $h_{\hat{i}}$ be the empirical error minimizer among the first $k(m)$ bad predictors, i.e.~such that $L_S(h_{\hat{i}}) = \min_{i=1\ldots k(m)} L_S(h_i)$.  This is the minimum of $k(m)$ independent (scaled) binomials $\text{Bin}(m,L')$ so concentrates s.t.~$\KL( L_S(h_{\hat{i}}) \Vert L') \xrightarrow{p} \frac{\log k(m)}{m}=\KL(\hat{L}\Vert L')$(by Theorem 1 in \citet{binomials}), and hence $L_S(h_{\hat{i}}) \xrightarrow{p} \hat{L}$ (proved in B.1.2 in \citet{ZS}). Consider the point mass on the predictor $h_{\hat{i}}$ as $\delta_{h_{\hat{i}}}$. Then we have $\JPL(\delta_{h_{\hat{i}}}, S) = mH(L_S(h_{\hat{i}})) +  \lambda(-\log \Pi(h_{\hat{i}}))$. Hence, the objective of the minimizer $\Qblue_{\lambda}$ should be upper bounded by the objective of $\delta_{h_{\hat{i}}}$, i.e., $\JPL(\Qblue_{\lambda}, S) \leq mH(L_S(h_{\hat{i}})) + \lambda(-\log \Pi(h_{\hat{i}}))$ as $m \rightarrow \infty$.
    
    Suppose $\Qblue_\lambda(h_0) = q_0 > 0$ and $\Qblue_\lambda(h_{\hat{i}}) =  1- q_0$ for the sake of contradiction.  By the definition of $\Pi$ and that $h_{\hat{i}} \in \mathcal{H}_{k(m)}$, we have $-\log \Pi(h_{\hat{i}}) \leq m\KL(\hat{L}\Vert L') + C'\log m$, for some $C' > 0$. Then 
    \begin{align}
        &\JPL(\Qblue_{\lambda}, S) = m\mathbb{E}_{h \sim \Qblue_{\lambda}}H(L_S(h)) + \lambda \KL(\Qgreen_{\lambda} \Vert \Pi)\\
        &= m\left(q_0H(L_S(h_0)) + (1-q_0)H(L_S(h_{\hat{i}}))\right)
        + \lambda(1 - q_0)\log \frac{1}{\Pi(h_{\hat{i}})} + O(1)\\
        &\xrightarrow{p} mq_0H(L^*) + (1-q_0)\Jgreen(\delta_{h_{\hat{i}}}, S) + O(1)\\
        &> mq_0U_{\lambda}(L') + (1-q_0)\Jgreen(\delta_{h_{\hat{i}}}, S)+ \Omega(m) \label{ineq57}\\
        &= mq_0\left(\lambda \KL(\hat{L}\Vert L')+H(\hat{L})\right)
        + (1-q_0)\Jgreen(\delta_{h_{\hat{i}}}, S)+ \Omega(m)\\
        &\geq q_0\left(\lambda (-\log \Pi(h_{\hat{i}}))+mH(L_S(h_{\hat{i}}))\right)
        + (1-q_0)\Jgreen(\delta_{h_{\hat{i}}}, S)+ \Omega(m)\\       &>q_0\Jgreen(\delta_{h_{\hat{i}}},S) + (1-q_0)\Jgreen(\delta_{h_{\hat{i}}}, S)+ \Omega(m)
        = \Jgreen(\delta_{h_{\hat{i}}},S)+ \Omega(m),
    \end{align}
where in \eqref{ineq57} we relied on  $L' < U_{\lambda}^{-1} ( H(L^*) )$ and the definition of $U_\lambda$, and the asymptotic notation is w.r.t.~$m\rightarrow\infty$. This leads to a contradiction. Hence, as long as  $L' < U_{\lambda}^{-1} ( H(L^*) )$, as $m \rightarrow \infty$, the mass on $h_0$ will go to zero with probability one. 

This implies the bound for the expected risk:
as $m \rightarrow \infty$, $\mathbb{E}_S \left[L_\source(\Qblue_{\lambda}(S))\right] \rightarrow L'$. This completes the proof for $ 1< \lambda < \infty$. 
\end{enumerate}

\subsection{Lower Bound for $\lambda_m \rightarrow 0$ or $\lambda = 0$ (proof of \autoref{B:LBLAMBDA0})}

\restate{Theorem}{B:LBLAMBDA0} 
\begin{restatethm}
    For any $\lambda_m \rightarrow 0$ or $\lambda = 0$, any $L^* \in (0,0.5)$, and $L^* \leq L' < 1$, %there exists a prior $\Pi$, a distribution $Q^*$ with $\KL(Q^* \Vert \Pi) \leq 10$ and source distribution $D$ with $L_\source(Q^*) = L^*$ such that 
    there exists $(\Pi,\source)\in \achieve(L^*)$ such that
    $\mathbb{E}_S \left[L_\source(\Qblue_{\lambda})\right] \rightarrow L'$ as sample size $m \rightarrow \infty$.
\end{restatethm}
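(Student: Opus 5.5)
We will reuse the construction of \autoref{subsection:B.1}: $y\sim\text{Ber}(\frac12)$, $x[0]=y\oplus\text{Ber}(L^*)$, $x[i]=y\oplus\text{Ber}(L')$ for $i\geq 1$, hypotheses $h_i(x)=x[i]$, and the prior $\Pi(h_i)=1/(i\log^2 i+10)$. We take $Q^*=\delta_{h_0}$, so that $L_\source(Q^*)=L^*$ and $\KL(Q^*\Vert\Pi)=\log 10<10$; hence $(\Pi,\source)\in\achieve(L^*)$. The argument follows the proof of \autoref{Q:LAMBDA0}, but it is simpler: $\JPL$ is linear in $Q$ through $\mathbb{E}_{h\sim Q}H(L_S(h))$, so the concavity step is not needed. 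The goal is to show $\Pr_S[\Qblue_\lambda(h_0)>0]\to 0$. From this we get $L_\source(\Qblue_\lambda(S))\xrightarrow{p}L'$, and then, since the loss is bounded, $\mathbb{E}_S[L_\source(\Qblue_\lambda(S))]\to L'$.

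\textbf{Case $\lambda=0$.} Here $\JPL(Q,S)=m\,\mathbb{E}_{h\sim Q}H(L_S(h))$. The bad hypotheses are independent, each with $\Pr[L_S(h_i)=0]=(1-L')^m>0$ when $L'<1$. So almost surely some $\hat i\geq 1$ has $L_S(h_{\hat i})=0$, and the point mass $\delta_{h_{\hat i}}$ gives objective $0$ while satisfying $L_S\leq 1/2$. Any $Q$ with $Q(h_0)=q_0>0$ has $\JPL(Q,S)\geq mq_0H(L_S(h_0))$. Because $L_S(h_0)\xrightarrow{p}L^*\in(0,\frac12)$, this quantity is $\Omega(m)>0$ with probability tending to one. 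Such a $Q$ therefore cannot be a minimizer.

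\textbf{Case $\lambda_m\to0$.} Let $\hat i$ be the smallest index $\geq1$ with $L_S(h_{\hat i})=0$. As in the proof of \autoref{Q:LAMBDA0}, with $k(m)=\frac{m+1}{(1-L')^m}$ we have $\hat i\leq k(m)$ with probability approaching one: the failure probability is $(1-(1-L')^m)^{k(m)}\leq e^{-(m+1)}$. Then
\[\JPL(\delta_{h_{\hat i}},S)=\lambda_m\log\frac{1}{\Pi(h_{\hat i})}\leq\lambda_m\left(m\log\frac{1}{1-L'}+O(\log m)\right)=o(m).\]
Any $Q$ with $Q(h_0)=q_0$ satisfies $\JPL(Q,S)\geq mq_0H(L_S(h_0))$, using $\KL\geq0$ and $H\geq 0$. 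For a fixed $q_0>0$ this is $\Omega(m)$ and exceeds $\JPL(\delta_{h_{\hat i}},S)$, so the minimizer must have $q_0\to0$.

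The step needing the most care is the passage from "$q_0\to 0$" to "$L_\source(\Qblue_\lambda)\to L'$", together with the meaning of $\Qblue_\lambda(h_0)$.

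First, consider the minimizer itself. For $\lambda_m>0$ it has Gibbs form $\propto\Pi(h)2^{-\frac{m}{\lambda_m}H(L_S(h))}$, so $h_0$ always receives positive mass. We should therefore aim to show $\Qblue_{\lambda_m}(h_0)\xrightarrow{p}0$ rather than exact zero mass. This is direct from the Gibbs weights:
\[\frac{\Qblue(h_0)}{\Qblue(h_{\hat i})}=\frac{\Pi(h_0)}{\Pi(h_{\hat i})}\,2^{-\frac{m}{\lambda_m}H(L_S(h_0))}.\]
The prior ratio is $2^{o(m)/\lambda_m}$, while the exponent is $-\Omega(m)/\lambda_m$, so the ratio tends to $0$.

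Second, every other hypothesis has population error exactly $L'$. Hence $L_\source(Q)=Q(h_0)L^*+(1-Q(h_0))L'\to L'$. (For $\lambda=0$, the argmin may be interpreted as any minimizer, and all minimizers put zero mass on $h_0$.)

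Finally, we check feasibility: $L_S(\Qblue)\leq\frac12$ holds for the minimizer, because the Gibbs mass concentrates on zero-training-error hypotheses.
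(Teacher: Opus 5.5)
Your core argument is the paper's, and it is sound. You use the same construction with $Q^*=\delta_{h_0}$, compare against the point mass on the first bad hypothesis with zero training error (feasible, with objective $\lambda_m\log\frac{1}{\Pi(h_{\hat i})}=o(m)$), and use the lower bound $\JPL(Q,S)\ge mQ(h_0)H(L_S(h_0))$. Your explicit failure bound $(1-(1-L')^m)^{k(m)}\le e^{-(m+1)}$ is a nice touch. Stated quantitatively rather than ``for a fixed $q_0$'', this comparison already gives, for the \emph{constrained} minimizer,
\[
\Qblue_{\lambda_m}(h_0)\le \frac{\JPL(\delta_{h_{\hat i}},S)}{mH(L_S(h_0))}\le \frac{\lambda_m\left(\log\frac{1}{1-L'}+O\left(\frac{\log m}{m}\right)\right)}{H(L_S(h_0))}\xrightarrow{p}0 .
\]
Together with $L_\source(h_i)=L'$ for all $i\ge 1$ and bounded loss, that is all the theorem needs.

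The step that is wrong is your closing feasibility claim, on which your use of the unconstrained Gibbs form rests. Take $L'>\frac12$, which is precisely the catastrophic regime this theorem targets. Hypotheses with $L_S(h)=1$ also have zero energy, $H(L_S(h))=0$. Since $\Pr[L_S(h_i)=1]=L'^m\gg(1-L')^m=\Pr[L_S(h_i)=0]$, the first such hypothesis appears at an exponentially smaller index and so has exponentially larger prior mass. The measure $\propto\Pi(h)2^{-\frac{m}{\lambda_m}H(L_S(h))}$ therefore concentrates on hypotheses with $L_S(h)=1$ and violates $L_S(Q)\le\frac12$. The constraint binds, and by KKT the minimizer is instead $\propto\Pi(h)2^{-\frac{1}{\lambda_m}\left(mH(L_S(h))+\mu L_S(h)\right)}$ for some multiplier $\mu\ge 0$. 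Your ratio argument still goes through, because $L_S(h_{\hat i})=0$ means the extra factor only suppresses $h_0$ further. As written, though, the justification is false. Either use this tilted form, or drop the Gibbs paragraph and rely on the displayed bound above, which applies directly to the constrained minimizer.
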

% For $\lambda_m \rightarrow 0$ or $\lambda = 0$, we show that for any $0<L^* <0.5$ and $L^* \leq L' < 1$, the source distribution and prior defined in \autoref{subsection:B.1} such that $\mathbb{E}_S \left[L_\source(\Qblue_{\lambda}(S))\right] \rightarrow L'$, as $m \rightarrow \infty$ while $L(Q^*)=L^*$ and $\KL(Q^* \Vert \Pi) \leq 10$.

For $Q^*$ being a point mass on $h_0$, we have $L_D(Q^*) = L^*$ and $\KL(Q^* \Vert \Pi) = \log 10 < 10$.

For $\lambda = 0$, then the learning rule simply minimizes $\mathbb{E}_{h \sim Q}H(L_S(h))$. There exists a.s.~some $\hat{i}$ with $L_S(h_{\hat{i}})=0$, but on the other hand $L_S(h_0)\xrightarrow{p}L^*>0$. Consider $Q'$ being a point mass on $h_{\hat{i}}$, then we have $m\mathbb{E}_{h \sim Q'}H(L_S(h)) +\lambda \KL(Q'\Vert \Pi) = 0$. So by definition of $\Qblue_{\lambda}$, we must have $m\mathbb{E}_{h \sim \Qblue_{\lambda}}H(L_S(h)) +\lambda \KL(\Qblue_{\lambda}\Vert \Pi) = m\mathbb{E}_{h \sim \Qblue_{\lambda}}H(L_S(h)) \leq 0$. Any $Q$ with a positive mass on $h_0$, i.e. $Q(h_0) > 0$ must have $m\mathbb{E}_{h \sim Q}H(L_S(h)) > mH(L_S(h_0))Q(h_0) \xrightarrow{p} mH(L^*)Q(h_0) > 0$, which is a contradiction. Hence, as $m \rightarrow \infty$, $\Qblue_{\lambda}(h_0)$ should go to zero with probability approaching one, which implies the bound for the expected risk: as $m \rightarrow \infty$, $\mathbb{E}_S \left[L_\source(\Qblue_{\lambda}(S))\right] \rightarrow L'$.  

For $\lambda_m \rightarrow 0$ as $m \rightarrow \infty$, let $\hat{i}$ denote the smallest index $\hat{i}\geq 1$ such that $L_S(h_{\hat{i}}) = 0$. We already saw that $\hat{i} \leq\frac{m+1}{(1-L')^m}$ with probability approaching one. Consider $Q'$ being a point mass on $h_{\hat{i}}$, then we have $m\mathbb{E}_{h \sim Q'}H(L_S(h)) +\lambda_m \KL(Q'\Vert \Pi) \leq\lambda_m \log\frac{m+1}{(1-L')^m}=o(m)$ as $\lambda_m \rightarrow 0$. So by definition of $\Qblue_{\lambda}$, we must have $m\mathbb{E}_{h \sim \Qblue_{\lambda}}H(L_S(h)) +\lambda_m \KL(\Qblue_{\lambda}\Vert \Pi) \leq o(m)$. Any $Q$ with a positive mass on $h_0$, i.e. $Q(h_0) > 0$ must have $m\mathbb{E}_{h \sim Q}H(L_S(h)) + \lambda_m \KL(Q\Vert \Pi) > mH(L_S(h_0))Q(h_0) \xrightarrow{p} mH(L^*)Q(h_0) = O(m)$, which is a contradiction. Hence, with probability approaching one, $\Qblue_{\lambda}(h_0)$ should go to zero, and so $\mathbb{E}_S \left[L_\source(\Qblue_{\lambda}(S))\right] \rightarrow L'$.   \qed

\subsection{Lower Bound for $\lambda_m = \Omega(m)$ with $\liminf \frac{\lambda_m}{m}> 10$ (proof of \autoref{B:LBLAMBDAINF})}
% We now turn to $\lambda_m = \Omega(m)$ with $\liminf \frac{\lambda_m}{m}> 10$, and show that for any $0 \leq L^* <0.5$ and $L^* \leq L' < \frac{0.5 - 0.1L^*}{0.9}$, and for $q^* = \arg \min _{q}\lambda_m\KL(q \Vert 0.1) + mqH(L^*) + m(1-q)H(L')$. 
\restate{Theorem}{B:LBLAMBDAINF} 
\begin{restatethm}
For any $\lambda_m = \Omega(m)$ with $\liminf \frac{\lambda_m}{m}> 10$, any $L^* \in [0,0.5)$, and any $L^* \leq L' < \frac{0.5 - 0.1L^*}{0.9}$, and for $q^*_m =\sigma\left(\text{logit}(0.1) + \frac{m}{\lambda_m}(H(L') - H(L^*))\right)$ where $\sigma(t) = 1/(1+e^{-t})$, %there exists a prior $\Pi$, a distribution $Q^*$ with $\KL(Q^* \Vert \Pi) \leq 10$ and source distribution $D$ with $L_\source(Q^*) = L^*$ such that 
there exists $(\Pi,\source)\in \achieve(L^*)$ such that
$\mathbb{E}_S \left[L_\source(\Qblue_{\lambda_m}(S))\right] = q^*_mL^* + (1-q^*_m)L' + o(1)$. In particular, if $\frac{\lambda_m}{m} \rightarrow \infty$, then $q^*_m \rightarrow 0.1$ and $\mathbb{E}_S \left[L_\source(\Qblue_{\lambda_m}(S))\right] \rightarrow 0.1L^* + 0.9L'$ as $m \rightarrow \infty$.
\end{restatethm}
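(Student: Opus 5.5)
We follow the proof of \autoref{LAMBDA>>M} and use the same two-hypothesis construction: the source of \autoref{subsection:B.1} restricted to $\{h_0,h_1\}$, with $\Pi(h_0)=0.1$ and $\Pi(h_1)=0.9$. We take $Q^*=\delta_{h_0}$, so that $L_\source(Q^*)=L^*$ and $\KL(Q^*\Vert\Pi)=\log 10<10$; hence $(\Pi,\source)\in\achieve(L^*)$. Any posterior is then determined by $q=Q(h_0)$. The key simplification relative to the PAC-Bayes rule is that the profile objective is now \emph{linear} in the empirical entropies:
\[
\hat J(q,S)=\lambda_m\KL(q\Vert 0.1)+m\bigl(qH(L_S(h_0))+(1-q)H(L_S(h_1))\bigr).
\]
This is the sum of a strictly convex function and a linear one, so no strong-convexity estimate of the kind needed in \autoref{LAMBDA>>M} is required. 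Alternatively, we can use the Gibbs form \eqref{Gibbs_blue} directly. Either way, the unique minimizer is explicit:
\[
\text{logit}(\qhat)=\text{logit}(0.1)+\frac{m}{\lambda_m}\bigl(H(L_S(h_1))-H(L_S(h_0))\bigr),
\]
with the logit taken in the base matching the logarithm, up to the convention for $\sigma$.

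The population analogue replaces the empirical errors by $L^*$ and $L'$, which gives exactly the $q^*_m$ of the statement. We then compare $\qhat$ with $q^*_m$. Since $L_S(h_0)\xrightarrow{p}L^*$, $L_S(h_1)\xrightarrow{p}L'$ and $H$ is continuous, the difference of the two logit arguments is
\[
\frac{m}{\lambda_m}\cdot o_p(1).
\]
Because $\liminf \lambda_m/m>10$, the factor $m/\lambda_m$ is bounded by $1/10$ for large $m$, so the difference is $o_p(1)$. The map $\sigma$ is $1$-Lipschitz (in fact $1/4$-Lipschitz in natural units), so $|\qhat-q^*_m|\xrightarrow{p}0$. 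Consequently
\[
L_\source(\Qblue_{\lambda_m}(S))=\qhat L^*+(1-\qhat)L'
\]
differs from $q^*_mL^*+(1-q^*_m)L'$ by at most $|L'-L^*|\,|\qhat-q^*_m|\xrightarrow{p}0$. Since the loss is bounded in $[0,1]$, bounded convergence turns this into the claimed $o(1)$ statement in expectation. If $\lambda_m/m\to\infty$, the logit argument tends to $\text{logit}(0.1)$, so $q^*_m\to0.1$, and the limit $0.1L^*+0.9L'$ follows.

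The only remaining point is the constraint $L_S(Q)\le1/2$ in \eqref{defn:blue}. We must check that the unconstrained minimizer is feasible with probability tending to one, so that it is indeed $\Qblue_{\lambda_m}$. Since $L^*\le L'$, we have $H(L')\ge H(L^*)$ whenever $L'\le 1/2$, and then $q^*_m\ge 0.1$. This gives
\[
L(q^*_m)\le 0.1L^*+0.9L'<0.5
\]
by the assumption $L'<\frac{0.5-0.1L^*}{0.9}$, and $L_S(\qhat)\xrightarrow{p}L(q^*_m)$ by the same continuity argument.

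We expect this feasibility step to be the main subtlety. If $L'>1/2$ is allowed, then $H(L')$ may fall below $H(L^*)$, so $q^*_m$ could drop below $0.1$. In that case we must verify the inequality $q^*_mL^*+(1-q^*_m)L'<1/2$ directly, using $m/\lambda_m<1/10$ to keep $q^*_m$ close enough to $0.1$ that the strict margin survives. If the margin fails for some parameters, the fallback is to argue that the constraint is active and the constrained minimizer still converges to the same limit; we would try first to show that the margin suffices.
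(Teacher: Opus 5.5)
Your proposal follows the paper's proof essentially step for step. It uses the same two-hypothesis construction with $\Pi(h_0)=0.1$, the same explicit logistic minimizer $\hat q_m$, the same convergence $\hat q_m-q^*_m\xrightarrow{p}0$ followed by bounded convergence, and the same feasibility check via $q^*_m\ge 0.1$. Your use of $m/\lambda_m\le 1/10$ together with the Lipschitz property of $\sigma$ is a cleaner justification than the paper's bare appeal to continuity of $\sigma$, since $q^*_m$ itself moves with $m$.

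The case you left open does not actually arise, so no fallback is needed. Because $L^*<1/2$, the hypothesis gives $L'<\frac{0.5-0.1L^*}{0.9}<1-L^*$. Hence $L^*\le\min(L',1-L')\le 1/2$, and so $H(L')=H(\min(L',1-L'))\ge H(L^*)$ even when $L'>1/2$. Thus $q^*_m\ge 0.1$ in all cases; the paper simply asserts this without proof. Your feasibility argument $L(q^*_m)\le 0.1L^*+0.9L'<1/2$ then goes through unchanged.
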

We use the source distribution defined in \autoref{subsection:B.1} with only two predictors $\{h_0, h_1\}$, $L(h_0)=L^*,L(h_1)=L'$, and with the prior $\Pi(h_0)=0.1$ and $\Pi(h_1)= 0.9$.

For $Q^*$ being a point mass on $h_0$, we have $L_D(Q^*) = L^*$ and $\KL(Q^* \Vert \Pi) = \log 10 < 10$.

Let $\Qblue_{\lambda_m}$ be the output distribution, and let $\Qblue_{\lambda_m}(h_0) = q$, $\Qblue_{\lambda_m}(h_1) = 1-q$. Then
\begin{align}
        \Jbluem(\Qblue_{\lambda_m}, S)=\lambda_m\KL(q \Vert 0.1) + m\mathbb{E}_{h \sim \Qblue_{\lambda_m}}H(L_S(h))
        = \lambda_m\KL(q \Vert 0.1) + mqH(L_S(h_0)) + m(1-q)H(L_S(h_1)).
\end{align}
Taking the derivative of the right hand side we get 
\begin{gather}
    \frac{d}{dq}\Jbluem(q) = \lambda_m \left(\log\frac{q}{1-q} - \log \frac{0.1}{1-0.1}\right) + m(H(L_S(h_0)) - H(L_S(h_1))), \textrm{ and } \frac{d^2}{dq^2}\Jbluem(q) = \frac{\lambda_m}{q(1-q)} > 0.
\end{gather}
So we have a unique minimizer 
\begin{align}
    \hat{q}_m(S) = \sigma\left(\text{logit}(0.1) + \frac{m}{\lambda_m}(H(L_S(h_1)) - H(L_S(h_0)))\right),
\end{align}
with $\sigma(t) = 1/(1+e^{-t})$.

Because $L_S(h_0)\xrightarrow{p}L^*$ and $L_S(h_1)\xrightarrow{p}L'$, and $H$ is continuous, $H(L_S(h_1)) - H(L_S(h_0)) \xrightarrow{p} H(L^*) - H(L')$. By continuity of $\sigma$, 
\begin{align}
    \qhat(S) \xrightarrow{p} q^*_m \coloneqq \sigma\left(\text{logit}(0.1) + \frac{m}{\lambda_m}(H(L') - H(L^*)\right) \geq 0.1.
\end{align}

Denote $L(q) = qL^* + (1-q)L'$. By continuity of $L(q)$ and that $\abs{L(\qhat) - L(\qstar)} \leq \abs{L' - L^*} \abs{\qhat - \qstar}$, we have $L_\source(\Qblue_{\lambda_m}(S)) = L(\qhat)\xrightarrow{p}L(\qstar)$. This implies that $\mathbb{E} L_\source(\Qblue_{\lambda_m}(S)) = \mathbb{E}L(\qhat) \rightarrow  L(\qstar)$. In particular, if $\frac{\lambda_m}{m} \rightarrow \infty$, $\qstar \rightarrow \Pi(h_0) = 0.1$. So $\mathbb{E} L_\source(\Qblue_{\lambda_m}(S)) \rightarrow L(0.1) = 0.1L^* + 0.9L'$.

It is important to note that in the definition of $\Qblue_{\lambda}$, we also require the distribution $\Qblue_{\lambda}$ to satisfy $L_S(\Qblue_{\lambda}) \leq
\frac{1}{2}$ (as in equation \eqref{defn:blue}). Note that we just showed $L_S(\Qblue_{\lambda_m}) = \qhat(S) L_S(h_0) + (1-\qhat(S))L_S(h_1) \xrightarrow{p} L(\qstar)$. Since $L(q) = L' + q(L^* - L')$ is decreasing in $q$ and $\qstar \geq 0.1$, we have $L(\qstar) \leq L(0.1) < 0.5$, and so $\Qblue_{\lambda_m}$ is the output distribution as $m \rightarrow \infty$. \qed

\section{Bayesian Interpretation Proofs}

We now present the remaining proofs of the Bayesian interpretations of the learning rules from \autoref{section:EB} and \autoref{section:Bayesian}.

\restate{Theorem}{greenEB:lambda=1}
\begin{restatethm}
    Let $\hat{\eta} = \arg \max_{\eta} P(S|\eta) = \arg \max_{\eta} \mathbb{E}_{h \sim \Pi}P(S|\eta,h)$. Then $\QEB_{1}(h) \equiv P(h|S, \hat{\eta})$.
\end{restatethm}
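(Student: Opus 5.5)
We will match the Gibbs form of \autoref{thm:EBGibbs} (with $\lambda=1$) against the explicit form of the empirical Bayes posterior, and show that the two temperatures coincide. Throughout we assume $L_S(\Pi)<\frac12$, as in \autoref{thm:EBGibbs}.

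\textbf{Step 1: the posterior is a Gibbs measure.} For any fixed $\eta\in(0,\frac12)$, Bayes' rule under the noise model gives
\[
dP(h\mid S,\eta)\propto d\Pi(h)\,\eta^{mL_S(h)}(1-\eta)^{m-mL_S(h)}\propto d\Pi(h)\,2^{-m\log\frac{1-\eta}{\eta}\,L_S(h)},
\]
since the factor $(1-\eta)^m$ does not depend on $h$. This is a Gibbs measure with energy $L_S(h)$ and inverse temperature $\beta(\eta)=m\log\frac{1-\eta}{\eta}$.

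\textbf{Step 2: the PAC-Bayes solution has the same form.} By \autoref{thm:EBGibbs} with $\lambda=1$, $\QEB_1$ is the Gibbs measure with inverse temperature $mH'(L_S(\QEB_1))$. Since $H'(p)=\log\frac{1-p}{p}$, this equals $\beta(\eta)$ evaluated at $\eta=L_S(\QEB_1)$. Moreover, \autoref{thm:EBGibbs} says this is the unique self-consistent temperature. It therefore suffices to show that $\hat\eta$ is the unique $\eta$ satisfying $\eta=L_S(P(\cdot\mid S,\eta))$. Then $P(h\mid S,\hat\eta)$ is exactly the self-consistent Gibbs measure, so it equals $\QEB_1$.

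\textbf{Step 3: the fixed-point condition for $\hat\eta$.} Write $\ell(\eta)=\ln \mathbb{E}_{h\sim\Pi}\big[\eta^{mL_S(h)}(1-\eta)^{m(1-L_S(h))}\big]$. Differentiating under the expectation (justified since $L_S(h)\in[0,1]$ and the integrand is bounded and smooth in $\eta$ on compact subintervals of $(0,1)$) gives
\[
\ell'(\eta)=\mathbb{E}_{h\sim P(\cdot\mid S,\eta)}\!\left[\frac{mL_S(h)}{\eta}-\frac{m(1-L_S(h))}{1-\eta}\right]=\frac{m\,\big(L_S(P(\cdot\mid S,\eta))-\eta\big)}{\eta(1-\eta)}.
\]
So stationary points of the marginal likelihood are exactly the fixed points $\eta=L_S(P(\cdot\mid S,\eta))$. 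Reparametrising by $\beta=\log\frac{1-\eta}{\eta}$, this becomes $H'^{-1}(\beta)=L_S(\hat Q_\beta)$, which is the crossing equation in the proof of \autoref{thm:EBGibbs}.

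\textbf{Step 4 (main obstacle): the maximizer is that fixed point.} We must check that the argmax of $\ell$ is this interior fixed point and not a boundary point.
\begin{itemize}
\item The sign of $\ell'$ is the sign of $L_S(\hat Q_{\beta(\eta)})-\eta$.
\item As $\eta$ increases on $(0,\frac12)$, $\beta$ decreases, so $L_S(\hat Q_\beta)$ increases while $H'^{-1}(\beta)=\eta$ also increases. The crossing argument of \autoref{thm:EBGibbs} gives a single crossing on $(0,\frac12)$.
\item At $\eta\to\frac12$, $L_S(\hat Q_0)=L_S(\Pi)<\frac12$, so $\ell'<0$. As $\eta\to0$, $L_S\ge \mathrm{ess}\inf L_S>0$ unless the infimum is $0$, so $\ell'>0$.
\end{itemize}
Hence $\ell$ increases and then decreases on $(0,\frac12)$, and the unique crossing is the maximizer. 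For $\eta>\frac12$ we use the symmetry $\eta\mapsto1-\eta$ together with $L_S(\Pi)<\frac12$ to show the marginal likelihood there is no larger, so the argmax lies in $(0,\frac12]$.

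If $\mathrm{ess}\inf L_S=0$ and the curves meet only at $\beta=\infty$, then $\hat\eta=0$. We handle this as a limit: both sides are the prior conditioned on $\{L_S(h)=0\}$, interpreted as in the "coincide at $\infty$" case of \autoref{thm:EBGibbs}. I expect the only real difficulty to be this monotonicity and boundary bookkeeping; the identification itself is immediate from Steps 1 and 2.
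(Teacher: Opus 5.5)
Your Steps 1--3 are the paper's proof. The paper also differentiates $\log P(S\mid\eta)$, obtains $\hat\eta=L_S(P(\cdot\mid S,\hat\eta))$, and matches $P(h\mid S,\hat\eta)$ with the self-consistent Gibbs form of \autoref{thm:EBGibbs}. In doing so it tacitly assumes that the maximizer is an interior stationary point and that self-consistency singles out $\QEB_1$. Your Step 4 is meant to justify exactly these assumptions, and that is where there is a genuine gap: both of its claims are false.

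(i) The hypothesis $L_S(\Pi)<\frac12$ constrains only the prior mean of $L_S$, while $P(S\mid\eta)$ is extremely nonlinear in $L_S$, so it does not imply that $\eta>\frac12$ is no better. Give prior mass $0.01$ to some $h_b$ with $L_S(h_b)=1$ and mass $0.99$ to some $h_g$ with $L_S(h_g)=\frac25$. Then $L_S(\Pi)<\frac12$, yet $P(S\mid 1)=0.01$, while $\sup_{\eta\le 1/2}P(S\mid\eta)\le 0.01\cdot 2^{-m}+0.99\cdot 2^{-mH(2/5)}$. For large $m$ the unrestricted maximizer is therefore $\hat\eta=1$, and $P(\cdot\mid S,\hat\eta)=\delta_{h_b}$, which is not even feasible for \eqref{defn:green}. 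The statement holds only with $\eta$ restricted to $[0,\frac12]$, and this restriction must be imposed, not proved.

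(ii) As you note, $\eta$ and $L_S(P(\cdot\mid S,\eta))$ both increase with $\eta$, so nothing forces a single crossing. The uniqueness you import from \autoref{thm:EBGibbs} fails in general. Take $\Pi=\pi_1\delta_{h_1}+\pi_2\delta_{h_2}$ with $L_S(h_1)=a<b=L_S(h_2)<\frac12$ and $\pi_2/\pi_1=2^{m(H(b)-H(a))}$. The two summands of $P(S\mid\eta)$ then peak at $\eta=a$ and $\eta=b$ with the same height $\pi_1 2^{-mH(a)}$. For large $m$ the marginal likelihood is bimodal, so $\ell'$ has at least three zeros in $(0,\frac12)$, i.e.\ there are three self-consistent temperatures. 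A stationary point therefore identifies $P(\cdot\mid S,\hat\eta)$ only with \emph{some} stationary point of $J_1$ along the Gibbs family, not with its global minimizer. The same issue affects your $\mathrm{ess\,inf}\,L_S=0$ case: interior crossings can coexist with the coincidence at $\beta=\infty$, and then $\ell'<0$ near $0$.

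The missing idea is to compare global optima directly, using the Gibbs (Donsker--Varadhan) variational identity. Since $-\log P(S\mid\eta,h)$ is affine in $L_S(h)$, and $-p\log\eta-(1-p)\log(1-\eta)=H(p)+\KL(p\Vert\eta)$, we have
\[
-\log P(S\mid\eta)=\min_Q\Big\{mH(L_S(Q))+m\,\KL(L_S(Q)\Vert\eta)+\KL(Q\Vert\Pi)\Big\},
\]
with the minimum attained uniquely at $Q=P(\cdot\mid S,\eta)$. Now minimize jointly over $\eta\in[0,\frac12]$ and $Q$.
\begin{itemize}
\item If $L_S(Q)\le\frac12$, the inner minimum over $\eta$ equals $J_1(Q,S)$ and is attained at $\eta=L_S(Q)$.
\item If $L_S(Q)>\frac12$, the inner minimum is at least $m>mH(L_S(\Pi))=J_1(\Pi,S)$, so such $Q$ never win.
\end{itemize}
Hence $\max_{\eta\le 1/2}\log P(S\mid\eta)=-\min_{Q:\,L_S(Q)\le 1/2}J_1(Q,S)$. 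The maximizers $\hat\eta$ correspond exactly to the minimizers via $Q=P(\cdot\mid S,\hat\eta)$ and $\hat\eta=L_S(Q)$. This EM/free-energy argument proves the theorem for $\eta\in[0,\frac12]$. It needs no uniqueness of temperatures and no sign analysis of $\ell'$, and it covers the boundary case $\hat\eta=0$ automatically.
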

\begin{proof}
    The likelihood of the model is defined as $P(S|\eta, h) = \eta^{mL_S(h)}(1-\eta)^{m(1 - L_S(h))} = (1-\eta)^m 2^{-mH'(\eta)L_S(h)} \propto 2^{-mH'(\eta)L_S(h)}$. $P(S|\eta) = \mathbb{E}_{h \sim \Pi} P(S|\eta, h) = (1-\eta)^m \mathbb{E}_{h \sim \Pi}[2^{mL_S(h)\theta}]$, where $\theta = \log \frac{\eta}{1 - \eta}$. Denote $Z(\eta) = \mathbb{E}_{h \sim \Pi}[2^{mL_S(h)\theta}]$. To maximize $P(S|\eta)$ over $\eta$, we take the logarithm and set its derivative to zero, and get $\frac{d}{d\eta} \log P(S|\eta) = -\frac{m}{1 - \eta} + \frac{d}{d\theta}(\log Z(\eta))\frac{d\theta}{d\eta} = 0$. Note that $\frac{d\theta}{d\eta} = \frac{1}{\eta(1-\eta)}$, and $\frac{d}{d\theta}\log Z(\eta) = \frac{1}{Z(\eta)}\mathbb{E}_{h \sim \Pi}[\frac{d}{d\theta}2^{m\theta L_S(h)}] = m \int \frac{2^{m\theta L_S(h)}\Pi(h)}{Z(\eta)}L_S(h) dh = m\mathbb{E}_{h \sim P(h|S, \eta)}[L_S(h)]$. Plugging it in yields that $\hat{\eta} = \mathbb{E}_{h \sim P(h|S, \hat{\eta})}[L_S(h)]$. Therefore, $P(h|S, \hat{\eta}) \propto \Pi(h) P(S|\hat{\eta}, h) \propto 2^{-mH'(L_S(P(h|S, \hat{\eta})))L_S(h)}\Pi(h)$. Comparing with equation \eqref{Gibbs_green}, it is easy to see that $\QEB(h) \equiv P(h|S, \hat{\eta})$. 
\end{proof}

\restate{Theorem}{greenEB:m_prior} 
\begin{restatethm}
    Consider the prior $P_{\lambda}(\eta) \propto ((1-\eta)^\lambda + \eta^\lambda)^{-\frac{m}{\lambda}}$. Let $\hat{\eta} = \arg \max_{\eta} P(S|\eta)P_{\lambda}(\eta) = \arg \max_{\eta} \mathbb{E}_{h \sim \Pi}P(S|\eta,h)P_{\lambda}(\eta)$. Then $\QEB_{\lambda}(h) \equiv P(h|S, \hat{\eta}) \propto \Pi(h) P(S|\hat{\eta}, h) $.
\end{restatethm}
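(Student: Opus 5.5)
The plan follows the proof of \autoref{greenEB:lambda=1}: I will match the stationarity condition for $\hat\eta$ with the self-consistency condition for the Gibbs temperature in \autoref{thm:EBGibbs}.

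First I rewrite the conditional likelihood. Put $\theta=\log\frac{\eta}{1-\eta}=-H'(\eta)$. Then
\begin{align*}
P(S\mid\eta,h)=(1-\eta)^m 2^{m\theta L_S(h)} ,
\end{align*}
so for any fixed $\eta$ the conditional posterior is $P(h\mid S,\eta)\propto \Pi(h)\,2^{-mH'(\eta)L_S(h)}$. This is a Gibbs measure at inverse temperature $mH'(\eta)$. By \autoref{thm:EBGibbs}, $\QEB_\lambda$ has exactly this form with inverse temperature $\frac{m}{\lambda}H'(L_S(\QEB_\lambda))$, and that temperature is unique. It therefore suffices to show that the maximizer $\hat\eta$ satisfies
\begin{align*}
H'(\hat\eta)=\tfrac{1}{\lambda}H'\bigl(L_S(P(\cdot\mid S,\hat\eta))\bigr).
\end{align*}

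Next I differentiate the log-objective. Write $Z(\eta)=\mathbb{E}_{h\sim\Pi}[2^{m\theta L_S(h)}]$. Then
\begin{align*}
\log\bigl(P(S\mid\eta)P_\lambda(\eta)\bigr)=m\log(1-\eta)+\log Z(\eta)-\tfrac{m}{\lambda}\log\bigl((1-\eta)^\lambda+\eta^\lambda\bigr).
\end{align*}
As in \autoref{greenEB:lambda=1}, the first two terms have $\eta$-derivative $\frac{m}{\eta(1-\eta)}\bigl(\bar L(\eta)-\eta\bigr)$ in natural-log units, where $\bar L(\eta)=\mathbb{E}_{h\sim P(h\mid S,\eta)}[L_S(h)]$. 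The prior term has derivative
\begin{align*}
-m\,\frac{\eta^{\lambda-1}-(1-\eta)^{\lambda-1}}{(1-\eta)^\lambda+\eta^\lambda}.
\end{align*}
Setting the sum to zero and simplifying, I expect to obtain
\begin{align*}
\bar L(\eta)=\frac{\eta^\lambda}{\eta^\lambda+(1-\eta)^\lambda}.
\end{align*}
The check is that $\frac{\eta^\lambda}{\eta^\lambda+(1-\eta)^\lambda}-\eta=\frac{\eta(1-\eta)\bigl(\eta^{\lambda-1}-(1-\eta)^{\lambda-1}\bigr)}{\eta^\lambda+(1-\eta)^\lambda}$, which matches the prior derivative after multiplying by $\frac{m}{\eta(1-\eta)}$. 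This equation says $\operatorname{logit}\bar L=\lambda\operatorname{logit}\eta$, that is, $H'(\bar L)=\lambda H'(\hat\eta)$. That is exactly the required self-consistency relation, so $P(h\mid S,\hat\eta)$ equals $\QEB_\lambda$ by the uniqueness in \autoref{thm:EBGibbs}.

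The main obstacle is justifying that the maximizer $\hat\eta$ is the stationary point and that the stationary point is unique, so that the identification is genuine. For this I reparametrize by $\beta=-\theta\ge 0$ and restrict to $\eta\le 1/2$; this is where $L_S(\Pi)<1/2$ is used, as in \autoref{thm:EBGibbs}. The stationarity equation then becomes the crossing between the decreasing curve $L_S(\hat Q_\beta)$ and the curve $H'^{-1}(\beta/\lambda)$. This crossing is exactly the unique crossing shown in the proof of \autoref{thm:EBGibbs}. Boundary behaviour as $\eta\to0$ and $\eta\to 1/2$ then rules out a maximizer at the endpoints. If the objective fails to be unimodal, I would instead argue directly that every interior critical point satisfies the self-consistency equation, so uniqueness forces a single critical point.
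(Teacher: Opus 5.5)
Your proposal is correct and is essentially the paper's proof: both differentiate $\log\bigl(P(S\mid\eta)P_\lambda(\eta)\bigr)$ by reusing the $\lambda=1$ computation, solve the stationarity condition to get $\frac{1-\hat\eta}{\hat\eta}=\bigl(\frac{1-\bar L}{\bar L}\bigr)^{1/\lambda}$ (your $\operatorname{logit}\bar L=\lambda\operatorname{logit}\hat\eta$), and identify $P(h\mid S,\hat\eta)$ with the self-consistent Gibbs posterior of \autoref{thm:EBGibbs}; your derivative of $\log P_\lambda$ is correctly signed, whereas the paper's displayed one has a sign slip that does not affect its final relation. Your extra last paragraph has two flaws, though the paper's proof attempts none of this justification: with $\beta=H'(\eta)$ the second curve should be $H'^{-1}(\lambda\beta)$, not $H'^{-1}(\beta/\lambda)$, and the endpoint $\eta\to 0$ cannot always be ruled out (for $\lambda\le 1$ and a prior with an atom on zero-training-error hypotheses it can be the maximizer, which is the ``coincide at $\infty$'' case of \autoref{thm:EBGibbs}).
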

\begin{proof}
    Note that $\frac{d}{d\eta} \left(\log P(S|\eta) + \log P_{\lambda}(\eta) \right) = -\frac{m}{1 - \eta} + mL_S(P(h|S, \eta))\frac{1}{\eta(1-\eta)} + \frac{d}{d\eta}\log P_{\lambda}(\eta) = 0$, which follows from the proof of \autoref{greenEB:lambda=1}.
    
   Using $P_{\lambda}(\eta) \propto ((1-\eta)^\lambda + \eta^\lambda)^{-\frac{m}{\lambda}}$, we get that
   \[
        \frac{d}{d\eta} \log P_{\lambda}(\eta) = \frac{m}{1-\eta}+m\frac{1}{\eta(1-\eta)}\frac{e^{\lambda\theta}}{1+e^{\lambda\theta}},
   \]
    where we again use the reparametrization with $\theta=\log \frac{\eta}{1-\eta}$. Substituting into the optimality condition above and solving for $e^{\lambda\theta}$, we obtain that the maximizer $\hat{\eta}$ satisfies
    \[
        \frac{1 - \hat{\eta}}{\hat{\eta}} = \left(\frac{1 - L_S(P(h|S, \hat{\eta}))}{L_S(P(h|S, \hat{\eta}))}\right)^{\frac{1}{\lambda}}.
    \]
    Thus, $P(S|\hat{\eta}, h) \propto 2^{-m\log \frac{1 - \hat{\eta}}{\hat{\eta}} L_S(h)} = 2^{-\frac{m}{\lambda}\log \frac{1 - L_S(P(h|S, \hat{\eta}))}{L_S(P(h|S, \hat{\eta}))} L_S(h)} = 2^{-\frac{m}{\lambda}H'( L_S(P(h|S, \hat{\eta}))) L_S(h)}$. Therefore, $P(h|S, \hat{\eta}) \propto \Pi(h) P(S|\hat{\eta}, h) \propto 2^{-\frac{m}{\lambda}H'(L_S(P(h|S, \hat{\eta})))L_S(h)}\Pi(h)$, implying that $\QEB_{\lambda}(h) \equiv P(h|S, \hat{\eta})$. 
\end{proof}

\restate{Theorem}{greenEB:fixed_prior}
\begin{restatethm}
    For any fixed prior $P_{\lambda}(\eta)$ which does not depend on $m$, $\frac{dP(h|S, \hat{\eta})}{d \QEB_{1}(h)} \rightarrow 1$ uniformly as $m \rightarrow \infty$.
\end{restatethm}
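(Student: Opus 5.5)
The plan is to reduce everything to a comparison of two Gibbs posteriors sharing the same energy $L_S(h)$ and differing only in their inverse temperature. Reparametrize by $\theta = \log\frac{1-\eta}{\eta}$. As in the proof of \autoref{greenEB:lambda=1}, for every $\eta$,
\[
dP(h\mid S,\eta) = \frac{2^{-m\theta L_S(h)}\,d\Pi(h)}{Z(\theta)}, \qquad Z(\theta)=\mathbb{E}_{h\sim\Pi}\bigl[2^{-m\theta L_S(h)}\bigr].
\]
Let $\hat\eta_1$ (with $\hat\theta_1$) be the maximum marginal likelihood estimate, so that $P(h\mid S,\hat\eta_1)=\QEB_1$ by \autoref{greenEB:lambda=1}. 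Let $\hat\eta$ (with $\hat\theta$) be the maximizer under the fixed prior $P(\eta)$. Then the Radon--Nikodym derivative in the statement is explicit:
\[
\frac{dP(h\mid S,\hat\eta)}{d\QEB_1(h)} = 2^{-m(\hat\theta-\hat\theta_1)\bigl(L_S(h)-c\bigr)},
\]
where $c$ is the normalizing shift, which lies between the two posterior means of $L_S$. Since $|L_S(h)-c|\le 1$, it suffices to show that $m\,|\hat\theta-\hat\theta_1|\to 0$.

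Next, write the two stationarity conditions. Following the computation in the proof of \autoref{greenEB:m_prior}, with $g(\eta)=\frac{d}{d\eta}\log P(\eta)$ fixed (independent of $m$), the conditions are
\[
\hat\eta_1 = \mu(\hat\theta_1), \qquad \hat\eta = \mu(\hat\theta) + \frac{\hat\eta(1-\hat\eta)}{m}\,g(\hat\eta),
\]
where $\mu(\theta)=L_S(P(\cdot\mid S,\eta))$ is the posterior mean training error. Define $F(\theta)=\eta(\theta)-\mu(\theta)$. The function $\eta(\theta)$ is decreasing in $\theta$. The mean $\mu(\theta)$ is also decreasing, with $\mu'(\theta) = -m\ln 2\,\mathrm{Var}_\theta(L_S)$. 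Hence $F'(\theta)=\eta'(\theta)+m\ln2\,\mathrm{Var}_\theta(L_S)$, and the relevant quantity is $|F'|$ near the root. The perturbed equation reads $F(\hat\theta)=O(1/m)$ while $F(\hat\theta_1)=0$. A mean value argument then gives
\[
m\,|\hat\theta-\hat\theta_1| \;\lesssim\; \frac{\sup|g|\,\eta(1-\eta)}{\bigl|\eta'(\theta)\bigr| + m\ln 2\,\mathrm{Var}_\theta(L_S)} .
\]

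The main obstacle is controlling this quantity. Two issues must be handled first.

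\begin{itemize}
\item \emph{Localization.} I must show that both maximizers stay in a compact subinterval of $(0,1/2)$ where $g$ is bounded. For the marginal likelihood this follows because the likelihood term is of order $m$ while $\log P$ is $O(1)$. This localization is where ``fixed prior'' is used.
\item \emph{Smallness of the ratio.} When $\mathrm{Var}_\theta(L_S)$ is of order $1$, the denominator is $\Omega(m)$ and the ratio is $O(1/m)\to0$, giving uniform convergence. When the posterior variance is tiny (posterior concentrated on hypotheses with nearly equal $L_S$), the bound only yields $O(1)$.
\end{itemize}

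For the second issue, I would refine the estimate: the exponent is really $m\Delta\theta\,(L_S(h)-c)$, and $L_S(h)-c$ is itself small on the bulk of the posterior mass exactly when the variance is small. I would use a Chebyshev-type split, treating hypotheses with $|L_S(h)-c|\le\epsilon$ separately from the rest, to get convergence to $1$ in the appropriate sense. I expect this variance-degeneracy case to be the delicate step, and uniformity may need to be interpreted as holding on the posterior's effective support.
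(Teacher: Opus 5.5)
Your reduction $\frac{dP(\cdot\mid S,\hat\eta)}{d\QEB_1}(h)=2^{-m(\hat\theta-\hat\theta_1)(L_S(h)-c)}$, with $c$ between the two posterior means, is correct. The step it reduces to, $m|\hat\theta-\hat\theta_1|\to 0$, is false in general, and a sign slip in your mean-value bound hides this. You correctly write $F'(\theta)=\eta'(\theta)+m\ln 2\,\mathrm{Var}_\theta(L_S)$. But $\eta'(\theta)=-\ln 2\,\eta(1-\eta)<0$, so $|F'|=\ln 2\,\bigl|m\,\mathrm{Var}_\theta(L_S)-\eta(1-\eta)\bigr|$ is a difference, not the sum in your denominator. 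More importantly, at a stationary point $\frac{d^2}{d\eta^2}\ln P(S\mid\eta)=\frac{m}{\eta(1-\eta)}\bigl(\frac{m\,\mathrm{Var}}{\eta(1-\eta)}-1\bigr)$. So the second-order condition at the marginal-likelihood maximizer is exactly $m\,\mathrm{Var}_{\hat\theta_1}(L_S)\le\hat\eta_1(1-\hat\eta_1)$, i.e.\ $F'(\hat\theta_1)\le 0$. Your ``variance of order one'' case therefore never occurs at the relevant root (roots with $m\,\mathrm{Var}>\eta(1-\eta)$ are local minima). The degenerate case is the only case. Linearizing there gives $m(\hat\theta-\hat\theta_1)\approx\hat\eta_1(1-\hat\eta_1)\,g(\hat\eta_1)/F'(\hat\theta_1)$, whose modulus is at least $|g(\hat\eta_1)|/\ln 2$ (with $g=(\ln P)'$). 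The $O(1)$ you obtained is sharp, not a weakness of the estimate.

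Consequently the statement, read literally, is false, and no refinement will prove it. Take $\Pi=\frac12\delta_{h_a}+\frac12\delta_{h_b}$ with $L_S(h_a)=0.2$, $L_S(h_b)=0.4$, and the fixed prior $P(\eta)\propto\eta(1-\eta)$, so $g(0.2)=3.75$. Both maximizers lie near $0.2$, where the $h_a$ term dominates $P(S\mid\eta)$ by a factor $2^{\Omega(m)}$. Specifically, $\hat\eta_1=0.2+O(2^{-0.4m})$ and $\hat\eta=0.2+0.6/m+O(m^{-2})$. Hence $\frac{dP(\cdot\mid S,\hat\eta)}{d\QEB_1}(h_b)\to e^{0.2\cdot 3.75}=e^{0.75}\neq 1$, even though both posteriors put mass $1-2^{-\Omega(m)}$ on $h_a$.

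Your Chebyshev split can still deliver something weaker. It needs $\mathrm{Var}_{\hat\theta_1}(L_S)\le 1/(4m)$, plus the assumption that $|F'|$ stays bounded away from $0$ between the two roots (a nondegenerate maximum, with no competing mode within $O(1)$ in log-likelihood). Under those conditions you get a ratio $2^{O(\epsilon)}$ on a set of $\QEB_1$-mass $1-O(1/(m\epsilon^2))$, e.g.\ with $\epsilon=m^{-1/4}$, i.e.\ total-variation convergence. That is a weaker theorem and should be stated as one, not as a reinterpretation of ``uniformly''.

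For comparison, the paper's proof is the same first-order perturbation: $\hat\eta=L_S(P(h\mid S,\hat\eta))+O(1/m)$, followed by a Taylor expansion of $H'$. It ends with precisely the factor $2^{-O(1)}$ you found. It also does not match the temperature to $\QEB_1$'s self-consistent one or control the normalizations. So it likewise supports only a bounded ratio (agreement at the exponential scale), not convergence to $1$.
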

\begin{proof}
    The first order condition gives us $ -\frac{m}{1 - \eta} + mL_S(P(h|S, \eta))\frac{1}{\eta(1-\eta)} + \frac{d}{d\eta}\log P_{\lambda}(\eta) = 0$. A first-order expansion gives us that $\hat{\eta} = L_S(P(h|S, \eta)) + O(\frac{1}{m})$. Plugging this, by Taylor expanding $H'$ at $L_S(P(h|S, \eta))$ we have $P(S|\hat{\eta}, h) \propto 2^{-mH'(\hat{\eta}) L_S(h)} = 2^{-mH'( L_S(P(h|S, \hat{\eta}))) L_S(h)}2^{-O(1)}$. %\redcomment{We should probably iron this proof out a bit more, to make sure we bound both the numerators and the denominators/normalizations.}
\end{proof}

\restate{Theorem}{PL_lambda_prior}
\begin{restatethm}
    Consider the prior $P_{\lambda}(\eta) \propto ((1-\eta)^\lambda + \eta^\lambda)^{-\frac{m}{\lambda}}$. Let $\hat{\eta}_h = \arg \max_{\eta} P(S \mid \eta,h)P_{\lambda}(\eta)$. Then
    \begin{align*}
    \QPL_{\lambda}(h) &\propto \Pi(h)P(S \mid \hat{\eta}_h, h)P_{\lambda}(\hat{\eta}_h)
    \\ 
    &\propto \max_\eta \Pi(h)P(S|\eta,h)P_\lambda(\eta).
    \end{align*}
\end{restatethm}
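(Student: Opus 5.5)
We need to show $\QPL_\lambda(h)\propto \Pi(h)\,2^{-\frac{m}{\lambda}H(L_S(h))}$, the Gibbs form \eqref{Gibbs_blue}, coincides with $\Pi(h)\max_\eta P(S\mid\eta,h)P_\lambda(\eta)$. The plan is to compute the inner maximization over $\eta$ in closed form for a fixed $h$, exactly as in the proof of \autoref{profile_posterior}. The only new ingredient is the prior factor.

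Fix $h$ and write $p=L_S(h)\le 1/2$ and $\theta=\log\frac{\eta}{1-\eta}$. The log-objective is
\[
\log P(S\mid\eta,h)+\log P_\lambda(\eta)=mp\log\eta+m(1-p)\log(1-\eta)-\tfrac{m}{\lambda}\log\bigl((1-\eta)^\lambda+\eta^\lambda\bigr).
\]
Factoring $(1-\eta)^m$ out of the first two terms and $(1-\eta)^\lambda$ out of the last, the $(1-\eta)$ factors cancel exactly. The objective becomes
\[
m\Bigl[p\,\theta-\tfrac{1}{\lambda}\log\bigl(1+2^{\lambda\theta}\bigr)\Bigr],
\]
which is a function of $\theta$ alone. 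It is strictly concave in $\theta$, being linear minus a convex log-partition term. Setting the derivative to zero gives $\frac{2^{\lambda\theta}}{1+2^{\lambda\theta}}=p$, that is $\lambda\hat\theta=\log\frac{p}{1-p}$. This is the same stationarity relation that appears in the proof of \autoref{greenEB:m_prior}, now with $L_S(h)$ in place of $L_S(Q)$.

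Substituting back, $p\lambda\hat\theta-\log(1+2^{\lambda\hat\theta})=p\log\frac{p}{1-p}+\log(1-p)=-H(p)$. The maximal value of the log-objective is therefore $-\frac{m}{\lambda}H(L_S(h))$. Hence $\max_\eta P(S\mid\eta,h)P_\lambda(\eta)=2^{-\frac{m}{\lambda}H(L_S(h))}$, up to the normalizing constant of $P_\lambda$, which does not depend on $h$. Multiplying by $\Pi(h)$ and comparing with \eqref{Gibbs_blue} gives both proportionalities in the statement, since $\hat\eta_h$ attains the maximum by definition.

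Two points need care. First, the boundary case $p=0$: there the supremum is approached as $\theta\to-\infty$ and equals $2^{0}=1=2^{-\frac{m}{\lambda}H(0)}$. So the identity holds as a supremum, with $\hat\eta_h=0$ interpreted as a limit. Second, the Gibbs form \eqref{Gibbs_blue} itself needs justification. I will justify it by the same Lagrangian and variational argument as in \autoref{thm:EBGibbs}; alternatively, the Donsker--Varadhan identity shows that $\JPL$ is minimized by the tilt with energy $H(L_S(h))$ at inverse temperature $m/\lambda$. For that minimizer I will also check the constraint $L_S(Q)\le1/2$, or assume it is inactive as in the earlier theorems. The main obstacle is the algebraic simplification above, specifically noticing that the $(1-\eta)$ factors cancel so that the objective reduces to a log-partition function in $\theta$.
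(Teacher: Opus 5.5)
Your proposal is correct and follows essentially the paper's route: maximize $\log P(S\mid\eta,h)+\log P_\lambda(\eta)$ over $\eta$ in closed form, obtain the stationarity condition $\frac{1-\hat\eta_h}{\hat\eta_h}=\bigl(\frac{1-L_S(h)}{L_S(h)}\bigr)^{1/\lambda}$, substitute to get $2^{-\frac{m}{\lambda}H(L_S(h))}$, and match this with the Gibbs form \eqref{Gibbs_blue}. Your logit reparametrization spells out the ``few manipulations'' the paper skips, and you add the concavity and boundary checks that the paper omits. One small correction: $L_S(h)\le 1/2$ need not hold for an individual $h$, but your computation is valid for every $p\in[0,1]$, with $p=1$ handled by letting $\theta\to+\infty$.
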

\begin{proof}
    Let $\hat{\eta}_h = \arg \max_\eta \log P(S\mid \eta, h) + \log P_{\lambda}(\eta) = m\left(L_S(h)\log \eta + (1-L_S(h)) \log (1-\eta)\right) - \frac{m}{\lambda}\log (\eta^{\lambda} + (1-\eta)^{\lambda})$. To take the derivative, we proceed similarly to the proof of \autoref{greenEB:m_prior}, and obtain that the optimal $\hat\eta_h$ satisfies:
    \[
        \frac{1 - \hat{\eta}_h}{\hat{\eta}_h} = \left(\frac{1 - L_S(h))}{L_S(h)}\right)^{\frac{1}{\lambda}}.
    \]    
    % Taking the derivative and setting it to zero, we get $\hat{\eta}_h = \frac{(L_S(h))^{\frac{1}{\lambda}}}{(L_S(h))^{\frac{1}{\lambda}} + (1-L_S(h))^{\frac{1}{\lambda}}}$.
    Plugging this in, a few manipulations yield that $P(S | \hat{\eta}_h, h)P_{\lambda}(\hat{\eta}_h) = \left((1-L_S(h))^{1 - L_S(h)}L_S(h)^{L_S(h)}\right)^{\frac{m}{\lambda}} = 2^{-\frac{m}{\lambda}H(L_S(h))}$. So we have $\Pi(h)P(S | \hat{\eta}_h, h)P_{\lambda}(\hat{\eta}_h) = \Pi(h)2^{-\frac{m}{\lambda}H(L_S(h))} \propto \QPL_{\lambda}(h)$.
\end{proof}

\restate{Theorem}{thm:mod-bayes-equivalence-lambda-1}
\begin{restatethm}
$\frac{d\QBayes_1(h)}{d \QPL_{1}(h)} \rightarrow 1$ uniformly as $m \rightarrow \infty$.
\end{restatethm}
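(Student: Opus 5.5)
The plan is to compute both posteriors in closed form, since with the uniform prior $P_1(\eta)\equiv 1$ the integral over $\eta$ is a Beta integral. Write $k_h = mL_S(h)\in\{0,1,\dots,m\}$. Then
\begin{align*}
\QBayes_1(h) \propto \Pi(h)\int_0^1 \eta^{k_h}(1-\eta)^{m-k_h}\,d\eta = \Pi(h)\,\frac{1}{(m+1)\binom{m}{k_h}}.
\end{align*}
By \autoref{profile_posterior}, $\QPL_1(h)\propto \Pi(h)\,2^{-mH(k_h/m)}$. Both measures are absolutely continuous with respect to $\Pi$, so
\begin{align*}
\frac{d\QBayes_1}{d\QPL_1}(h) = \frac{r_m(k_h)}{\mathbb{E}_{h'\sim \QPL_1}\left[r_m(k_{h'})\right]},
\qquad
r_m(k) := \frac{2^{mH(k/m)}}{(m+1)\binom{m}{k}}.
\end{align*}
This reduces the statement to understanding the single scalar function $r_m$ on $\{0,\dots,m\}$, uniformly in $k$. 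In particular, it does not depend on $\Pi$ or $\source$.

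The key step is the classical method-of-types sandwich
\begin{align*}
\frac{1}{m+1}2^{mH(k/m)} \le \binom{m}{k} \le 2^{mH(k/m)},
\end{align*}
which gives $\frac{1}{m+1}\le r_m(k)\le 1$ for every $k$. Hence the ratio of any two values of $r_m$ lies in $[(m+1)^{-1}, m+1]$. Consequently the normalized density ratio satisfies
\begin{align*}
\sup_h \left|\tfrac{1}{m}\log \frac{d\QBayes_1}{d\QPL_1}(h)\right| \le \frac{2\log(m+1)}{m}\to 0,
\end{align*}
uniformly over all $h$, all priors and all samples. Sharpening via Stirling, I would further show $r_m(k)=\Theta\big(\sqrt{k(m-k)/m^3}\big)$ for $1\le k\le m-1$. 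This identifies the exact polynomial correction that distinguishes the two posteriors.

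The main obstacle is the literal pointwise claim that the density ratio tends to $1$. The computation above shows that $r_m(k)$ is not constant in $k$: $r_m(0)=1/(m+1)$, while $r_m(k)\asymp m^{-1/2}$ when $k/m$ is bounded away from $0$. The ratio therefore tends to $1$ uniformly only when, on the bulk of $\QPL_1$, the variation of $\sqrt{k(m-k)}$ is negligible. For this I would use the Stirling expansion, which gives
\begin{align*}
\frac{r_m(k)}{r_m(k')}=\sqrt{\frac{k(m-k)}{k'(m-k')}}\,(1+O(1/\min(k,k'))).
\end{align*}
This tends to $1$ whenever $|k-k'|=o(\min(k,k'))$.

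So I would prove the theorem in the sense supported by this argument: uniform convergence on the exponential scale in $m$, i.e.\ of $\frac1m\log\frac{d\QBayes_1}{d\QPL_1}$ to $0$, which is exactly what the worst-case limiting-error comparisons require. I would state clearly that the polynomial prefactor $\sqrt{L_S(h)(1-L_S(h))}$, up to normalization, is the only discrepancy between the two posteriors. I would also flag that the ratio literally tends to $1$ only when the empirical errors under $\QPL_1$ concentrate in relative terms, which I would not attempt to establish in general.
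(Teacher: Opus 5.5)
Your computation is correct, and so is your diagnosis: read literally, the claim is false, and what can be proved is the exponential-scale agreement you establish. Here is a concrete witness. Let $\Pi$ put mass $1/2$ on each of two hypotheses $h_a,h_b$ with population errors $1/8$ and $1/4$. Both posteriors concentrate on $h_a$, so $\mathbb{E}_{h'\sim\QPL_1}[r_m(k_{h'})]=r_m(k_{h_a})(1+o(1))$. The density ratio therefore tends to $1$ at $h_a$. At $h_b$, however, it tends in probability to $\lim r_m(k_{h_b})/r_m(k_{h_a})=\sqrt{(3/16)/(7/64)}=\sqrt{12/7}$. If $h_a$ is replaced by a hypothesis with zero error, the value at $h_b$ even grows like $\sqrt{m}$. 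So no argument can deliver the statement as written.

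The paper argues differently, via Laplace's method on $I(h)=\int\exp\{m\Psi(\eta,L_S(h))\}\,d\eta$, and obtains $\log I(h)=-mH(L_S(h))+O(\log m)$. This is the same accuracy as your sandwich $r_m(k)\in[(m+1)^{-1},1]$. The gap is in its last step, which rewrites this as $\Pi(h)2^{-mH(L_S(h))(1+o(1))}$ and concludes that the Radon--Nikodym derivative tends to $1$. The $O(\log m)$ term depends on $h$ and does not cancel under normalization. Carrying Laplace's method one order further gives the prefactor $\sqrt{2\pi L_S(h)(1-L_S(h))/m}$ in the interior, and $1/(m+1)$ when $L_S(h)=0$.

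Your exact Beta-integral evaluation gives explicit, non-asymptotic bounds that are uniform in $k$. This includes the endpoints $k\in\{0,m\}$, where $\hat\eta_h=L_S(h)$ lies on the boundary of $[0,1]$ and the interior expansion does not apply. It also isolates the polynomial prefactor as the only discrepancy. The Laplace route, by contrast, is what extends to the non-conjugate priors $P_\lambda$ and to general fixed $P(\eta)$ in the companion theorems. There too it yields only $O(\log m)$ accuracy in the exponent, so the same caveat applies.

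Your remark that exponential-scale agreement is what the comparisons need is right for transferring the lower-bound constructions. There the mass on $h_0$ is suppressed by $2^{-\Omega(m)}$, which survives an $O(\log m)$ perturbation of the log-density. It would not by itself give the total-variation equivalence mentioned in the paper's footnote.

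Two minor points. Since both $r_m(k_h)$ and its $\QPL_1$-average lie in $[(m+1)^{-1},1]$, you get $\sup_h\bigl|\log\frac{d\QBayes_1}{d\QPL_1}(h)\bigr|\le\log(m+1)$, so the factor $2$ is unnecessary. The relative error in your Stirling ratio should be $O(1/\min(k,k',m-k,m-k'))$.
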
  
\begin{proof}
    $P(h \mid S) \propto \Pi(h) I(h)$, where $I(h) = \int P(S \mid \eta, h)P_{\lambda}(\eta) d\eta$. $\log I(h) = \log \int \exp \{m\Psi_{\lambda}(\eta, L_S(h))\}d\eta$, where $\Psi_{\lambda}(\eta, L_S(h)) = (1-L_S(h))\log (1-\eta) + L_S(h)\log \eta$. Then by Laplace's method, $\log I(h) = \log \int \exp \{m\Psi_{\lambda}(\eta, L_S(h))\}d\eta =  m\max_{\eta} \Psi_{\lambda}(\eta, L_S(h)) + O(\log m)$. And we can check that $\max_{\eta} \Psi_{\lambda}(\eta, L_S(h)) = -H(L_S(h))$. Hence $\QBayes_1 = P(h|S) \propto \Pi(h)2^{-mH(L_S(h))(1+o(1))}$.
\end{proof}

\restate{Theorem}{thm:mod-bayes-equivalence-any-lambda}
\begin{restatethm}
    With the same prior $P_{\lambda}(\eta) = ((1-\eta)^\lambda + \eta^\lambda)^{-\frac{m}{\lambda}}$, $\frac{d\QBayes_\lambda(h)}{d \QPL_{\lambda}(h)} \rightarrow 1$ uniformly as $m \rightarrow \infty$.
\end{restatethm}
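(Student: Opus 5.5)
The plan mirrors the proof of \autoref{thm:mod-bayes-equivalence-lambda-1}, but it keeps track of the sub-exponential factors, because the statement concerns the ratio itself and not only its exponent. Write $p=L_S(h)\in[0,1/2]$. Then $\QBayes_\lambda(h)\propto \Pi(h)I_m(p)$ with
\[
I_m(p)=\int_0^1 2^{m\Psi_\lambda(\eta,p)}\,d\eta,\qquad \Psi_\lambda(\eta,p)=p\log\eta+(1-p)\log(1-\eta)-\tfrac{1}{\lambda}\log\bigl(\eta^\lambda+(1-\eta)^\lambda\bigr).
\]
By \autoref{PL_lambda_prior}, $\max_\eta \Psi_\lambda(\eta,p)=-\frac{1}{\lambda}H(p)$, attained at $\hat\eta(p)$ with $\frac{1-\hat\eta}{\hat\eta}=\bigl(\frac{1-p}{p}\bigr)^{1/\lambda}$. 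Hence $\QPL_\lambda(h)\propto\Pi(h)2^{-\frac{m}{\lambda}H(p)}$. Setting $c_m(p)=I_m(p)\,2^{\frac{m}{\lambda}H(p)}$, the Radon--Nikodym derivative is
\[
\frac{d\QBayes_\lambda}{d\QPL_\lambda}(h)=\frac{c_m(L_S(h))}{\mathbb{E}_{h'\sim\QPL_\lambda}\bigl[c_m(L_S(h'))\bigr]}.
\]
So everything reduces to showing that $c_m(p)$ is, up to a factor $1+o(1)$ uniform in $p$, a quantity $\kappa_m$ that does not depend on $p$. The normalization then cancels $\kappa_m$, and the ratio tends to $1$ uniformly.

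\textbf{Step 1 (Laplace expansion with prefactor).} For $p$ in a compact set $[\epsilon,1/2]$, expand $\Psi_\lambda(\cdot,p)$ to second order around $\hat\eta(p)$. Both $\hat\eta(p)$ and the curvature $\kappa(p)=-\partial_\eta^2\Psi_\lambda(\hat\eta(p),p)>0$ are continuous, so the standard Laplace argument gives
\[
c_m(p)=\sqrt{\frac{2\pi}{m\,\kappa(p)\ln 2}}\,\bigl(1+O(m^{-1/2})\bigr)
\]
uniformly on $[\epsilon,1/2]$. The tails away from $\hat\eta$ are controlled by a uniform gap $\Psi_\lambda(\eta,p)\le -\frac{1}{\lambda}H(p)-\delta$.

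\textbf{Step 2 (the boundary $p\to0$).} Near $p=0$ the maximizer $\hat\eta$ moves to the endpoint and the Gaussian approximation degenerates. Here I would compute $I_m(p)$ directly. The integrand is essentially a Beta kernel $\eta^{mp}(1-\eta)^{m(1-p)}$ times the prior factor, which is smooth near $\eta=0$. Stirling's formula then gives $c_m(p)$ explicitly, and the resulting formula should match Step 1 as $p\downarrow\epsilon$. This yields one uniform expansion $c_m(p)=m^{-1/2}g_\lambda(p)(1+o(1))$ with $g_\lambda$ continuous and bounded away from $0$ and $\infty$ on $[0,1/2]$.

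\textbf{Step 3 (normalization), the main obstacle.} With $g_\lambda$ in hand, the ratio becomes $g_\lambda(p)/\mathbb{E}_{\QPL_\lambda}[g_\lambda]\cdot(1+o(1))$. To conclude that this tends to $1$, I need $g_\lambda$ to be effectively constant. I would first check whether the prior $P_\lambda$ makes $\kappa(p)$ independent of $p$; it is $m$-dependent precisely in the exponent, and that is the hope. If it does not, which is already the case for $\lambda=1$ where $\kappa(p)=1/(p(1-p))$, the prefactor produces an $O(1)$, $p$-dependent tilt. In that case the claim can only be established in the sense used in the proof of \autoref{thm:mod-bayes-equivalence-lambda-1}: the correction factor is $2^{O(\log m)}=2^{o(m)}$ uniformly in $h$, so the densities agree as $\Pi(h)2^{-\frac{m}{\lambda}H(L_S(h))(1+o(1))}$.

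Steps 1 and 2 are routine. Step 3 is where I expect the argument to either close cleanly, if the curvature is constant, or to require exactly this reading of ``uniformly $\to1$'', with the bounded tilt $g_\lambda(p)/\mathbb{E}[g_\lambda]$ recorded explicitly.
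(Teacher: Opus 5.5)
Your route is the paper's. The paper writes $I(h)=\int 2^{m\Psi_\lambda(\eta,L_S(h))}d\eta$ with the same $\Psi_\lambda$ and applies Laplace's method only in the form $\log I(h)=m\max_\eta\Psi_\lambda+O(\log m)$. It then uses $\max_\eta\Psi_\lambda=-\frac1\lambda H(L_S(h))$ and concludes $\QBayes_\lambda\propto\Pi(h)2^{-\frac m\lambda H(L_S(h))(1+o(1))}$. It never tracks the prefactor, so what it actually proves is exactly your Step 3 fallback. You are right that this alone does not make the Radon--Nikodym derivative tend to $1$.

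The check you leave open in Step 3 can be settled, and it comes out negative for every $\lambda$. In $\theta=\ln\frac{\eta}{1-\eta}$ (nats) the exponent is $m\bigl(p\theta-\frac1\lambda\ln(1+e^{\lambda\theta})\bigr)$. Its maximizer is $\hat\theta=\frac1\lambda\ln\frac{p}{1-p}$, its curvature there is $\lambda p(1-p)$, and $d\eta=\eta(1-\eta)\,d\theta$. Hence for fixed $p\in(0,1)$
\[
\sqrt{m}\,c_m(p)\;\to\;g_\lambda(p)=\sqrt{\tfrac{2\pi}{\lambda}}\;\frac{(p(1-p))^{1/\lambda-1/2}}{\bigl(p^{1/\lambda}+(1-p)^{1/\lambda}\bigr)^{2}},
\]
which reduces to your $\sqrt{2\pi p(1-p)}$ at $\lambda=1$. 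This equals $\frac12\sqrt{2\pi/\lambda}$ at $p=\frac12$ but behaves like $\sqrt{2\pi/\lambda}\,p^{1/\lambda-1/2}$ as $p\to0$, so $g_\lambda$ is never constant. The ``closes cleanly'' branch is therefore unavailable. The Radon--Nikodym derivatives at hypotheses with training errors near $p_1$ and $p_2$ differ asymptotically by the factor $g_\lambda(p_1)/g_\lambda(p_2)\neq1$. So only the exponential-scale equivalence holds, for your argument and for the paper's.

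Step 2 as stated is false, although the fallback does not need it. Near $p=0$, $g_\lambda$ vanishes ($\lambda<2$) or blows up ($\lambda>2$). At $p=0$ itself, $c_m(0)=\int_0^1\bigl(1+(\tfrac{\eta}{1-\eta})^\lambda\bigr)^{-m/\lambda}d\eta\sim\Gamma(1+\tfrac1\lambda)(\lambda/m)^{1/\lambda}$, which is a different power of $m$ than $m^{-1/2}$. So there is a boundary layer at $p=O(1/m)$, exactly where the lattice values $L_S(h)\in\{0,\frac1m,\dots\}$ lie, and no uniform expansion $m^{-1/2}g_\lambda(p)$ with $g_\lambda$ bounded away from $0$ and $\infty$ exists. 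Also, $P_\lambda$ is not a smooth $m$-free factor multiplying a Beta kernel. It carries $m$ in its exponent and is what moves the maximizer from $p$ to $\hat\eta$, so Stirling on the Beta kernel does not apply directly for $\lambda\neq1$.

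What the fallback really needs is a uniform polynomial sandwich $m^{-C}\le c_m(p)\le 1$ over all attainable $p$.
\begin{itemize}
\item The upper bound is immediate, since the integrand is at most $2^{-\frac m\lambda H(p)}$ on an interval of length $1$.
\item For the lower bound with $p\in[\frac1m,1-\frac1m]$, use three facts. The $\theta$-exponent has second derivative in $[-\lambda/4,0]$. Also $\abs{\hat\theta}\le\frac1\lambda\ln m$ on that range, and $\eta(1-\eta)\ge\frac14e^{-\abs{\theta}}$. Integrating over $\abs{\theta-\hat\theta}\le m^{-1/2}$ then gives $c_m(p)\ge c\,m^{-1/2-1/\lambda}$.
\item The case $p=0$ is covered by the explicit computation above.
\end{itemize}
This yields $\abs{\log\frac{d\QBayes_\lambda}{d\QPL_\lambda}(h)}\le C\log m$ uniformly in $h$, which is the real content of the paper's one-line appeal to Laplace's method. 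That bound should be stated additively: at $L_S(h)=0$, the multiplicative form $H(L_S(h))(1+o(1))$ carries no information.
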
 
\begin{proof}
    $P(h \mid S) \propto \Pi(h) I(h)$, where $I(h) = \int P(S \mid \eta, h)P_{\lambda}(\eta) d\eta$. $\log I(h) = \log \int \exp \{m\Psi_{\lambda}(\eta, L_S(h))\}d\eta$, where $\Psi_{\lambda}(\eta, L_S(h)) = (1-L_S(h))\log ((1-\eta) + L_S(h)\log \eta - \frac{1}{\lambda}\log (\eta^{\lambda} + (1-\eta)^{\lambda})$. Then by Laplace's method, $\log I(h) = \log \int \exp \{m\Psi_{\lambda}(\eta, L_S(h))\}d\eta =  m\max_{\eta} \Psi_{\lambda}(\eta, L_S(h)) + O(\log m)$. Notice that this is the same objective that $\QPL_{\lambda}$ maximizes in \autoref{PL_lambda_prior}, and we have checked that $\max_{\eta} \Psi_{\lambda}(\eta, L_S(h)) = -\frac{1}{\lambda}H(L_S(h))$. Hence $\QBayes_\lambda = P(h|S) \propto \Pi(h)2^{-\frac{m}{\lambda}H(L_S(h))(1+o(1))}$. 
\end{proof}

\restate{Theorem}{thm:mod-bayes-equivalence-fixed-prior}
\begin{restatethm}
    With any fixed prior that does not depend on $m$, $\frac{d\QBayes(h)}{d \QPL_{1}(h)} \rightarrow 1$ uniformly as $m \rightarrow \infty$.
\end{restatethm}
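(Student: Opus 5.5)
Write $\hat L_h = L_S(h)$ and $I(h)=\int_0^1 \eta^{m\hat L_h}(1-\eta)^{m(1-\hat L_h)}P(\eta)\,d\eta$, so that $d\QBayes(h)\propto d\Pi(h)\, I(h)$ and, by \autoref{profile_posterior}, $d\QPL_1(h)\propto d\Pi(h)\,2^{-mH(\hat L_h)}$. Setting $R(h)=I(h)\,2^{mH(\hat L_h)}$, the Radon--Nikodym derivative is exactly
\[
\frac{d\QBayes(h)}{d\QPL_1(h)} \;=\; \frac{R(h)}{\mathbb{E}_{h'\sim\QPL_1}[R(h')]} .
\]
So the statement reduces to showing that $R(h)$ is, up to a factor $1+o(1)$ uniform in $h$, a constant $c_m$ that does not depend on $h$. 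The plan has three steps: (i) a uniform \emph{second-order} Laplace expansion of $I(h)$, not just the exponent; (ii) identification of the $h$-dependent prefactor; (iii) control of that prefactor under $\QPL_1$.

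\textbf{Step (i).} Since $\hat L_h$ takes only the $m+1$ values $k/m$, I would treat $I$ as a function of $k$ and compare it to the uniform-prior case. For the uniform prior the integral is the exact Beta integral $B(k+1,m-k+1)$, and Stirling gives
\[
B(k+1,m-k+1)\,2^{mH(k/m)}=\sqrt{2\pi \hat L(1-\hat L)/m}\,\bigl(1+O(1/(m\hat L(1-\hat L)))\bigr).
\]
For a general fixed prior $P$, assumed continuous and positive on $[0,1]$, the posterior $\mathrm{Beta}(k+1,m-k+1)$ concentrates within $O(m^{-1/2})$ of $\hat L$. Hence $I(h)=B(k+1,m-k+1)\,P(\hat L_h)\,(1+o(1))$, uniformly over $k$ with $\hat L_h$ in any compact $[\delta,1-\delta]$. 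The boundary cases $k=O(1)$, near $\hat L_h=0$, I would handle directly from the exact Beta integral, where the posterior mass sits in an $O(1/m)$ window and $P(\eta)\to P(0)$. This gives $R(h)=c_m\, g(\hat L_h)\,(1+o(1))$ with $c_m=\sqrt{2\pi/m}$ and $g(L)=P(L)\sqrt{L(1-L)}$, adjusted at the boundary.

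\textbf{Step (ii)--(iii), the main obstacle.} The common factor $c_m$ cancels in the ratio, so it remains to show that $g(\hat L_h)/\mathbb{E}_{\QPL_1}[g(\hat L_{h'})]\to 1$. This is where the real difficulty lies. The function $g$ is not constant, so the ratio tends to $1$ only on the set of $h$ where $\hat L_h$ is close to the values on which $\QPL_1$ concentrates.

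My first attempt is to show that $\QPL_1\propto \Pi\, 2^{-mH(\hat L_h)}$ places all but $o(1)$ of its mass on $\{h: |\hat L_h-\hat L^\circ|\le \epsilon_m\}$ with $\epsilon_m\to0$, where $\hat L^\circ$ is the $\Pi$-essential infimum of $\hat L_h$ (on $[0,1/2]$, where $H$ is increasing). This would follow from a large-deviation comparison of $2^{-mH(L)}$ at $\hat L^\circ$ versus $\hat L^\circ+\epsilon_m$. Continuity of $g$ then makes $g$ essentially constant there, and since $g$ is bounded, the ratio tends to $1$ uniformly on that set, with total variation $o(1)$ overall.

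I expect this concentration step, and the meaning of ``uniformly'' for $h$ outside the effective support, to be the crux. If $\Pi$ puts comparable weight on distinct error levels, exact uniformity over all $h$ can fail. In that case I would state the result as uniform convergence on a set of $\QPL_1$-mass $1-o(1)$. This is the form needed for the per-instance equivalence claimed after \autoref{thm:mod-bayes-equivalence-fixed-prior}.
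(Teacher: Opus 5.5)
Your Step (i) is correct and sharper than the paper's own argument. The paper asserts via Laplace's method that $\int P(\eta)\eta^{k}(1-\eta)^{m-k}\,d\eta=P(L_S(h))\,2^{-mH(L_S(h))}(1+o(1))$, which already drops the factor $\sqrt{2\pi L_S(h)(1-L_S(h))/m}$. It then discards $P(L_S(h))$ to conclude $\QBayes\propto\Pi(h)\,2^{-mH(L_S(h))}$. You are right that these $h$-dependent prefactors survive normalization, so the paper never confronts your Steps (ii)--(iii).

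\textbf{The gap is that Steps (ii)--(iii) cannot be completed, because the statement as literally written is false.} Take any two hypotheses in the support of $\Pi$ with limiting training errors $a\neq b$. The ratio $\frac{d\QBayes}{d\QPL_1}(h_a)\big/\frac{d\QBayes}{d\QPL_1}(h_b)=R(h_a)/R(h_b)\to g(a)/g(b)$ does not depend on the normalization. So the two derivatives cannot both tend to $1$ unless $g(a)=g(b)$. Concretely, take $P\equiv 1$ and two hypotheses with $0<L_\source(h_0)=L^*<L_\source(h_1)=L'<\tfrac12$. Then $\QPL_1$ concentrates exponentially fast on $h_0$, yet $\frac{d\QBayes}{d\QPL_1}(h_1)\to\sqrt{L'(1-L')/(L^*(1-L^*))}>1$ in probability. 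At an interpolating $h$ in the support, when $\QPL_1$ sits at an interior error level, the derivative is even of order $m^{-1/2}$. Uniformity therefore fails however little posterior weight the second level carries. Your ``comparable weight'' caveat is the condition that threatens the total-variation version, not uniformity.

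\textbf{Your concentration lemma is false.} $\QPL_1$ concentrates where $\log\frac{1}{\Pi(\{h:L_S(h)\approx L\})}+mH(L)$ is smallest, not at the $\Pi$-essential infimum of $L_S(h)$. In the infinite construction of \autoref{subsection:B.1}, some $h_i$ interpolates almost surely, so that infimum is $0$. Yet $\mathbb{E}\sum_{i\ge1}\Pi(h_i)2^{-mH(L_S(h_i))}\le\frac{1-L'}{1-2L'}(1-L')^m$. Hence for $L'\in(\ell_1(L^*),\tfrac12)$, Markov's inequality gives $\QPL_1(h_0)\to 1$ in probability while $L_S(h_0)\to L^*>0$.

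\textbf{Even the total-variation form fails for some fixed priors.} Take a negation-symmetric $\Pi(h)=\Pi(\neg h)=\tfrac12$, a case the paper itself contemplates. Since $H(L_S(h))=H(1-L_S(h))$, this forces $\QPL_1=(\tfrac12,\tfrac12)$ exactly. But $\QBayes(h)/\QBayes(\neg h)\to P(L_\source(h))/P(1-L_\source(h))$, which is $\neq 1$ whenever $P$ is asymmetric there.

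\textbf{What can be proved.} The Laplace computation genuinely supports exponent-level equivalence, and this has a short exact proof. Suppose $0<p_-\le P\le p_+$. Combine $B(k+1,m-k+1)=1/\bigl((m+1)\binom{m}{k}\bigr)$ with $2^{mH(k/m)}/(m+1)\le\binom{m}{k}\le 2^{mH(k/m)}$. This gives $R(h)\in[p_-/(m+1),\,p_+]$ for every $h$. Therefore $\sup_h\bigl|\log\frac{d\QBayes}{d\QPL_1}(h)\bigr|\le\log\frac{(m+1)p_+}{p_-}=o(m)$, i.e.\ $\QBayes\propto\Pi(h)\,2^{-mH(L_S(h))(1+o(1))}$. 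This is the form in which the paper's proofs of the companion equivalence theorems actually end, and it is the right target here.
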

\begin{proof}
    Denote the fixed prior by $P(\eta)$. Then $P(h \mid S) \propto \Pi(h) \int P(\eta)(1-\eta)^{mL_S(h)}\eta^{mL_S(h)}d\eta$. By Laplace's method, $\int P(\eta)(1-\eta)^{mL_S(h)}\eta^{mL_S(h)}d\eta = \int P(\eta)\exp\{m(1-\eta)\log (1- L_S(h)) + mL_S(h)\log\eta\}d\eta = P(L_S(h))2^{-mH(L_S(h))}(1+o(1))$ as $L_S(h) = \arg \max_\eta (1-\eta)\log (1- L_S(h)) + L_S(h)\log\eta$. Hence, $\QBayes = P(h|S) \propto \Pi(h)2^{-mH(L_S(h))}$. 
\end{proof}

\end{document}